\DeclareMathOperator*{\Id}{I}
\DeclareMathOperator*{\VS}{V}
\DeclareMathOperator*{\diag}{diag}
\newcommand{\idof}{\ensuremath{\unlhd}}
\newcommand{\ls}{\ell}
\newcommand{\idS}{\ensuremath{\mathfrak{s}}}
\newcommand{\idN}{\ensuremath{\mathfrak{n}}}
\newcommand{\idM}{\ensuremath{\mathfrak{m}}}
\newcommand{\R}{\ensuremath{\mathds{R}}}		
\newcommand{\C}{\ensuremath{\mathds{C}}}		
\newcommand{\N}{\ensuremath{\mathds{N}}}		
\newcommand{\Gauss}{\ensuremath{\mathcal{N}}}	
\newcommand{\s}{\ensuremath{\mathfrak{s}}}		
\newcommand{\n}{\ensuremath{\mathfrak{n}}}		
\DeclareMathOperator*{\E}{\mathds{E}}					
\DeclareMathOperator*{\argmin}{argmin}		    
\DeclareMathOperator*{\htid}{ht}  		        
\newcommand{\KLD}{D_{\text{KL}}}				
\newcommand{\esi}{\hat{\Sigma}}					
\newcommand{\emu}{{\ensuremath{\hat{\mu}}}}	
\newcommand{\0}{\ensuremath{0}}
\newcommand{\cumu}{\kappa}
\newcommand{\ds}{\displaystyle}
\newcommand{\id}{\operatorname{id}}
\newcommand{\rk}{\operatorname{rank}}
\newcommand{\codim}{\operatorname{codim}}
\newcommand{\lspan}{\operatorname{span}}
\newcommand{\nd}{\Delta (d)}
\newcommand{\nD}{\Delta (D)}
\newcommand{\calA}{\mathcal{A}}
\newcommand{\calI}{\mathcal{I}}
\newcommand{\calM}{\mathcal{M}}
\newcommand{\calO}{\mathcal{O}}
\newcommand{\calP}{\mathcal{P}}
\newcommand{\fraks}{\mathfrak{s}}
\newcommand{\frakp}{\mathfrak{p}}
\newcommand{\frakm}{\mathfrak{m}}
\newcommand{\CC}{\mathbb{C}}
\newcommand{\NN}{\mathbb{N}}
\newcommand{\RR}{\mathbb{R}}
\newtheorem{Satz}{Theorem}[section]
\newtheorem{Thm}[Satz]{Theorem}
\newtheorem{Prop}[Satz]{Proposition}
\newtheorem{Cor}[Satz]{Corollary}
\newtheorem{Lem}[Satz]{Lemma}
\newtheorem{Prob}[Satz]{Problem}
\newtheorem{Rem}[Satz]{Remark}
\newtheorem{Def}[Satz]{Definition}
\newtheorem{Ex}[Satz]{Example}
\newtheorem{Not}[Satz]{Notation}
\newtheorem{Ass}[Satz]{Assumption}
\newcommand{\changeenum}%
{}
\newcommand{\uunder}[2]{{\ds \underline{#1} \atop
{\raise4pt\hbox{$\textstyle \underline{#2}$}}}}
\def\itboxx#1{\ifvmode\indent\fi\makebox[2em][r]{\rmn(#1)} }
\newcommand{\rmn}{\fontshape{n}\fontseries{m}\selectfont\rm}
\begin{document}

\title{Algebraic Geometric Comparison of Probability Distributions}

\author{\name Franz J.~Kir\'{a}ly \email franz.j.kiraly@tu-berlin.de \\
       \addr Machine Learning Group, Computer Science \\
        Berlin Institute of Technology (TU Berlin) \\
        Franklinstr.~28/29, 10587 Berlin, Germany\\
        and Discrete Geometry Group, Institute of Mathematics, FU Berlin \\
        \AND
       \name Paul von B\"unau \email paul.buenau@tu-berlin.de
       \AND
       \name Frank C.~Meinecke \email frank.meinecke@tu-berlin.de \\
       \addr Machine Learning Group, Computer Science \\
        Berlin Institute of Technology (TU Berlin) \\
        Franklinstr.~28/29, 10587 Berlin, Germany\\
       \AND
       \name Duncan A.~J.~Blythe \email duncan.blythe@bccn-berlin.de \\
       \addr Machine Learning Group, Computer Science \\
        Berlin Institute of Technology (TU Berlin) \\
        Franklinstr.~28/29, 10587 Berlin, Germany\\
        and Bernstein Center for Computational Neuroscience (BCCN), Berlin \\
       \AND
       \name Klaus-Robert Müller \email klaus-robert.mueller@tu-berlin.de \\
       \addr Machine Learning Group, Computer Science \\
        Berlin Institute of Technology (TU Berlin) \\
        Franklinstr.~28/29, 10587 Berlin, Germany\\
		and IPAM, UCLA, Los Angeles, USA
       }

\editor{The editor}

\maketitle

\begin{abstract}
We propose a novel algebraic algorithmic framework for dealing with probability distributions represented
by their cumulants such as the mean and covariance matrix. As an example, we consider the
unsupervised learning problem of finding the subspace on which several probability
distributions agree. Instead of minimizing an objective function involving the estimated
cumulants, we show that by treating the cumulants as elements of the polynomial ring
we can directly solve the problem, at a lower computational cost and with higher
accuracy. Moreover, the algebraic viewpoint on probability distributions allows us to
invoke the theory of algebraic geometry, which we demonstrate in a compact
proof for an identifiability criterion.
\end{abstract}

\begin{keywords}
Computational algebraic geometry, Approximate algebra, Unsupervised Learning
\end{keywords}


\section{Introduction}
\label{sec:Intro}
Comparing high dimensional probability distributions is a general
problem in machine learning, which occurs in two-sample testing
(e.g.~\cite{Hotelling31,Gretton07akernel}), projection pursuit (e.g.~\cite{Fried74}),
dimensionality reduction and feature selection (e.g.~\cite{Tor03Feature}).
Under mild assumptions, probability densities are uniquely determined by their
cumulants which are naturally interpreted as coefficients of
homogeneous multivariate polynomials. Representing probability densities
in terms of cumulants is a standard technique in learning algorithms. For example,
in Fisher Discriminant Analysis~\citep{Fisher36}, the class conditional
distributions are approximated by their first two cumulants.

In this paper, we take this viewpoint further and work explicitly with
polynomials. That is, we treat estimated cumulants not as constants in an objective
function, but as objects that we manipulate algebraically in order to find the
optimal solution. As an example, we consider the problem of finding the linear subspace on
which several probability distributions are identical: given $D$-variate random variables
$X_1, \ldots, X_m$, we want to find the linear map $P \in \R^{d \times D}$ such
that the projected random variables have the same probability distribution,
\begin{align*}
	P X_1 \sim \cdots \sim P X_m .
\end{align*}
This amounts to finding the directions on which all projected cumulants agree. For
the first cumulant, the mean, the projection is readily available as the solution of a
set of linear equations. For higher order cumulants, we need to solve polynomial equations of
higher degree. We present the first algorithm that solves this problem
explicitly for arbitrary degree, and show how algebraic geometry can be applied
to prove properties about it.

\begin{figure}[h]
	\begin{center}
		\includegraphics{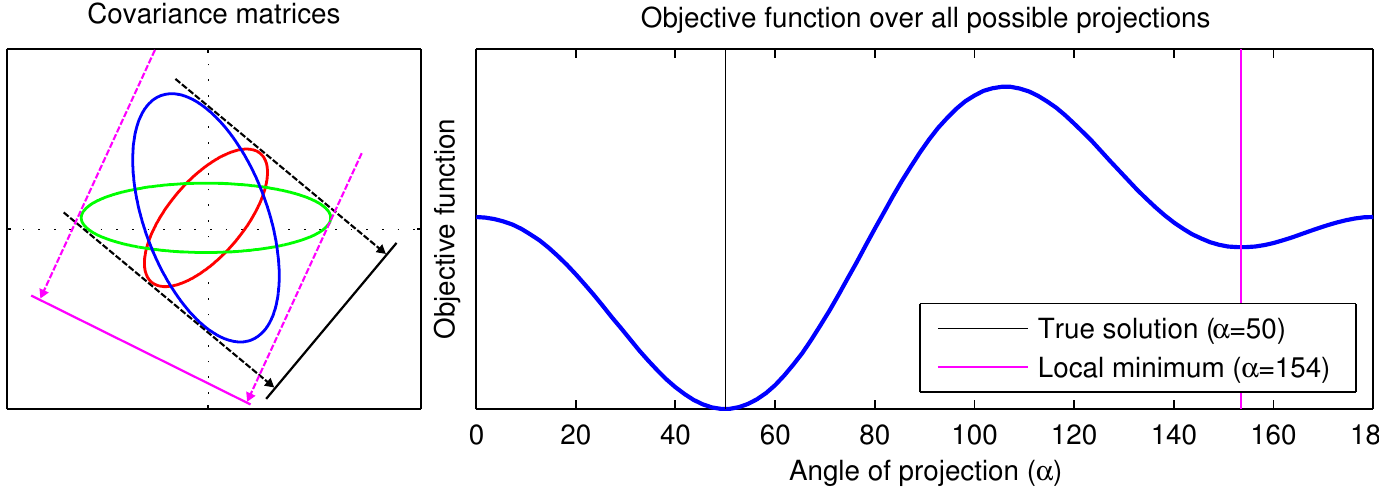}
		\caption{\label{fig:ml_optim}
			Illustration of the optimization approach. The left panel shows the contour plots
			of three sample covariance matrices. The black line is the true one-dimensional subspace
			on which the projected variances are exactly equal, the magenta line corresponds
			to a local minimum of the objective function. The right panel shows the value of
			the objective function over all possible one-dimensional subspaces, parameterized
			by the angle $\alpha$ to the horizontal axis; the angles corresponding to the global minimum
			and the local minimum are indicated by black and magenta lines respectively.
		   }
	\end{center}
\end{figure}

To clarify the gist of our approach, let us consider a stylized example. In order to solve
a learning problem, the conventional approach in machine learning is to formulate an objective function,
e.g.~the log likelihood of the data or the empirical risk.
Instead of minimizing an objective function that involves the polynomials, we consider the polynomials as
\textit{objects in their own right} and then solve the problem by algebraic manipulations.
The advantage of the algebraic approach is that it captures the inherent structure of
the problem, which is in general difficult to model in an optimization approach. In other words, the
algebraic approach actually \textit{solves} the problem, whereas optimization \textit{searches} the space of possible solutions guided by an objective function that is minimal at the desired solution, but can give poor directions outside of the neighborhood around its global minimum. Let us consider the problem where we would like to find the direction $v \in \RR^2$ on which several sample covariance matrices $\Sigma_1, \ldots, \Sigma_m \subset \R^{2 \times 2}$ are equal. The usual ansatz would be to formulate an optimization problem such as
\begin{align}
\label{eq:objfun}
	v^* = \argmin_{\| v \| = 1} \sum_{1 \le i,j \le m} \left( v^\top \Sigma_i v -  v^\top \Sigma_j v \right) ^2 .
\end{align}
This objective function measures the deviation from equality for all pairs of covariance
matrices; it is zero if and only if all projected covariances are equal and positive
otherwise. Figure~\ref{fig:ml_optim} shows an example with three covariance matrices (left panel) and
the value of the objective function for all possible projections
$v = \begin{bmatrix} \cos(\alpha) & \sin(\alpha) \end{bmatrix}^\top$.
The solution to this non-convex optimization problem can be found using a gradient-based search
procedure, which may terminate in one of the local minima (e.g.~the magenta line in Figure~\ref{fig:ml_optim})
depending on the initialization.

However, the natural representation of this problem is not in terms of an objective function, but rather a
system of equations to be solved for $v$, namely
\begin{align}
\label{eq:simple_example}
	v^\top \Sigma_1 v = \cdots = v^\top \Sigma_m v .
\end{align}
In fact, by going from an algebraic description of the set of solutions to a formulation as an
optimization problem in Equation~\ref{eq:objfun}, we lose important structure. In the case where
there is an exact solution, it can be attained explicitly with algebraic
manipulations. However, when we estimate a covariance matrix from finite or
noisy samples, there exists no exact solution in general. Therefore we present an algorithm
which combines the statistical treatment of uncertainty in the coefficients of polynomials with the
exactness of algebraic computations to obtain a consistent estimator for $v$ that is
computationally efficient.

Note that this approach is not limited to this particular learning task. In fact, it is applicable
whenever a set of solutions can be described in terms of a
set of polynomial equations, which is a rather general setting. For example, we could use a similar
strategy to find a subspace on which the projected probability distribution has another property
that can be described in terms of cumulants, e.g.~independence between variables. Moreover, an algebraic
approach may  also be useful in solving certain optimization problems, as the set of extrema of a
polynomial objective function can be described by the vanishing set of its gradient. The algebraic viewpoint also allows a novel interpretation of algorithms operating in the feature space associated with the polynomial kernel. We would therefore argue that methods
from computational algebra and algebraic geometry are useful for the wider machine learning
community.

\begin{figure}[h]
  \begin{center}
    \includegraphics{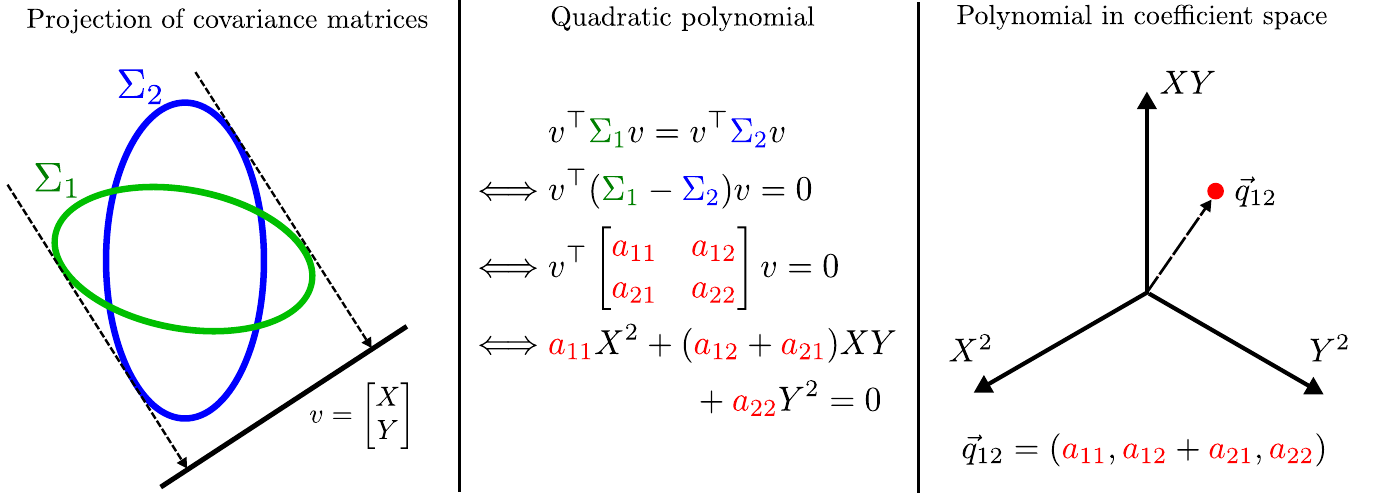}
 	 \caption{
	 	\label{fig:alg_setting}
			Representation of the problem: the left panel shows sample covariance
			matrices $\Sigma_1$ and $\Sigma_2$ with the desired projection $v$.
			In the middle panel, this projection is defined as
			the solution to a quadratic polynomial. This polynomial is
			embedded in the vector space of coefficients spanned by the monomials
			$X^2, Y^2$ and $X Y$ shown in the right panel. 	
			       }
  \end{center}
\end{figure}

Let us first of all explain the representation over which we compute. We will proceed in the three
steps illustrated in Figure~\ref{fig:alg_setting}, from the geometric interpretation of sample
covariance matrices in data space (left panel), to the quadratic equation defining the projection $v$
(middle panel), to the representation of the quadratic equation as a coefficient vector (right panel).
To start with, we consider the Equation~\ref{eq:simple_example} as a set of homogeneous
quadratic equations defined by
\begin{align}
\label{eq:homo_simple_example}
	v^\top ( \Sigma_i - \Sigma_j ) v = 0   \; \; \forall \, 1 \le i,j \le m,
\end{align}
where we interpret the components of $v$ as variables, $v = \begin{bmatrix} X & Y \end{bmatrix}^\top$.
The solution to these equations is the direction in $\R^2$ on which the projected variance is
equal over all covariance matrices. Each of these equations corresponds to a quadratic
polynomial in the variables $X$ and $Y$,
\begin{align}
\label{eq:polyintro}
	q_{ij} & = v^\top (\Sigma_i - \Sigma_j) v \nonumber \\
		   & = v^\top \begin{bmatrix}
		   				a_{11} & a_{12} \\ a_{21} & a_{22}
		   		      \end{bmatrix} v \nonumber \\
		   & = a_{11} X^2 + ( a_{12} + a_{21}) X Y + a_{22} Y^2,
\end{align}
which we embed into the vector space of coefficients. The coordinate axis are the monomials
$\{ X^2, X Y, Y^2 \}$, i.e.~the three independent entries in the Gram matrix $(\Sigma_i - \Sigma_j)$.
That is, the polynomial in Equation~\ref{eq:polyintro} becomes the coefficient vector
\begin{align*}
	\vec{q}_{ij} = \begin{bmatrix} a_{11} & a_{12}+a_{21} & a_{22} \end{bmatrix}^\top .
\end{align*}
The motivation for the vector space interpretation is that every linear combination of the
Equations~\ref{eq:homo_simple_example} is also a characterization of the set of solutions:
this will allow us to find a particular set of equations by linear combination, from
which we can directly obtain the solution. Note, however, that the vector space
representation does not give us all equations which can be used to describe the solution: we
can also multiply with arbitrary polynomials. However, for the algorithm that we present
here, linear combinations of polynomials are sufficient.

\begin{figure}[h]
  \begin{center}
    \includegraphics[width=14cm,keepaspectratio=true]{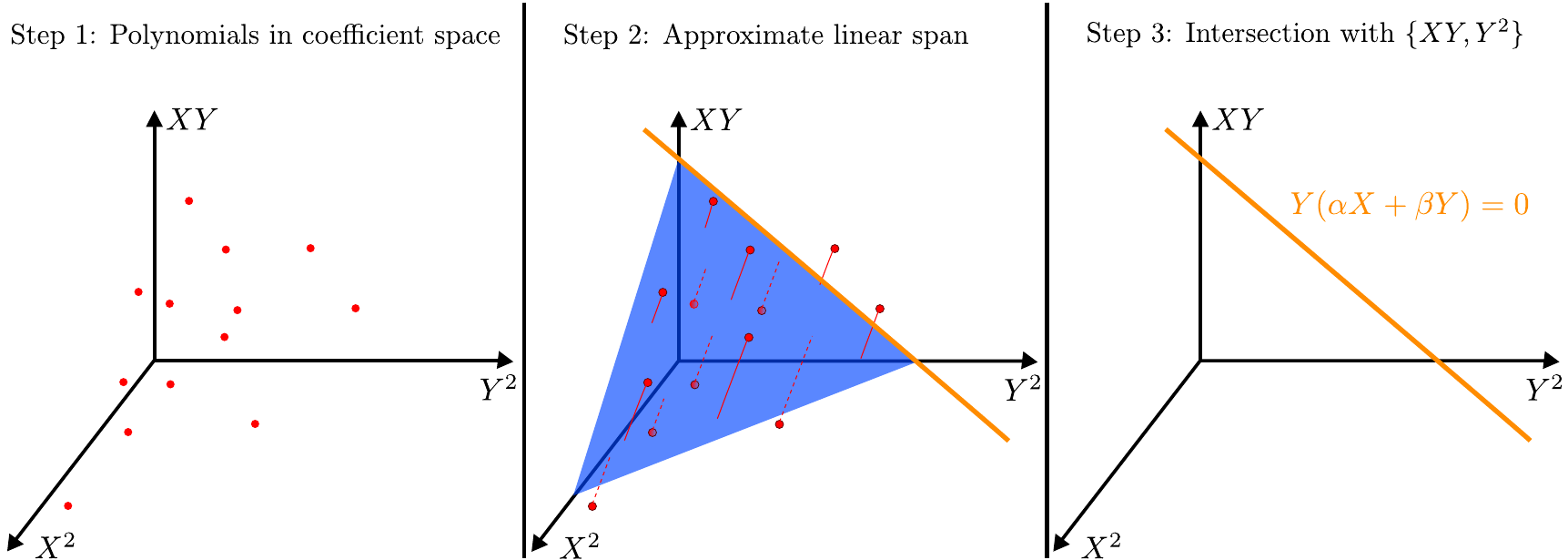}
 	 \caption{
	 	\label{fig:alg_approach}
		Illustration of the algebraic algorithm. The left panel shows the vector space
		of coefficients where the polynomials corresponding to the
		Equations~\ref{eq:homo_simple_example} are considered as elements of the vector space
		shown as red points. The middle
		panel shows the approximate $2$-dimensional subspace (blue surface) onto which we project the
		polynomials. The right panel shows the one-dimensional intersection (orange line) of the approximate
		subspace with the plane spanned by spanned by $\{X Y, Y^2\}$. This subspace is spanned by the polynomial
		$Y ( \alpha X + \beta Y )$, so we can divide by the variable $Y$.
       }
  \end{center}
\end{figure}

Figure~\ref{fig:alg_approach} illustrates how the algebraic algorithm works in the vector space of
coefficients. The polynomials
$\mathcal{Q} = \{ q_{ij} \}_{i,j=1}^n$ span a space of constraints which defines the set of solutions.
The next step is to find a polynomial of a certain form that immediately reveals the solution.
One of these sets is the linear subspace spanned by the monomials $\{X Y, Y^2\}$: any
polynomial in this span is divisible by $Y$.
Our goal is now to find a polynomial which is contained in both this subspace and the span of $\mathcal{Q}$.
Under mild assumptions,
one can always find a polynomial of this form, and it corresponds to an equation
\begin{align}
\label{eq:solution_span}
	Y ( \alpha X + \beta Y ) = 0 .
\end{align}
Since this polynomial is in the span of $\mathcal{Q},$ our solution $v$ has to be a zero of
this particular polynomial: $v_2 ( \alpha v_1 + \beta v_2) = 0$. Moreover, we can
assume\footnote{This is a consequence of the generative model for the observed polynomials which
is introduced in Section~\ref{sec:alg_prob-poly}. In essence, we use the fact that
our polynomials have no special property (apart from the existence of a solution) with probability
one.
}
that $v_2 \neq 0$, so that we can divide out the variable $Y$  to get the linear factor
$( \alpha X + \beta Y )$,
\begin{align*}
	0 = \alpha X + \beta Y = \begin{bmatrix} \alpha & \beta \end{bmatrix} v .
\end{align*}
Hence $v = \begin{bmatrix} -\beta & \alpha  \end{bmatrix}^\top$ is the solution up to arbitrary
scaling, which corresponds to the one-dimensional subspace in Figure~\ref{fig:alg_approach} (orange line, right panel).
A more detailed treatment of this example can also be found in Appendix~\ref{app-example}.

In the case where there exists a direction $v$ on which the projected covariances are
exactly equal, the linear subspace spanned by the set of polynomials $\mathcal{Q}$ has
dimension two, which corresponds to the degrees of freedom of possible covariance matrices
that have fixed projection on one direction. However, since in practice covariance
matrices are estimated from finite and noisy samples, the polynomials $\mathcal{Q}$
usually span the whole space, which means that there exists only a trivial solution $v = 0$.
This is the case for the polynomials pictured in the left panel of Figure~\ref{fig:alg_approach}.
Thus, in order to obtain an approximate solution, we first determine the
approximate two-dimensional span of $\mathcal{Q}$ using a standard least squares method as
illustrated in the middle panel. We can then find the intersection of the approximate
two-dimensional span of $\mathcal{Q}$ with the plane spanned by the monomials $\{X Y, Y^2\}$.
As we have seen in Equation~\ref{eq:solution_span}, the polynomials in this span provide us with
a unique solution for $v$ up to scaling, corresponding to the fact that the intersection has
dimension one (see the right panel of Figure~\ref{fig:alg_approach}). Alternatively, we could
have found the one-dimensional intersection with the span of $\{ X Y, X^2 \}$ and
divided out the variable $X$. In fact, in the final algorithm we will find all
such intersections and combine the solutions in order to increase the accuracy.
Note that we have found this solution by solving a simple least-squares problem (second step, middle
panel of Figure~\ref{fig:alg_approach}). In contrast, the optimization approach
(Figure~\ref{fig:ml_optim}) can require a large number of iterations and may converge
to a local minimum. A more detailed example of the algebraic algorithm can be found
in Appendix~\ref{app-example}.

\begin{figure}[h]
	\begin{center}
    		\includegraphics[width=4cm]{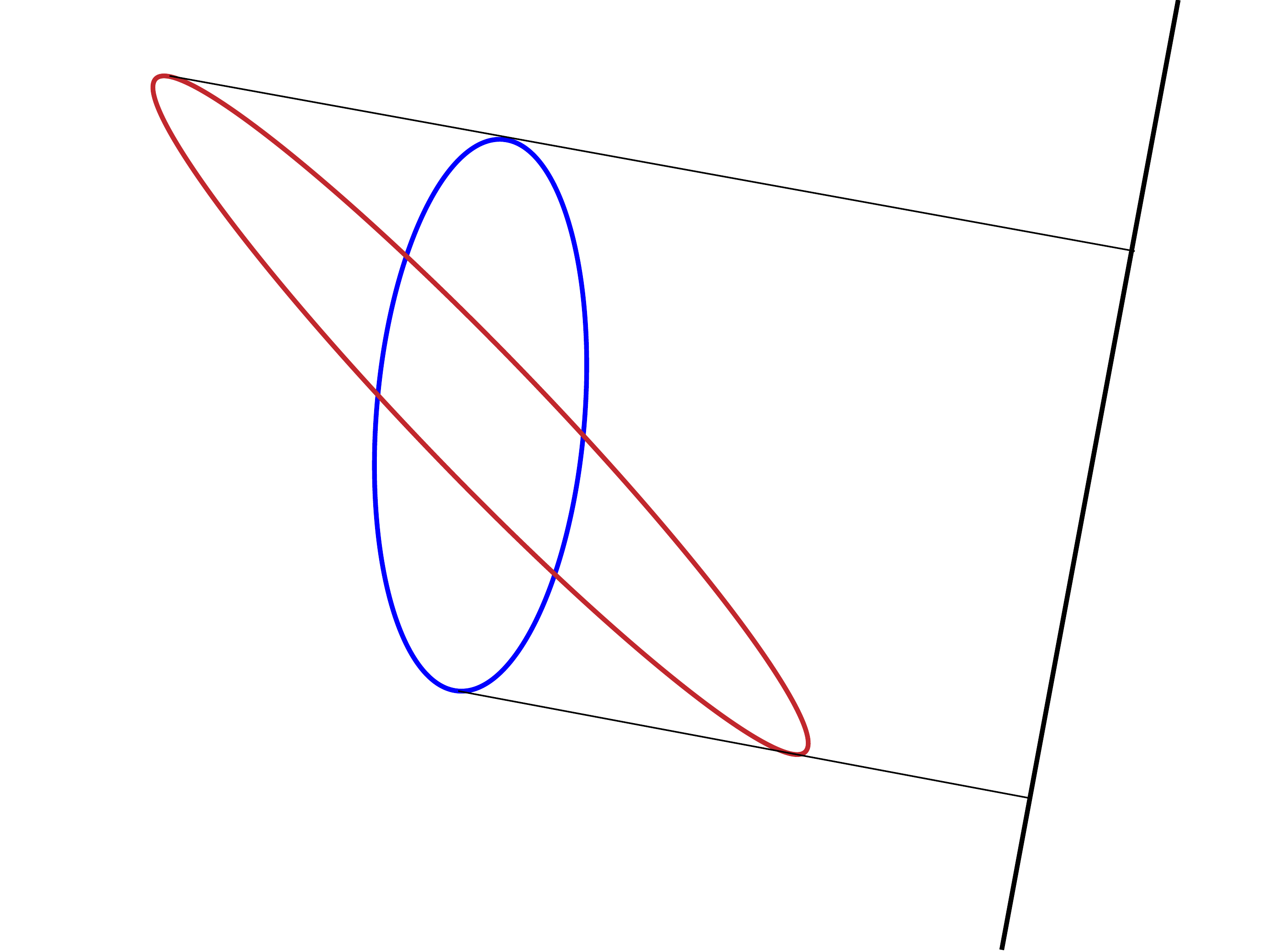}	
    		\includegraphics[width=4cm]{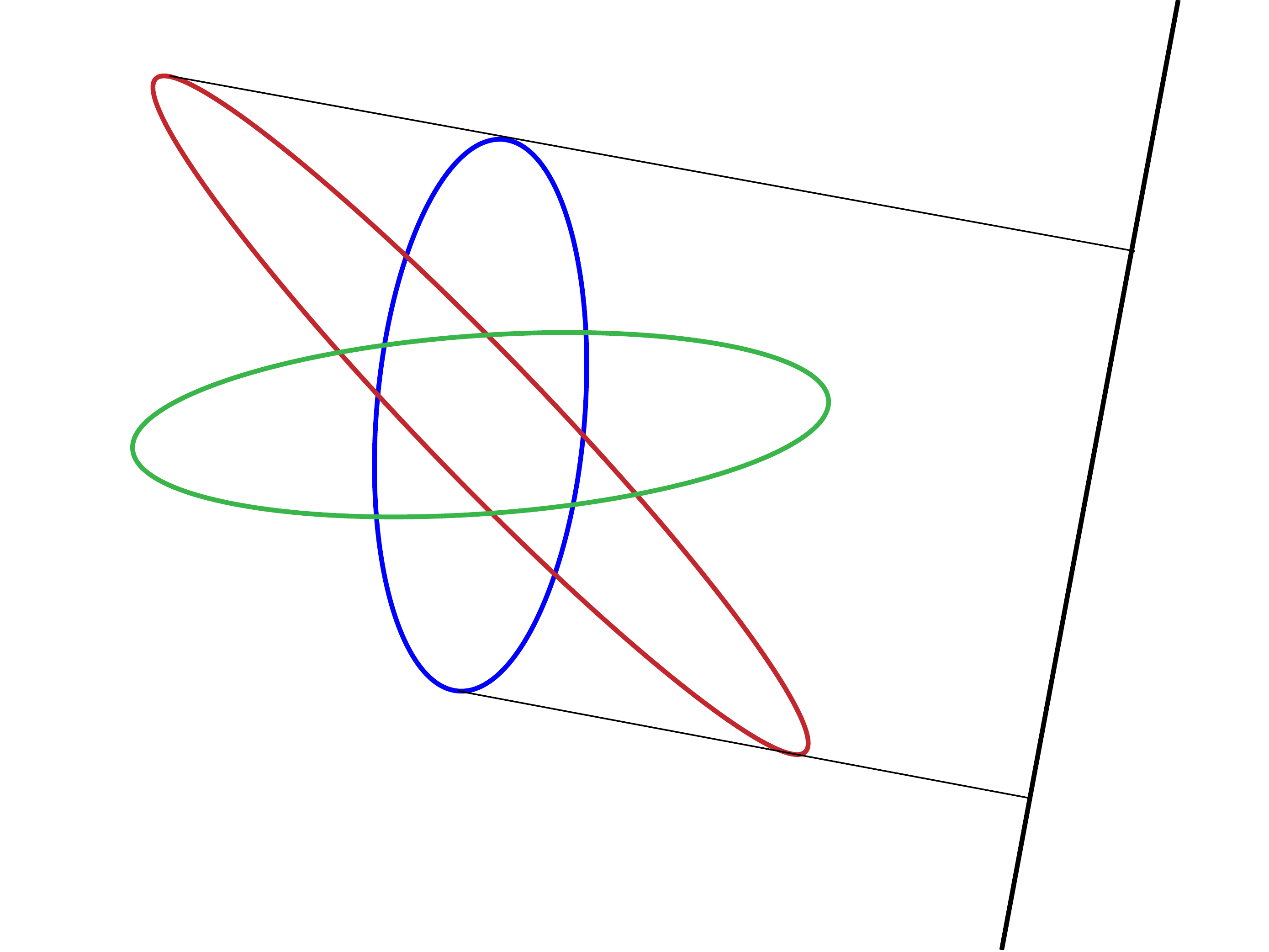}
		
		\caption{\label{fig:spurious_example}
		The left panel shows two sample covariance matrices in the plane, along with
		a direction on which they are equal. In the right panel, a third (green) covariance
		matrix does not have the same projected variance on the black direction.
		}
		
	\end{center}
\end{figure}

The algebraic framework does not only allow us to construct efficient algorithms for
working with probability distributions, it also offers powerful tools to prove properties
of algorithms that operate with cumulants. For example, we can answer the
following central question: how many distinct data sets do we need such that the subspace
with identical probability distributions becomes uniquely identifiable? This depends on
the number of dimensions and the cumulants that we consider.
Figure~\ref{fig:spurious_example} illustrates the case where we are given
only the second order moment in two dimensions. Unless $\Sigma_1 - \Sigma_2$ is
indefinite, there \textit{always} exists a direction
on which two covariance matrices in two dimensions are equal
(left panel of Figure~\ref{fig:spurious_example}) ---
irrespective of whether the probability distributions are actually equal. We therefore
need at least three covariance matrices (see right panel), or to consider other cumulants as well.
We derive a tight criterion on the necessary number of data sets depending on the
dimension and the cumulants under consideration. The proof hinges on viewing the
cumulants as polynomials in the algebraic geometry framework: the polynomials that
define the sought-after projection (e.g.~Equations~\ref{eq:homo_simple_example}) generate
an ideal in the polynomial ring which corresponds to an algebraic set that contains all
possible solutions. We can then show how many independent polynomials are necessary so
that the dimension of the linear part of the algebraic set has smaller dimension in
the generic case. We conjecture that these proof techniques are also applicable to other
scenarios where we aim to identify a property of a probability distribution from
its cumulants using algebraic methods.

Our work is not the first that applies geometric or algebraic methods to Machine
Learning or statistics: for example, methods from group theory have already found their application in machine learning, e.g.~\cite{Ris07Skew,RisBor08Skew}; there are also algebraic methods estimating structured manifold models for data points as in \cite{GPCA05} which are strongly related to polynomial kernel PCA --- a method which can itself be interpreted as a way of finding an approximate vanishing set.

The field of Information Geometry interprets parameter spaces of
probability distributions as differentiable manifolds and studies them from an information-theoretical
point of view (see for example the standard book by \cite{Ama00}), with recent interpretations and
improvements stemming from the field of algebraic geometry by \cite{Wat09}. There is also the nascent field of algebraic statistics which studies the parameter spaces of mainly discrete random variables in terms of commutative algebra and
algebraic geometry, see the recent overviews by \cite[chapter 8]{Stu02} and \cite{Stu10} or the book by
\cite{Gib10} which also focuses on the interplay between information geometry and algebraic statistics.
These approaches have in common that the algebraic and geometric concepts arise naturally
when considering distributions in parameter space.

Given samples from a probability distribution, we may also consider algebraic
structures in the data space. Since the data are uncertain, the algebraic
objects will also come with an inherent uncertainty, unlike the exact
manifolds in the case when we have an a-priori family of probability
distributions. Coping with uncertainties is one of the main interests
of the emerging fields of approximative and numerical commutative
algebra, see the book by \cite{Ste04} for an overview on numerical
methods in algebra, and \citep{KrePouRob09} for recent developments in
approximate techniques on noisy data. There exists a wide range of
methods; however, to our knowledge, the link between approximate algebra
and the representation of probability distributions
in terms of their cumulants has not been studied yet.


The remainder of this paper is organized as follows:
in the next Section~\ref{sec:alg_prob}, we introduce the algebraic view of probability
distribution, rephrase our problem in terms of this framework and investigate its
identifiability. The algorithm for the exact case is presented
in Section~\ref{sec:exact}, followed by the approximate version in
Section~\ref{sec:approx}. The results of our numerical simulations and a
comparison against Stationary Subspace Analysis (SSA)~\citep{PRL:SSA:2009}
can be found in Section~\ref{sec:sims}. In the last Section~\ref{sec:concl},
we discuss our findings and point to future directions. The appendix
contains an example and proof details.

\section{The Algebraic View on Probability Distributions}
\label{sec:alg_prob}

In this section we introduce the algebraic framework for dealing with probability distributions. This requires basic concepts from complex algebraic geometry. A comprehensive introduction to algebraic geometry with a view to computation
can be found in the book~\citep{Cox}. In particular, we recommend to go
through the Chapters~1 and 4.

In this section, we demonstrate the algebraic viewpoint of probability distributions on the
application that we study in this paper: finding the linear subspace on which probability
distributions are equal.
\begin{Prob}\label{Prob:orig}
Let $X_1, \ldots, X_{m}$ be a set of $D$-variate random variables, having smooth densities. Find all linear maps
$P\in \R^{d\times D}$ such that the transformed random variables have the same distribution,
$$P X_1 \sim \cdots \sim P X_m .$$
\end{Prob}
In the first part of this section, we show how this problem can be
formulated algebraically. We will first of all review the relationship between the probability density
function and its cumulants, before we translate the cumulants
into algebraic objects. Then we introduce the theoretical underpinnings for the statistical
treatment of polynomials arising from estimated cumulants and prove conditions on identifiability
for the problem addressed in this paper.

\subsection{From Probability Distributions to Polynomials}
\label{sec:alg_prob-poly}
The probability distribution of every smooth real random variable $X$ can be fully characterized in terms
of its \textit{cumulants}, which are the tensor coefficients of the cumulant generating function.
This representation has the advantage that each cumulant provides a compact description of
certain aspects of the probability density function.
\begin{Def}
Let $X$ be a $D$-variate random variable. Then by $\cumu_n(X)\in \R^{D^{(\times n)}}$ we denote the
$n$-th cumulant, which is a real tensor of degree $n$.
\end{Def}
Let us introduce a useful shorthand notation for linearly transforming tensors.
\begin{Def}
Let $A\in \CC^{d\times D}$ be a matrix. For a tensor $T\in \R^{D^{(\times n)}}$ (i.e.~a real tensor $T$ of degree $n$ of dimension $D^n=D\cdot D\cdot \ldots\cdot D$) we will denote by $A\circ T$ the application of $A$ to $T$ along all tensor dimensions, i.e.
$$\left(A\circ T\right)_{i_1\dots i_n}=\sum_{j_1=1}^D\dots \sum_{j_n=1}^D A_{i_{1}j_{1}}\cdot\ldots\cdot A_{i_{n}j_{n}}T_{j_1\dots j_n}.$$
\end{Def}
The cumulants of a linearly transformed random variable are the multilinearly transformed
cumulants, which is a convenient property when one is looking for a certain linear subspace.
\begin{Prop}
Let $X$ be a real $D$-dimensional random variable and let $A \in \R^{d \times D}$ be a
matrix. Then the cumulants of the transformed random variable $A X$ are the transformed
cumulants,
\begin{align*}
	\cumu_n(A X) = A  \circ \cumu_n(X).
\end{align*}
\end{Prop}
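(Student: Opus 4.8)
The plan is to exploit the fact that the cumulants of $X$ are, up to universal constants, the Taylor coefficients at the origin of the cumulant generating function $g_X(t) := \log \mathbb{E}\bigl[\exp(i\, t^{\top} X)\bigr]$ --- the logarithm of the characteristic function, which is well defined since $X$ has a density --- and that this generating function transforms in an elementary way under linear maps.

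First I would record the expansion
\[
 g_X(t) \;=\; \sum_{n\ge 1}\frac{i^{n}}{n!}\,\bigl\langle \cumu_n(X),\, t^{\otimes n}\bigr\rangle ,
 \qquad
 \bigl\langle T,\, t^{\otimes n}\bigr\rangle := \sum_{j_1,\dots,j_n=1}^{D} T_{j_1\dots j_n}\, t_{j_1}\cdots t_{j_n},
\]
so that $\cumu_n(X)$ is the \emph{symmetric} tensor of scaled mixed partials $\partial_{t_{j_1}}\!\cdots\partial_{t_{j_n}} g_X|_{t=0}$, together with the analogous expansion for $AX$ in the variable $t\in\R^{d}$.

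The key step is the transformation law for the generating function: for $A\in\R^{d\times D}$ and $t\in\R^{d}$ one has, straight from the definition,
\[
 g_{AX}(t) \;=\; \log \mathbb{E}\bigl[\exp(i\, t^{\top}AX)\bigr] \;=\; \log \mathbb{E}\bigl[\exp\bigl(i\, (A^{\top}t)^{\top}X\bigr)\bigr] \;=\; g_X(A^{\top}t).
\]
Substituting $t\mapsto A^{\top}t$ into the expansion of $g_X$ and re-expanding $\langle \cumu_n(X),(A^{\top}t)^{\otimes n}\rangle$ in the entries of $t$, the coefficient tensor that appears is precisely $A\circ\cumu_n(X)$ by the very definition of the operation $A\circ(\cdot)$ --- this is the one place where one must keep careful track of the adjoint, so that $A$ and not $A^{\top}$ shows up. Comparing, term by term in $n$, with the defining expansion $g_{AX}(t)=\sum_{n\ge 1}\frac{i^n}{n!}\langle \cumu_n(AX),t^{\otimes n}\rangle$ and invoking uniqueness of Taylor coefficients --- for symmetric tensors, which both sides are --- gives $\cumu_n(AX)=A\circ\cumu_n(X)$.

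There is no real obstacle here; the statement is essentially bookkeeping. The two points deserving a little care are tracking the transpose in the identification of the coefficient tensor, and performing the comparison of coefficients at the level of symmetric tensors. If one prefers to sidestep the analytic question of where $g_X$ converges, the identical manipulation is valid treating $g_X$ as a formal power series, which is anyway how cumulants are defined; alternatively one may differentiate $g_X(A^{\top}t)$ $n$ times by the chain rule --- trivial, since $t\mapsto A^{\top}t$ is linear --- and evaluate at $t=0$, obtaining the same index identity.
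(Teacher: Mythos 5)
Your proof is correct. The paper itself does not give a proof of this proposition --- it is stated as a known fact about cumulants and the text immediately moves on --- so there is nothing to compare against, but your argument via the cumulant generating function is precisely the standard derivation one would expect. The two places you flag for care are exactly the right ones: the transpose appearing in $g_{AX}(t)=g_X(A^{\top}t)$ must be unwound correctly so that the index contraction reproduces the paper's definition of $A\circ(\cdot)$ (with $A_{i_k j_k}$ rather than $A_{j_k i_k}$), and the comparison of Taylor coefficients is legitimate because both sides are symmetric tensors. Your fallback observation that the manipulation can be read as a formal-power-series identity, sidestepping convergence, is also the right way to make the statement degree-agnostic, matching the paper's intent of allowing cumulants of arbitrary order.
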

We now want to formulate our problem in terms of cumulants. First of all, note that
$P X_i \sim P X_j$ if and only if $v X_i\sim v X_j$ for all row vectors $v\in \lspan P^\top.$
\begin{Prob}
Find all $d$-dimensional linear subspaces in the set of vectors
\begin{align*}
	S & = \{ v \in \R^D \; \left| \; v^\top X_1 \sim \cdots \sim v^\top X_m \} \right. \\
	  & = \{v\in \R^D \; \left| \; v^\top \circ \cumu_n (X_i)=v^\top \circ \cumu_n (X_j),\; n\in\N, 1 \le i,j\le m\} \right.  .
\end{align*}
\end{Prob}
Note that we are looking for linear subspaces in $S$, but that $S$ itself is not a
vector space in general. Apart from the fact that $S$ is homogeneous, i.e.~$\lambda S = S$ for all
$\lambda \in \R$, there is no additional structure that we utilize.

For the sake of clarity, in
the remainder of this paper we restrict ourselves to the first two cumulants. Note, however, that one
of the strengths of the algebraic framework is that the generalization to arbitrary degree is
straightforward; throughout this paper, we indicate the necessary changes and differences. Thus, from now on, we denote the first two cumulants by
$\mu_i=\cumu_1(X_i)$ and $\Sigma_i=\cumu_2(X_i)$ respectively for all $1 \leq i \leq m$. Moreover,
without loss of generality, we can shift the mean vectors and choose a basis such that the random
variable $X_m$ has zero mean and unit covariance. Thus we arrive at the following formulation.
\begin{Prob}
Find all $d$-dimensional linear subspaces in
$$S=\{v\in \R^D \; | \; v^\top (\Sigma_i-I)v=0,\; v^\top \mu_i=0 , \; 1 \le i \le (m-1)\}.$$
\end{Prob}
Note that $S$ is the set of solutions to $m-1$ quadratic and $m-1$ linear equations in $D$ variables. Now it
is only a formal step to arrive in the framework of algebraic geometry: let us think of the left hand side
of each of the quadratic and linear equations as polynomials $q_1, \ldots, q_{m-1}$ and $f_1, \ldots, f_{m-1}$
in the variables $T_1, \ldots, T_D$ respectively,
\begin{align*}
	q_i = \begin{bmatrix} T_1  \cdots  T_D \end{bmatrix} \circ ( \Sigma_i - I ) 
		\hspace{0.3cm} \text{ and } \hspace{0.3cm}  f_i = \begin{bmatrix} T_1 \cdots T_D \end{bmatrix} \circ \mu_i,
\end{align*}
which are elements of the polynomial ring over the complex numbers in $D$ variables, $\C[T_1,\dots, T_D]$. Note that
in the introduction we have used $X$ and $Y$ to denote the variables in the polynomials, we will now
switch to $T_1, \ldots, T_D$ in order to avoid confusion with random variables.
Thus $S$ can be rewritten in terms of polynomials,
\begin{align*}
	S  = \left\{ v \in \R^D \; \left| \; q_i(v) = f_i(v) = 0 \; \forall \, 1 \leq i \leq m-1 \right\} \right.,
\end{align*}
which means that $S$ is an algebraic set. In the following, we will consider the corresponding complex vanishing set
\begin{align*}
    S &=\VS(q_1, \ldots, q_{m-1}, f_1,\dots, f_{m-1})\\
    &:= \left\{ v \in \C^D \; \left| \; q_i(v) = f_i(v) = 0 \; \forall \, 1 \leq i \leq m-1 \right\} \subseteq \C^D \right.
\end{align*}
and keep in mind that eventually we will be interested in the real part of $S$.
Working over the complex numbers simplifies the theory and creates no algorithmic
difficulties: when we start with real cumulant polynomials, the solution will always
be real. Finally, we can translate our problem into the language of algebraic geometry.
\begin{Prob}\label{Prob:Alg}
Find all $d$-dimensional linear subspaces in the algebraic set
$$S = \VS(q_1, \ldots, q_{m-1}, f_1,\dots, f_{m-1}).$$
\end{Prob}
So far, this problem formulation does not include the assumption that a solution exists. In order to prove
properties about the problem and algorithms for solving it we need to assume that there exist
a $d$-dimensional linear subspace $S' \subset S$. That is, we need to formulate a \textit{generative model} for
our observed polynomials $q_1, \ldots, q_{m-1}, f_1,\dots, f_{m-1}$. To that end, we
introduce the concept of a \textit{generic} polynomial, for a technical definition see Appendix~B. Intuitively, a generic polynomial is a continuous, polynomial valued random variable which almost surely has no algebraic properties except for those that are logically implied by the conditions on it. An algebraic property is an event in the probability space of polynomials which is defined
by the common vanishing of a set of polynomial equations in the coefficients. For example, the property
that a quadratic polynomial is a square of linear polynomial is an algebraic property, since it
is described by the vanishing of the discriminants. In the context of Problem~\ref{Prob:Alg}, we will
consider the observed polynomials as generic conditioned on the algebraic property that they vanish
on a fixed $d$-dimensional linear subspace $S'$.

One way to obtain generic polynomials is to replace coefficients with e.g.~Gaussian random variables.
For example, a generic homogeneous quadric $q \in \C[T_1, T_2]$ is given by
\begin{align*}
	q = Z_{11} T^2_1 + Z_{12} T_1 T_2 + Z_{22} T^2_2,
\end{align*}
where the coefficients $Z_{ij} \sim \Gauss(\mu_{ij}, \sigma_{ij})$ are independent Gaussian random variables with
arbitrary parameters. Apart from being homogeneous, there is no condition on $q$. If we want to add the
condition that $q$ vanishes on the linear space defined by $T_1 = 0$, we would instead consider
\begin{align*}
	q = Z_{11} T^2_1 + Z_{12} T_1 T_2 .
\end{align*}
A more detailed treatment of the concept of genericity, how it is linked to probabilistic sampling, and a comparison with the classical definitions of genericity can be found in Appendix~\ref{sec:gendef}.

We are now ready to reformulate the genericity conditions on the random variables $X_1,\dots, X_m$ in the above framework. Namely, we have assumed that the $X_i$ are general under the condition that they agree in the first two cumulants when projected onto some linear subspace $S'$. Rephrased for the cumulants, Problems~\ref{Prob:orig} and \ref{Prob:Alg} become well-posed and can be formulated as follows.
\begin{Prob}\label{Prob:Alg_gen}
Let $S'$ be an unknown $d$-dimensional linear subspace in $\C^D$. Assume that $f_1,\dots, f_{m-1}$ are generic homogenous linear polynomials, and $q_1,\dots, q_{m-1}$ are generic homogenous quadratic polynomials, all vanishing on $S'.$
Find all $d$-dimensional linear subspaces in the algebraic set
$$S=\VS(q_1, \ldots, q_{m-1}, f_1,\dots, f_{m-1}).$$
\end{Prob}
As we have defined ``generic'' as an implicit ``almost sure'' statement, we are in fact looking for an algorithm which gives the correct answer with probability one under our model assumptions. Intuitively, $S'$ should be also the only
$d$-dimensional linear subspace in $S$, which is not immediately guaranteed from the problem description. Indeed this
is true if $m$ is large enough, which is the topic of the next section.

\subsection{Identifiability}
\label{sec:alg_prob-ident}
In the last subsection, we have seen how to reformulate our initial Problem~\ref{Prob:orig} about comparison of cumulants as the completely algebraic Problem~\ref{Prob:Alg_gen}. We can also reformulate identifiability of the true solution in the original problem in an algebraic way: identifiability in Problem~\ref{Prob:orig} means that the projection $P$ can be uniquely computed from the probability distributions. Following the same reasoning we used to arrive at the algebraic formulation in Problem~\ref{Prob:Alg_gen}, one concludes that identifiability is equivalent to the fact that there exists a unique linear subspace in $S$.

Since identifiability is now a completely algebraic statement, it can be treated also in algebraic terms. In Appendix~\ref{app-generic}, we give an algebraic geometric criterion for identifiability of the stationary subspace; we will sketch its derivation in the following.

The main ingredient is the fact that, intuitively spoken, every generic polynomials carries one degree of freedom in terms of dimension, as for example the following result on generic vector spaces shows:

\begin{Prop}\label{Prop:GenVec-main}
Let $\calP$ be an algebraic property such that the polynomials with property $\mathcal{P}$ form a vector space $V$. Let $f_1,\dots, f_n\in \C[T_1,\dots T_D]$ be generic polynomials satisfying $\mathcal{P}$. Then
$$\rk \lspan (f_1,\dots, f_n)=\min (n, \dim V).$$
\end{Prop}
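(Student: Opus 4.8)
The plan is to establish the two inequalities
$\rk\lspan(f_1,\dots,f_n)\le\min(n,\dim V)$ and
$\rk\lspan(f_1,\dots,f_n)\ge\min(n,\dim V)$ separately. The upper bound is immediate and requires no genericity: a span of $n$ vectors has dimension at most $n$, and since every $f_i$ satisfies $\mathcal{P}$ and hence lies in $V$, the span is contained in $V$ and so has dimension at most $\dim V$. All the content is in the lower bound.

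For the lower bound I would first reduce to the case $n\le\dim V$. If $n>\dim V=:N$, it suffices to show that $f_1,\dots,f_N$ already span $V$, since appending the remaining $f_i\in V$ cannot shrink the span; so from now on assume $n\le N$ and the goal is to show that $f_1,\dots,f_n$ are linearly independent with probability one. Next I would observe that linear dependence of the tuple $(f_1,\dots,f_n)$ is an algebraic property in the sense of Appendix~B: fixing a basis $b_1,\dots,b_N$ of $V$ and writing $f_i=\sum_{j=1}^N c_{ij}b_j$, the coordinates $c_{ij}$ are affine-linear functions of the coefficients of the $f_i$, and $f_1,\dots,f_n$ are linearly dependent precisely when all $n\times n$ minors of the $n\times N$ matrix $(c_{ij})$ vanish. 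This is a finite system of polynomial equations in the coefficients, so the locus $B\subseteq V^n$ of linearly dependent tuples is Zariski-closed in $V^n$.

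The key step — and the one I expect to need the most care — is that $B$ is a \emph{proper} closed subset of $V^n$, which is exactly where the hypothesis $n\le\dim V$ is used: the tuple $(b_1,\dots,b_n)\in V^n$ consists of linearly independent vectors and therefore lies in $V^n\setminus B$. Phrased in the language of the excerpt, linear independence of $n$ vectors is an event consistent with the property $\mathcal{P}$ of all of them lying in $V$, hence linear dependence is a non-trivial algebraic condition on tuples satisfying $\mathcal{P}$, not one logically implied by $\mathcal{P}$. Since $f_1,\dots,f_n$ are generic polynomials satisfying $\mathcal{P}$, the tuple $(f_1,\dots,f_n)$ is a generic point of $V^n$ (joint genericity of independently generic elements), and by the definition of genericity it avoids the proper Zariski-closed subset $B$ with probability one. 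Therefore $f_1,\dots,f_n$ are almost surely linearly independent, which gives $\rk\lspan(f_1,\dots,f_n)=n=\min(n,\dim V)$ in the reduced case; combined with the reduction above this yields $\rk\lspan(f_1,\dots,f_n)=\min(n,\dim V)$ in general.

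The only subtlety I anticipate beyond this is bookkeeping about the ambient space: one works inside the finite-dimensional affine space of polynomials of the relevant degree(s), with $V$ a linear subspace and $V^n$ a linear subspace of the $n$-fold product, so that ``Zariski-closed in $V^n$'' and ``cut out by the vanishing of polynomials in the coefficients'' agree, and the genericity statement of Appendix~B applies verbatim. No further estimates are needed.
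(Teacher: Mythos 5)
Your proof is correct and reaches the same conclusion, but it is organized differently from the paper's argument. The paper's proof (Proposition~\ref{Prop:GenVec} in Appendix~B) is sequential: it shows by induction on $i\le\dim V$ that $f_i$ is linearly independent from the already-fixed $f_1,\dots,f_{i-1}$ with probability one, because ``$f_i\in\lspan(f_1,\dots,f_{i-1})$'' is an algebraic event in the coefficient space of the \emph{single} polynomial $f_i$, corresponding to a proper linear subspace of $V$, and Definition~\ref{Def:genrand} (applied to $f_i$ alone) gives it probability zero. You instead argue globally: you identify the locus $B$ of linearly dependent $n$-tuples inside $V^n$ as the vanishing of the $n\times n$ minors of the coordinate matrix, check that $B\subsetneq V^n$ when $n\le\dim V$ by exhibiting a basis tuple, and then invoke genericity of the tuple $(f_1,\dots,f_n)$ to conclude it avoids $B$.

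The two routes trade off as follows. Your global version is conceptually cleaner in that it names the degeneracy locus $B$ explicitly and makes the role of the hypothesis $n\le\dim V$ (namely, that $B$ is proper) transparent; the paper's inductive version distributes that content across $\dim V$ small steps. On the other hand, your argument leans on ``joint genericity of independently generic elements,'' i.e.\ that $(f_1,\dots,f_n)$ is a generic point of $V^n$ in the sense of Definition~\ref{Def:genrand}. That definition, as written, applies to a single $\calM_k$-valued random variable, not to tuples, so this step is a (natural and correct, but unstated) extension: one has to observe that a product of independent absolutely continuous distributions with continuous positive densities is again such a distribution on the product space, so Assumption~\ref{Ass:gen} and Remark~\ref{Prop:gen=cont} carry over. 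The paper's inductive phrasing sidesteps this by conditioning on $f_1,\dots,f_{i-1}$ and appealing only to the single-polynomial definition (together with the independence of the $f_i$, which is built into the paper's definition of a generic set of polynomials). Either way the content is the same; if you keep your version, you should make the tuple-genericity step explicit rather than calling it bookkeeping.
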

\begin{proof}
This is Proposition \ref{Prop:GenVec} in the appendix.
\end{proof}

On the other hand, if the polynomials act as constraints, one can prove that each one reduces the degrees of freedom in the solution by one:

\begin{Prop}\label{Thm:genintcont}
Let $Z$ be a sub-vector space of $\C^D$. Let $f_1,\dots, f_n$ be generic homogenous polynomials in $D$ variables (of fixed, but arbitrary degree each), vanishing on $Z$. Then for their common vanishing set $\VS (f_1,\dots, f_n)=\{x\in\C^D\mid f_i(x)=0\;\forall i\}$, one can write
$$\VS (f_1,\dots, f_n) =  Z \cup U,$$
where $U$ is an algebraic set with
$$\dim U\le \max (D-n,\; 0).$$
\end{Prop}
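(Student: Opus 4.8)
The plan is to take for $U$ the union of those irreducible components of $\VS(f_1,\dots,f_n)$ that are \emph{not} contained in $Z$. Since each $f_i$ vanishes on $Z$, we have $Z\subseteq\VS(f_1,\dots,f_n)$, so the set-theoretic equality $\VS(f_1,\dots,f_n)=Z\cup U$ holds automatically, and the whole content of the statement is the dimension bound $\dim U\le\max(D-n,0)$. I would prove this by induction on $n$; the base case $n=0$ is immediate, as then the vanishing set is all of $\C^D$ and one may take $U=\C^D$, and we may assume throughout that $Z$ is a proper subspace of $\C^D$, the case $Z=\C^D$ being degenerate.

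The crucial ingredient, invoked at each step, is the following: if $W\subseteq\C^D$ is an irreducible algebraic set with $W\not\subseteq Z$, then a generic homogeneous polynomial $f$ of any prescribed degree $d\ge 1$ vanishing on $Z$ does \emph{not} vanish identically on $W$. To see this, observe that the degree-$d$ forms vanishing on $Z\cup W$ form a proper linear subspace of the vector space $V_d(Z)$ of degree-$d$ forms vanishing on $Z$: choosing $p\in W\setminus Z$ and (since $Z$ is a linear subspace and $p\notin Z$) a linear form $\ell$ with $\ell|_Z=0$ but $\ell(p)\ne 0$, the form $\ell^{\,d}$ lies in $V_d(Z)$ but does not vanish at $p\in W$. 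A generic element of $V_d(Z)$ avoids this proper subspace — and, applied to finitely many $W_1,\dots,W_k$ simultaneously, avoids the proper subset $\bigcup_j V_d(Z\cup W_j)$ — which is exactly what ``generic'' guarantees in the sense of Appendix~B.

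For the inductive step, assume $\VS(f_1,\dots,f_{n-1})=Z\cup U'$ with $U'$ the union of the components not contained in $Z$ and $\dim U'\le\max(D-n+1,0)$, and write $U'=W_1\cup\dots\cup W_k$ with each $W_j$ irreducible and $W_j\not\subseteq Z$. Because $f_n$ vanishes on $Z$,
$$\VS(f_1,\dots,f_n)=\VS(f_1,\dots,f_{n-1})\cap\VS(f_n)=Z\cup\bigl(U'\cap\VS(f_n)\bigr),$$
and every irreducible component of the left-hand side not contained in $Z$ must lie in $U'\cap\VS(f_n)$, so it suffices to bound $\dim\bigl(U'\cap\VS(f_n)\bigr)$. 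By the crucial ingredient applied to $W_1,\dots,W_k$, the generic $f_n$ vanishes on none of them; hence each $W_j\cap\VS(f_n)$ is a proper closed subset of the irreducible variety $W_j$ and therefore has dimension at most $\dim W_j-1\le\max(D-n+1,0)-1\le\max(D-n,0)$ (with the convention $\dim\emptyset=-1$). Taking the union over $j$ yields $\dim\bigl(U'\cap\VS(f_n)\bigr)\le\max(D-n,0)$, which closes the induction.

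The step I expect to require the most care is the genericity bookkeeping in the induction: the set $U'$ depends on $f_1,\dots,f_{n-1}$, so one must be allowed to reveal $f_1,\dots,f_{n-1}$ first, freeze the resulting finitely many components $W_1,\dots,W_k$, and then still treat $f_n$ as generic among homogeneous polynomials vanishing on $Z$ relative to this now-fixed data. With the coefficient-level ``almost sure'' formulation of genericity from Appendix~B — under which a tuple of generic polynomials may be realized by independent continuous coefficient vectors, so that a proper linear subspace of $V_{d_n}(Z)$ is a null set — this is legitimate; alternatively, one checks directly that the ``bad locus'' inside the product parameter space $V_{d_1}(Z)\times\dots\times V_{d_n}(Z)$ is contained in a proper algebraic subset. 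The remaining facts used — that a proper closed subset of an irreducible variety has strictly smaller dimension, and that a finite union of proper linear subspaces is negligible — are standard; see \citep{Cox}.
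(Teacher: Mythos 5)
Your proof is correct, and it takes a genuinely different and shorter route than the paper's. The paper proves this proposition as a corollary of Corollary~B.21 (Cor:KrullHt-Hom), which sits at the end of a chain: generic polynomials are non-zerodivisors (Prop.~B.12), hence generic Krull principal-ideal and height theorems via regular sequences (Thms.~B.15, B.17), then a constrained version for ideals vanishing on a prescribed algebraic set using primary decomposition and an auxiliary result that enough generic forms recover the ideal exactly (Props.~B.19, B.20), and finally a pass through dehomogenization and rehomogenization to land in the homogeneous setting. Your argument instead works directly in the homogeneous cone setting by induction on $n$, and replaces the non-zerodivisor/regular-sequence machinery by two elementary facts: (i) the vector-space witness $\ell^{\,d}\in V_d(Z)\setminus V_d(Z\cup W)$, which is available precisely because $Z$ is a \emph{linear} subspace — note that in the paper's general framework (arbitrary $V$ with generators $g_i$) this step instead requires the degree condition $\deg f_i\ge\deg g_i$, which the paper later observes is vacuous when $V$ is linear; and (ii) the easy half of dimension theory, namely that a proper closed subset of an irreducible variety has strictly smaller dimension, rather than Krull's theorem with its exact ``drop by one.'' You also correctly identify and resolve the genericity-conditioning issue: since the paper defines a generic tuple as independent generic random variables, you may reveal $f_1,\dots,f_{n-1}$, freeze the finitely many components $W_j$, and then argue about $f_n$ conditionally — the bad event $f_n\in V_{d_n}(Z\cup W_j)$ is a proper linear (hence irreducible algebraic) subspace of $V_{d_n}(Z)$ and therefore has conditional probability zero under Definition~B.3. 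One small remark: in the homogeneous setting every nonempty $W_j$ is a cone with $\dim W_j\ge 1$ (since $\{0\}\subseteq Z$), so $W_j\cap\VS(f_n)$ always contains the origin and the branch $\dim\emptyset=-1$ never actually occurs for the individual $W_j$; the case $U'=\emptyset$ handles $n>D$ outright. The trade-off: your argument is self-contained and elementary but specific to linear $Z$, whereas the paper's apparatus proves the stronger general statements (arbitrary $V$, exact codimension for $m<\codim V$, the identities $\calI=\fraks$) that are reused elsewhere in the appendix.
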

\begin{proof}
This follows from Corollary \ref{Cor:KrullHt-Hom} in the appendix.
\end{proof}
Proposition \ref{Thm:genintcont} can now be directly applied to Problem~\ref{Prob:Alg_gen}. It implies that $S=S'$ if $2(m-1)\ge D+1$, and that $S'$ is the maximal dimensional component of $S$ if $2(m-1) \ge D-d+1$. That is, if we start with $m$ random variables, then $S'$ can be identified uniquely if
$$2(m-1) \ge D-d+1$$
with classical algorithms from computational algebraic geometry in the noiseless case.
\begin{Thm}\label{Thm:ident}
Let $X_1,\dots, X_m$ be random variables. Assume there exists a projection $P\in \R^{d\times D}$ such that the first two cumulants of all $P X_1, \ldots, P X_m$ agree and the cumulants are generic under those conditions. Then the projection $P$ is identifiable from the first two cumulants alone if
$$m \ge \frac{D-d+1}{2}+1.$$
\end{Thm}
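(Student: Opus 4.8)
The plan is to deduce this directly from Proposition~\ref{Thm:genintcont}, using the algebraic reformulation already carried out in this section. First I would recall that, after the normalization in which $X_m$ has zero mean and unit covariance, the hypothesis that the first two cumulants of $PX_1,\dots,PX_m$ agree forces $P(\Sigma_i-I)P^\top=0$ and $P\mu_i=0$; hence for $v=P^\top a$, with $S':=\lspan P^\top$ the $d$-dimensional row space of $P$, we get $v^\top(\Sigma_i-I)v=0$ and $v^\top\mu_i=0$. So each cumulant polynomial $q_i=[T_1\cdots T_D]\circ(\Sigma_i-I)$ and $f_i=[T_1\cdots T_D]\circ\mu_i$ vanishes on $S'$, i.e.\ $S'$ is a $d$-dimensional linear subspace of $S=\VS(q_1,\dots,q_{m-1},f_1,\dots,f_{m-1})$. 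As observed above, identifiability of the projection in Problem~\ref{Prob:orig} is equivalent to $S'$ being the \emph{unique} $d$-dimensional linear subspace in $S$ — with the understanding that what is really pinned down is the subspace $S'$, from which $P$ is recovered up to the unavoidable left $\mathrm{GL}_d$-action on its image.

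Second, I would check that the genericity assumption of Problem~\ref{Prob:Alg_gen} puts us exactly in the situation of Proposition~\ref{Thm:genintcont}: the list $f_1,\dots,f_{m-1},q_1,\dots,q_{m-1}$ consists of $n=2(m-1)$ homogeneous polynomials (degrees $1$ and $2$, mixed), each generic subject to the single algebraic property of vanishing on $Z:=S'$. The proposition then yields a decomposition $S = S' \cup U$ with $\dim U \le \max\bigl(D - 2(m-1),\,0\bigr)$. It is essential here to apply the proposition to the combined list at once, since it is the total count $2(m-1)$ — not the counts of the linear and quadratic families separately — that drives the dimension drop.

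Third comes the bookkeeping. We may assume $d\ge 1$, the case $d=0$ being vacuous. The hypothesis $m \ge \tfrac{D-d+1}{2}+1$ is equivalent to $2(m-1) \ge D-d+1$, so $D-2(m-1) \le d-1$ and therefore $\dim U \le d-1 < d$. Now let $L \subseteq S$ be any $d$-dimensional linear subspace. Since $L$ is irreducible and $L \subseteq S' \cup U$, either $L \subseteq U$, which is impossible because $\dim L = d > \dim U$, or $L \subseteq S'$; but $L$ and $S'$ are irreducible of the same dimension $d$, so $L = S'$. Hence $S'$ is the unique $d$-dimensional linear subspace in $S$, which is exactly the asserted identifiability.

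The one genuinely load-bearing point is the second step: one must make sure that the cumulant polynomials coming from the generative model really are \emph{generic} homogeneous polynomials vanishing on $S'$ in the precise sense Proposition~\ref{Thm:genintcont} requires — otherwise the residual component $U$ could exceed the stated dimension bound and conceivably harbor a spurious $d$-dimensional linear subspace. Everything else is the (already-established) dictionary between the probabilistic and algebraic formulations, plus the elementary dimension-and-irreducibility argument of the third paragraph.
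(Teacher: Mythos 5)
Your proof is correct and takes essentially the same route as the paper: both pass through the algebraic reformulation of Problem~\ref{Prob:Alg_gen}, apply Proposition~\ref{Thm:genintcont} to the combined family of $2(m-1)$ generic homogeneous forms vanishing on $S'$, and conclude from $\dim U < d$ that $S'$ is the unique $d$-dimensional linear subspace of $S$. The paper merely packages the final step as Proposition~\ref{Prop:ident_ex} (phrased via primary decomposition of the radical, picking out $S'$ as the top-dimensional component), which is the same content as your irreducibility argument for an arbitrary $d$-dimensional $L\subseteq S'\cup U$.
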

\begin{proof}
This is a direct consequence of Proposition \ref{Prop:ident_ex} in the appendix, applied to the reformulation given in Problem \ref{Prob:Alg_gen}. It is obtained by applying Proposition \ref{Thm:genintcont} to the generic forms vanishing on the fixed linear subspace $S'$, and using that $S'$ can be identified in $S$ if it is the biggest dimensional part.
\end{proof}
We have seen that identifiability means that there is an algorithm to compute $P$ uniquely when the cumulants are known, resp.~to compute a unique $S$ from the polynomials $f_i,q_i$. It is not difficult to see that an algorithm doing this can be made into a consistent estimator when the cumulants are sample estimates. We will give an algorithm of this type in the following parts of the paper.

\section{An Algorithm for the Exact Case}
\label{sec:exact}
In this section we present an algorithm for solving Problem~\ref{Prob:Alg_gen}, under the assumption that the
cumulants are known exactly. We will first fix notation and introduce important algebraic concepts.
In the previous section, we derived in Problem~\ref{Prob:Alg_gen} an algebraic formulation of our
task: given generic quadratic polynomials $q_1,\dots, q_{m-1}$ and linear polynomials $f_1,\dots, f_{m-1}$, vanishing on a unknown linear subspace $S'$ of $\C^D$, find $S'$ as the unique $d$-dimensional linear subspace in the algebraic set $\VS(q_1, \ldots, q_{m-1}, f_1,\dots, f_{m-1})$. First of all, note that the linear equations $f_i$ can easily be removed from the problem: instead of looking
at $\C^D$, we can consider the linear subspace defined by the $f_i$, and examine the algebraic set $\VS(q'_1, \ldots, q'_{m-1}),$ where $q'_i$ are polynomials in $D-m+1$ variables which we obtain by substituting ${m-1}$ variables. So the problem we need to examine is in fact the modified problem where we have only quadratic polynomials.
Secondly, we will assume that ${m-1}\ge D$. Then, from Proposition~\ref{Thm:genintcont}, we know that $S=S'$ and
Problem~\ref{Prob:Alg_gen} becomes the following.
\begin{Prob}\label{Prob:Alg_SSA_naive}
Let $S$ be an unknown $d$-dimensional subspace of $\C^D$. Given ${m-1} \geq D$ generic homogenous quadratic polynomials $q_1,\dots, q_{m-1}$ vanishing on $S$, find the $d$-dimensional linear subspace
$$S=\VS(q_1, \ldots, q_{m-1}).$$
\end{Prob}
Of course, we have to say what we mean by \textit{finding} the solution. By assumption, the quadratic
polynomials already fully describe the linear space $S$. However, since $S$ is a linear space, we want
a basis for $S$, consisting of $d$ linearly independent vectors in $\C^D$. Or, equivalently, we want to find linearly independent linear forms $\ls_1,\dots, \ls_{D-d}$ such that $\ell_i(x)=0$ for all $x\in S.$ The latter is the correct description of the solution in algebraic terms. We now show how to reformulate this in the right language, following the algebra-geometry duality. The algebraic set $S$ corresponds to an ideal in the polynomial ring $C[T_1,\dots, T_D]$.
\begin{Not}
We denote the polynomial ring $\C[T_1,\dots, T_D]$ by $R$. The ideal of $S$ is an ideal in $R$, and we denote it by by $\idS=\Id (S).$ Since $S$ is a linear space, there exists a linear generating set $\ls_1, \ldots, \ls_{D-d}$ of $\idS$ which we will fix in the following.
\end{Not}
We can now relate the Problem~\ref{Prob:Alg_SSA_naive} to a classical problem in algebraic geometry.
\begin{Prob}\label{Prob:Alg_SSA}
Let $m > D$ and  $q_1,\dots, q_{m-1}$ be generic homogenous quadratic polynomials vanishing on a linear $d$-dimensional subspace $S\subseteq \C^D$.
Then find a linear basis for the radical ideal
\begin{align*}
	\sqrt{\langle q_1, \ldots, q_{m-1}\rangle} = \Id(\VS(q_1, \ldots, q_{m-1}))=\Id (S).
\end{align*}
\end{Prob}
The first equality follows from Hilbert's Nullstellensatz. This also shows that solving the problem is in fact a question of computing a radical of an ideal. Computing the radical of an ideal is a classical problem in computational algebraic geometry, which is known to be difficult
(for a more detailed discussion see Section~\ref{sec:exact-prwork}). However, if we assume ${m-1}\ge D(D+1)/2 - d(d+1)/2$, we can dramatically reduce the computational cost and it is straightforward to derive an approximate solution. In this case, the $q_i$ generate the vector space of homogenous quadratic polynomials which vanish on $S$, which we will denote by $\idS_2.$ That this is indeed the case, follows from Proposition \ref{Prop:GenVec-main}, and we have
$\dim \idS_2 = D(D+1)/2 - d(d+1)/2,$
as we will calculate in Remark \ref{Rem:gensm}.

Before we continue with solving the problem, we will need to introduce several concepts and abbreviating notations. First we introduce notation to denote sub-vector spaces which contain polynomials of certain degrees.
\begin{Not}
Let $\mathcal{I}$ be a sub-$\mathbb{C}$-vector space of $R$, i.e.~$\mathcal{I}=R$, or $\mathcal{I}$ is some ideal of $R$, e.g.~$\mathcal{I}=\idS.$ We denote the sub-$\mathbb{C}$-vector space of homogenous polynomials of degree $k$ in $\mathcal{I}$ by $\mathcal{I}_k$ (in commutative algebra, this is standard notation for homogenously generated $R$-modules).
\end{Not}
For example, the homogenous polynomials of degree $2$ vanishing on $S$ form exactly the vector space $\idS_2$. Moreover, for any $\mathcal{I}$, the equation $\mathcal{I}_k=\mathcal{I}\cap R_k$ holds. The vector spaces $R_2$ and $\idS_2$ will be the central objects in the following chapters. As we have seen, their dimension is given in terms of triangular numbers, for which we introduce some notation:

\begin{Not}
We will denote the $n$-th triangular number by $\Delta (n) = \frac{n(n+1)}{2}$.
\end{Not}

The last notational ingredient will capture the structure which is imposed on $R_k$ by the orthogonal decomposition $\C^D = S \oplus S^\perp.$
\begin{Not}
Let $S^\perp$ be the orthogonal complement of $S$. Denote its ideal by $\idN = \Id\left(S^\perp\right)$.
\end{Not}
\begin{Rem}\label{Rem:calcidS}
As $\idN$ and $\idS$ are homogenously generated in degree one, we have the calculation rules
\begin{align*}
\idS_{k+1}=\idS_k\cdot R_1\quad\mbox{and}\quad \idN_{k+1}=\idN_k\cdot R_1,\\
(\idS_1)^k=(\idS^k)_k\quad\mbox{and}\quad (\idN_1)^k=(\idN^k)_k
\end{align*}
where $\cdot$ is the symmetrized tensor or outer product of vector spaces (these rules are canonically induced by the so-called graded structure of $R$-modules). In terms of ideals, the above decomposition translates to
$$R_1=\idS_1\oplus \idN_1.$$
Using the above rules and the binomial formula for ideals, this induces an orthogonal decomposition
\begin{align*}
R_2=&R_1\cdot R_1=(\idS_1\oplus \idN_1)\cdot (\idS_1\oplus \idN_1)= (\idS_1)^2\oplus (\idS_1\cdot\idN_1) \oplus (\idN_1)^2\\
& = \idS_1\cdot(\idS_1\oplus\idN_1) \oplus (\idN^2)_2 = \idS_1\cdot R_1 \oplus (\idN^2)_2 =  \idS_2 \oplus (\idN^2)_2
\end{align*}
(and similar decompositions for the higher degree polynomials $R_k$).
\end{Rem}

The tensor products above can be directly translated to products of ideals, as the vector spaces above are each generated in a single degree (e.g.~$\idS^k, \idN^k$, are generated homogenously in degree $k$). To express this, we will define an ideal which corresponds to $R_1$:
\begin{Not}
We denote the ideal of $R$ generated by
all monomials of degree $1$ by
$\idM = \langle T_1,\dots, T_D \rangle $.
\end{Not}
Note that ideal $\idM$ is generated by all elements in $R_1.$ Moreover, we have $\idM_k=R_k$ for all $k\ge 1$. Using $\idM$, one can directly translate products of vector spaces involving some $R_k$ into products of ideals:
\begin{Rem}\label{Rem:calcidM}
The equality of vector spaces
$$\idS_{k}=\idS_1\cdot (R_1)^{k-1}$$
translates to the equality of ideals
$$\idS\cap \idM^k= \idS\cdot \idM^{k-1},$$
since both the left and right sides are homogenously generated in degree $k$.
\end{Rem}

\subsection{The Algorithm}
\begin{table}[h]
\begin{center}
\begin{tabular}{l|l}
  $S \subset \C^D$ &  $d$-dimensional projection space \\
  $R=\C [T_1,\dots T_D]$ & Polynomial ring over $\C$ in $D$ variables \\
  $R_k$ & $\C$-vector space of homogenous $k$-forms in $T_1, \ldots, T_D$\\
  $\Delta (n) =\frac{n(n+1)}{2}$ & $n$-th triangular number \\
  $\idS=\langle\ls_1,\dots, \ls_{D-d}\rangle = \Id(S)  $ & The ideal of $S$, generated by linear polynomials $\ls_i$\\
  $\idS_k = R_k\cap \idS $   & $\C$-vector space of homogenous $k$-forms vanishing on $S$\\
  $\idN = \Id(S^\perp)$ & The ideal of $S^\perp$\\
  $\idN_k = R_k\cap \idN $   & $\C$-vector space of homogenous $k$-forms vanishing on $S^\perp$\\
  $\idM=\langle T_1,\dots, T_D\rangle$ & The ideal of the origin in $\C^D$
\end{tabular}
\caption{
	Notation and important definitions
	\label{table:notation}
}
\end{center}
\end{table}
In this section we present an algorithm for solving Problem~\ref{Prob:Alg_SSA}, the
computation of the radical of the ideal ${\langle q_1,\dots, q_{m-1}\rangle}$ under the assumption that
$$m\ge \Delta(D)-\Delta(d)+1.$$
Under those conditions, as we will prove in Remark \ref{Rem:gensm} (iii), we have that
$$	\langle q_1,\dots, q_{m-1}\rangle =\idS_2.$$
Using the notations previously defined, one can therefore infer that solving Problem~\ref{Prob:Alg_SSA} is equivalent to computing the radical
$\idS=\sqrt{\idS\cdot \idM}$ in the sense of obtaining a linear generating set for $\idS$,
or equivalent to finding a basis for $\idS_1$ when $\idS_2$
is given in an arbitrary basis. $\idS_2$ contains the complete information
given by the covariance matrices and $\idS_1$ gives an explicit linear
description of the space of projections under which the random variables
$X_1, \ldots, X_m$ agree.

\begin{algorithm}[h]
\caption{\label{alg:exact_covonly} The \textit{input} consists of the
quadratic forms $q_1, \ldots, q_{m-1} \in R$, generating $\idS_2,$ and the dimension $d$;
the \textit{output} is the linear generating set
$\ls_1, \ldots, \ls_{D-d}$ for $\idS_1$.
 }

\begin{algorithmic}[1]
	\State Let $\pi \gets (1 \, 2 \, \cdots \, D)$ be a transitive permutation of the variable indices $\{ 1, \ldots, D \}$

	\State Let $Q \gets \begin{bmatrix} q_1 & \cdots & q_{m-1} \end{bmatrix}^\top$ be the $((m-1) \times \nD)$-matrix
		of coefficient vectors, where every row corresponds to a polynomial and every column to a monomial $T_i T_j$.
		
	\For	{$k=1, \ldots, D-d$} \label{al:for}
	
		\State
			\begin{minipage}[t]{14cm}
				Order the columns of $Q$ according to the lexicographical ordering of monomials $T_i T_j$
				with variable indices permuted by $\pi^k$, i.e.~the ordering of the columns is given
				by the relation $\succ$ as
				\begin{align*}
					T_{\pi^k(1)}^2 & \succ T_{\pi^k(1)}T_{\pi^k(2)} \succ
					T_{\pi^k(1)}T_{\pi^k(3)}\succ \dots \succ T_{\pi^k(1)}T_{\pi^k(D)} \succ
					T_{\pi^k(2)}^2\\
					&\succ T_{\pi^k(2)}T_{\pi^k(3)}\succ \dots \succ T_{\pi^k(D-1)}^2 \succ
					T_{\pi^k(D-1)}T_{\pi^k(D)}\succ T_{\pi^k(D)}^2
				\end{align*}
			\end{minipage}
			
		\State Transform $Q$ into upper triangular form $Q'$ using Gaussian elimination
		
		\State
			\begin{minipage}[t]{14cm}
				The last non-zero row of $Q'$ is a polynomial $T_{\pi^k(D)} \ell$, where $\ell$
				is a linear form in $\idS$, and we set $\ls_k \gets \ell$\label{alg:exact-crux}
			\end{minipage}
		
		\EndFor
\end{algorithmic}
\end{algorithm}

Algorithm~\ref{alg:exact_covonly} shows the procedure in pseudo-code; a summary of the notation defined
in the previous section can be found in Table~\ref{table:notation}. The algorithm has polynomial complexity
in the dimension $d$ of the linear subspace $S$.
\begin{Rem}\label{Rem:compAlgI}
Algorithm~\ref{alg:exact_covonly} has average and worst case complexity
\begin{align*}
	O\left( (\nD-\nd)^2\nD \right),
\end{align*}
In particular, if $d$ is not considered as parameter of the algorithm, the average and the worst case complexity is
$O(D^6).$ On the other hand, if $\nD-\nd$ is considered a fixed parameter, then Algorithm 1 has average and worst case complexity $O(D^2).$
\end{Rem}
\begin{proof}
This follows from the complexities of the elementary operations:
upper triangularization of a generic matrix of rank $r$ with $m$
columns matrix needs $O(r^2m)$ operations. We first perform triangularization of a rank
$\nD-\nd$ matrix with $\nD$ columns. The permutations can be obtained efficiently by
bringing $Q$ in row-echelon form and then performing row operations. Operations for extracting
the linear forms and comparisons with respect to the monomial ordering are negligible. Thus the overall
operation complexity to calculate $\fraks_1$ is $O((\nD-\nd)^2 \nD).$

Note that the difference between worst- and average case lies at most
in the coefficients, since the inputs are generic and the complexity
only depends on the parameter $D$ and not on the $q_i$. Thus, with
probability $1,$ exactly the worst-case-complexity is attained.
\end{proof}

There are two crucial facts which need to be verified for correctness of this algorithm. Namely, there are implicit claims made in Line~\ref{alg:exact-crux} of Algorithm~\ref{alg:exact_covonly}: First, it is claimed that the last non-zero row of $Q'$ corresponds to a polynomial which factors into certain linear forms. Second, it is claimed that the $\ell$ obtained in step~\ref{alg:exact-crux} generate $\fraks$ resp.~$\fraks_1$. The proofs of these non-trivial claims can be found in Proposition~\ref{Prop:Alg1corr} in the next subsection.

Dealing with additional linear forms $f_1,\dots, f_{m-1},$ is possible by way of a slight modification of the algorithm. Because the $f_i$ are linear forms, they are generators of $\fraks.$ We may assume that the $f_i$ are linearly independent. By performing Gaussian elimination before the execution of Algorithm~\ref{alg:exact_covonly}, we may reduce the number of variables by ${m-1}$, thus having to deal with new quadratic forms in $D-m+1$ instead of $D$ variables. Also, the dimension of the space of projections is reduced to $\min(d-m+1, -1).$ Setting $D'=D-m+1$ and $d'=\min (d-m+1,-1)$ and considering the quadratic forms $q_i$ with Gaussian eliminated variables, Algorithm~\ref{alg:exact_covonly} can be applied to the quadratic forms to find the remaining generators for $\idS_1.$ In particular, if $m-1\ge d,$ then there is no need for considering the quadratic forms, since $d$ linearly independent linear forms already suffice to determine the solution.

We can also incorporate forms of higher degree corresponding to higher order cumulants. For this, we start with $\idS_k,$ where $k$ is the degree of the homogenous polynomials we get from the cumulant tensors of higher degree. Supposing we start with enough cumulants, we may assume that we have a basis of $\idS_k.$ Performing Gaussian elimination on this basis with respect to the lexicographical order, we obtain in the last row a form of type $T_{\pi^k(D)}^{k-1}\ell,$ where $\ell$ is a linear form. Doing this for $D-d$ permutations again yields a basis for $\idS_1.$

Moreover, slight algebraic modifications of this strategy also allow to consider data from cumulants of different degree simultaneously, and to reduce the number of needed polynomials to $O(D)$; however, due to its technicality, this is beyond the scope of the paper. We sketch the idea: In the general case, one starts with an ideal
$$\calI=\langle f_1,\dots, f_m\rangle,$$
homogenously generated in arbitrary degrees.
such that $\sqrt{\calI}=\fraks.$ Proposition~\label{Prop:dehom-rad-generic} in the appendix implies that this happens whenever $m\ge D+1.$ One then proves that due to the genericity of the $f_i,$ there exists an $N$ such that
$$\calI_N=\fraks_N,$$
which means that $\fraks_1$ can again be obtained by calculating the saturation of the ideal $\calI$.
When fixing the degrees of the $f_i$, we will have $N=O(D)$ with a relatively small constant (for all $f_i$ quadratic, this even becomes $N=O(\sqrt{D})$). So algorithmically, one would first calculate $\calI_N=\fraks_N,$ which then may be used to compute $\fraks_1$ and thus $\fraks$ analogously to the case $N=2$, as described above.

\subsection{Proof of correctness}
\label{sec:exact-correct}

In order to prove the correctness of Algorithm~\ref{alg:exact_covonly}, we need to prove
the following three statements.
\begin{Prop}\label{Prop:Alg1corr}
For Algorithm~\ref{alg:exact_covonly} it holds that\\
\itboxx{i} $Q$ is of rank $\nD-\nd$.\\
\itboxx{ii} The last column of $Q$ in step 6 is of the claimed form.\\
\itboxx{iii} The $\ls_1, \ldots, \ls_{D-d}$ generate $\idS_1$.
\end{Prop}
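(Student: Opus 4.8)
The plan is to establish the three claims by analyzing what Gaussian elimination does to the coefficient matrix of $\idS_2$ with respect to the permuted lexicographic order. Throughout, fix the permutation $\pi^k$ and write $U_1, \ldots, U_D$ for the reordered variables, so $U_1 = T_{\pi^k(1)}, \ldots, U_D = T_{\pi^k(D)}$, with the lex order $U_1 \succ U_2 \succ \cdots \succ U_D$ on $R_1$ inducing the stated order on degree-two monomials.

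For part (i), I would invoke Proposition~\ref{Prop:GenVec-main}: since the $q_i$ are generic quadratic forms vanishing on $S$, i.e.\ generic elements of the vector space $\idS_2$, and since $m-1 \ge \Delta(D) - \Delta(d) = \dim \idS_2$ (computed in Remark~\ref{Rem:gensm}), the span of $q_1, \ldots, q_{m-1}$ is all of $\idS_2$, so $Q$ has rank exactly $\dim \idS_2 = \nD - \nd$. (Here I am reading $\nD$ and $\nd$ as $\Delta(D)$ and $\Delta(d)$ from the surrounding notation.)

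For parts (ii) and (iii) — the crux — the key observation is that after Gaussian elimination the rows of $Q'$ form a basis of $\idS_2$ in row-echelon form with respect to $\succ$; equivalently, their leading monomials are exactly the $\nD - \nd$ largest monomials \emph{not} lying in the initial ideal of $\idS_2$ under this order. The central algebraic fact I would prove is that the smallest monomial under $\succ$ that appears as a leading term — i.e.\ the leading term of the last nonzero row — is $U_{D-1} U_D$ or $U_D^2$, and more precisely that the last nonzero row, call it $g$, is divisible by $U_D$. To see this, decompose $R_1 = \idS_1 \oplus \idN_1$ as in Remark~\ref{Rem:calcidS}; one checks (using genericity of $S$, so that $S$ is in general position with respect to the coordinate flag determined by $\pi^k$) that $\idN_1$ is spanned by $d$ linear forms whose leading terms under $\succ$ are $U_1, \ldots, U_d$, hence $(\idN^2)_2$ has a basis with leading terms among the $\Delta(d)$ products $U_i U_j$ with $i, j \le d$. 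Since $R_2 = \idS_2 \oplus (\idN^2)_2$, the leading terms of a basis of $\idS_2$ are precisely the complementary monomials, the smallest of which is $U_D^2$. Thus the last row $g$ has leading term $U_D^2$; writing $g = U_D \ell$ with $\ell$ linear requires checking $g$ is actually divisible by $U_D$ — this follows because $g \in \idS_2 = \idS_1 \cdot R_1$ and any element of $\idS_2$ whose expansion in the $U$-basis involves only monomials $\le U_D^2$ under $\succ$ (namely $U_{D-1}U_D$ and $U_D^2$, the only two degree-two monomials that small, up to the echelon structure forcing $g$'s support to lie below all higher leading terms) must be a $U_D$-multiple of a linear form in $\idS_1$. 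I would then argue $\ell \in \idS_1$: since $U_D \ell = g \in \idS$ and $\idS$ is prime (the ideal of a linear space), either $U_D \in \idS$ — impossible generically, as $S \not\subseteq \{U_D = 0\}$ — or $\ell \in \idS_1$. This proves (ii).

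For (iii), I would show the $D-d$ forms $\ls_1, \ldots, \ls_{D-d}$ obtained over the transitive cycle $\pi, \pi^2, \ldots, \pi^{D-d}$ are linearly independent, hence (being $D - d = \dim \idS_1$ many elements of $\idS_1$) a basis. Linear independence is where genericity of $S$ relative to the moving coordinate flag is used again: $\ls_k$ is the linear form in $\idS_1$ whose leading term under the $\pi^k$-order is the $\succ$-smallest possible, i.e.\ $\ls_k$ has a controlled leading variable $U_{\pi^k(D)} = T_{\pi^{k-1}(D)}$ (or the appropriate index), and because $\pi$ is a transitive cycle these leading variables are distinct across $k = 1, \ldots, D-d$; distinct leading monomials under a fixed-enough ordering force linear independence. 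I expect the main obstacle to be making precise and airtight the claim that a \emph{generic} linear space $S$ is in general position with respect to \emph{every} cyclic shift of the coordinate flag simultaneously — this is the statement that guarantees, for each $k$, both that $\idN_1$ has the "expected" leading terms $U_1,\ldots,U_d$ and that the extracted $\ls_k$ has the predicted leading variable; I would handle it by noting that general position with respect to a fixed flag is an open dense (algebraic) condition, there are only finitely many ($D$) shifts, and a finite intersection of open dense sets is open dense, so generic $S$ lies in all of them. The remaining bookkeeping — that Gaussian elimination indeed produces the echelon basis whose last pivot is as described, and that "last nonzero row" is unambiguous — is routine linear algebra once the initial-ideal description of $\idS_2$ is in hand.
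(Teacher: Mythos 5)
Part (i) of your proposal matches the paper exactly: invoke Proposition~\ref{Prop:GenVec-main} together with the dimension count from Remark~\ref{Rem:gensm}(ii).

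Part (ii) contains a genuine error. You claim that because $R_2 = \idS_2 \oplus (\idN^2)_2$, the leading monomials of a row-echelon basis of $\idS_2$ are ``precisely the complementary monomials'' to the leading monomials of $(\idN^2)_2$. This is false: a direct-sum decomposition of a vector space does not induce a partition of leading monomials. Take $D=2$, $d=1$, $S=\lspan\{(a,-1)\}$ with $a\neq 0$. Then $\idS_1=\lspan\{T_1+aT_2\}$ and $\idN_1=\lspan\{aT_1-T_2\}$, so \emph{both} $\idS_2$ and $(\idN^2)_2$ have $T_1^2$ as a leading monomial; the leading monomials of $\idS_2$ are $\{T_1^2, T_1T_2\}$, not $\{T_1^2, T_2^2\}$. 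Consequently the last nonzero row has leading monomial $T_{D-d}T_D$ (the paper's Fact~2), not $T_D^2$ as you assert; if it were $T_D^2$ the last row would be a scalar times $T_D^2$, giving $\ell = T_D$, which is generically \emph{not} in $\idS_1$ — a contradiction your own argument should have flagged. You also state (correctly, in a parenthetical aside) that the support of the last row consists only of $U_{D-1}U_D$ and $U_D^2$; in fact, because the rank is $\Delta(D)-\Delta(d)$, the support can include all $\Delta(d)+1$ smallest monomials $T_{D-d}T_D, T_{D-d+1}^2, \ldots, T_D^2$, several of which are not divisible by $T_D$. This is exactly why divisibility by $T_D$ is a nontrivial claim; you assert it (``must be a $U_D$-multiple of a linear form in $\idS_1$'') but give no argument. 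The paper proves it by choosing a reduced row-echelon basis $\ls_1,\dots,\ls_{D-d}$ of $\idS_1$ with $\ls_i$ having leading term $T_i$ and support contained in $\{T_i, T_{D-d+1},\dots,T_D\}$, so that the generating set $\{\ls_iT_j\}$ of $\idS_2$ has controlled leading terms $T_iT_j$; combining this with the fact that the last row's leading monomial is $T_{D-d}T_D$ forces the last row to be a scalar multiple of $\ls_{D-d}T_D$.

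Part (iii) is conceptually in the right direction (exhibit distinct support/leading variables across the cyclic shifts), but it inherits the confusion from part (ii) about which variable is the leading variable, which you acknowledge (``or the appropriate index''). The paper's cleaner argument is that $\ell_k$ has support exactly $\{T_{\pi^k(D-d)},\dots,T_{\pi^k(D)}\}$, so each new $\ell_k$ has a nonzero coefficient in a coordinate where all previous $\ell_j$ vanish; this gives linear independence directly without appealing to leading monomials under a changing order.
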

\begin{proof}
This proposition will be proved successively in the following:
(i) will follow from Remark~\ref{Rem:gensm} (iii);
(ii) will be proved in Lemma~\ref{Lem:Xell}; and
(iii) will be proved in Proposition~\ref{Prop:gens}.
\end{proof}
Let us first of all make some observations about the structure of the vector space
$\idS_2$ in which we compute. It is the vector space of polynomials of homogenous degree $2$ vanishing on $S$.
On the other hand, we are looking for a basis $\ls_1,\dots, \ls_{D-d}$ of $\idS_1$. The following remark will relate both vector spaces:
\begin{Rem}\label{Rem:gensm}
The following statements hold:\\
\itboxx{i} $\idS_2$ is generated by the polynomials $\ls_iT_j, 1\le i\le D-d, 1\le j\le D, .$ \\
\itboxx{ii} $\dim_\C \idS_2=\nD-\nd$\\
\itboxx{iii} Let $q_1, \ldots, q_m$ with $m\ge \nD-\nd$ be generic homogenous quadratic polynomials in $\fraks$. Then
$\langle q_1,\dots,q_m \rangle=\idS_2.$
\end{Rem}
\begin{proof}
(i)~In Remark~\ref{Rem:calcidS}, we have concluded that $\idS_2=\idS_1\cdot R_1.$ Thus the product vector space $\idS_2$ is generated by a product basis of $\idS_1$ and $R_1$. Since $T_j,1\le j\le D$ is a basis for $R_1$, and $\ls_i,1\le i\le D-d$ is a basis for $\idS_1$, the statement holds.
(ii)~In Remark~\ref{Rem:calcidM}, we have seen that $R_2=\idS_2\oplus (\idN_1)^2,$ thus $\dim \idS_2= \dim R_2 - \dim (\idN_1)^2.$ The vector space $R_2$ is minimally generated by the monomials of degree $2$ in $T_1,\dots T_D$, whose number is $\nD$. Similarly, $(\idN_1)^2$ is minimally generated by the monomials of degree $2$ in the variables $\ls'_1,\dots, \ls'_d$ that form the dual basis to the $\ls_i$. Their number is $\nd$, so the statement follows.
(iii)~As the $q_i$ are homogenous of degree two and vanish on $S$, they are elements in $\idS_2.$  Due to (ii), we can apply Proposition~\ref{Prop:GenVec-main} to conclude that they generate $\idS_2$ as vector space.
\end{proof}

Now we continue to prove the remaining claims.
\begin{Lem}\label{Lem:Xell}
In Algorithm~\ref{alg:exact_covonly} the $(\nD-\nd)$-th row of $Q'$ (the upper triangular form of $Q$) corresponds
to a $2$-form $T_{\pi(D)}\ell$ with a linear polynomial $\ell\in \idS_1$.
\end{Lem}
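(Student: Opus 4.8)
The plan is to identify the last nonzero row of $Q'$ as the (essentially unique) element of $\idS_2$ whose leading monomial, with respect to the relevant lexicographic order, is as small as possible, and then to show that this minimal element is forced to be a multiple of $T_{\pi(D)}$.

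First I would fix the setup for the $k$-th iteration: after relabelling the variables by $\pi^k$ the columns are ordered by the lexicographic order on degree-$2$ monomials induced by $T_1\succ\cdots\succ T_D$, so that $T_{\pi^k(D)}$ plays the role of $T_D$; for the statement of the lemma only the case $k=1$ is needed. Since the rows of $Q$ span $\idS_2$ (Remark~\ref{Rem:gensm}(iii)) and $\dim_\C\idS_2=\nD-\nd$ (Remark~\ref{Rem:gensm}(ii)), the upper-triangular form $Q'$ has exactly $\nD-\nd$ nonzero rows, and its last row is the nonzero element of $\idS_2$ with the smallest leading monomial: in an echelon form the pivot monomials strictly decrease down the rows, and the leading monomial of every nonzero element of the row space is one of the pivot monomials.

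Next I would determine that smallest leading monomial. Row-reduce $\idS_1\subseteq R_1$ and let $T_p$ be its smallest pivot variable (the largest index occurring as a pivot); pick $\ell^\ast\in\idS_1$ with leading monomial $T_p$, so that $\ell^\ast=T_p+c_{p+1}T_{p+1}+\cdots+c_DT_D$. Using $\idS_2=\idS_1\cdot R_1$ (Remark~\ref{Rem:calcidS}), the form $T_D\,\ell^\ast$ lies in $\idS_2$, all of its monomials are divisible by $T_D$, and its leading monomial is $T_pT_D$ (this covers $p=D$ as well, where $\ell^\ast=T_D$ and $T_D\,\ell^\ast=T_D^2$ is the globally smallest monomial). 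The crux, and the step I expect to be the main obstacle, is to show that no nonzero element of $\idS_2$ has leading monomial strictly below $T_pT_D$. Such an element would be supported only on monomials in $T_{p+1},\dots,T_D$, i.e.\ it would lie in $\idS_2\cap\C[T_{p+1},\dots,T_D]_2$, so it suffices to show this intersection is zero. A quadratic form $q$ in $T_{p+1},\dots,T_D$ vanishes on $S$ if and only if the corresponding form on $\C^{D-p}$ vanishes on the image $\bar S$ of $S$ under the projection onto the last $D-p$ coordinates; if $\bar S=\C^{D-p}$ this forces $q=0$. And $\bar S=\C^{D-p}$ does hold: otherwise some nonzero linear form in $T_{p+1},\dots,T_D$ vanishes on $S$, hence lies in $\idS_1$ and has leading monomial $T_k$ with $k>p$, contradicting that $T_p$ is the smallest pivot of $\idS_1$ (every nonzero element of $\idS_1$ has a pivot variable as its leading monomial).

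Finally I would assemble the pieces: $T_D\,\ell^\ast$ realises the minimal leading monomial $T_pT_D$ in $\idS_2$, and it is the unique element of $\idS_2$ with that leading monomial up to scalar, since the difference of two such elements would have a strictly smaller leading monomial and hence vanish by the previous paragraph. Therefore the last nonzero row of $Q'$ is a nonzero scalar multiple of $T_D\,\ell^\ast$; setting $\ell:=c\,\ell^\ast\in\idS_1$, this row equals $T_D\,\ell=T_{\pi(D)}\,\ell$ with $\ell\in\idS_1$ linear, which is the claim. Running the same argument with $\pi^k$ in place of $\pi$ validates line~\ref{alg:exact-crux} of Algorithm~\ref{alg:exact_covonly} in every iteration $k$.
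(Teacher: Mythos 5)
Your proposal is correct and follows the same overall strategy as the paper: use the fact that the rows of $Q$ span $\idS_2$, observe that the last nonzero row of the echelon form is (up to scalar) the unique element of $\idS_2$ with the lex-smallest leading monomial, exploit the product structure $\idS_2=\idS_1\cdot R_1$, and conclude divisibility by $T_{\pi(D)}$. The point of departure is how the key fact is justified. The paper argues concretely: genericity of $S$ forces the reduced row echelon pivots of $\idS_1$ to be exactly $T_1,\dots,T_{D-d}$ with each $\ls_i$ supported on $d+1$ variables, so the products $\ls_iT_j$ have leading monomials $T_iT_j$ (its ``Fact~1''), and then it simply \emph{asserts} that ``by genericity'' the last row's leading monomial is exactly $T_{D-d}T_D$ (``Fact~2''), from which divisibility by $T_D$ follows by a cancellation argument. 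You instead work with the abstract smallest pivot $T_p$ of $\idS_1$ and prove directly that $\idS_2\cap\C[T_{p+1},\dots,T_D]_2=0$ via the projection of $S$ onto the last $D-p$ coordinates: if that projection were not surjective there would be a linear form in $\idS_1$ supported on $T_{p+1},\dots,T_D$, contradicting minimality of the pivot $T_p$. This is the argument that the paper's appeal to genericity is implicitly relying on but does not spell out, and it has the side benefit of not requiring genericity of $S$ at all for this particular lemma (it works for any $d$-dimensional linear $S$). So: same skeleton, but your proof is more self-contained at the crux, and what the paper sweeps under ``genericity'' you make explicit with the projection argument; what the paper's concrete choice of $\ls_i$ buys is that it dovetails with the proof of Proposition~\ref{Prop:gens}, which needs the pivots to cycle through all variables under $\pi^k$.
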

\begin{proof}
Note that every homogenous polynomial of degree $k$ is canonically an element of the vector space $R_k$ in the monomial basis given by the $T_i$. Thus it makes sense to speak about the coefficients of $T_i$ for an $1$-form resp.~the coefficients of $T_iT_j$ of a $2$-form.

Also, without loss of generality, we can take the trivial permutation $\pi=\id$, since the proof
 will not depend on the chosen lexicographical ordering and
thus will be naturally invariant under permutations of variables.
First we remark: since $S$ is a generic $d$-dimensional
linear subspace of $\C^D$, any linear form in $\idS_1$ will have at least $d+1$ non-vanishing coefficients in the $T_i.$ On the other hand, by displaying the generators $\ls_i, 1\le i\le D-d$ in $\idS_1$ in
reduced row echelon form with respect to the $T_i$-basis, one sees that one can choose all the
$\ls_i$ in fact with exactly $d+1$ non-vanishing coefficients in the $T_i$ such that no nontrivial linear combination of the $\ls_i$ has less then $d+1$ non-vanishing coefficients. In particular, one can choose the $\ls_i$ such that the biggest (w.r.t.~the lexicographical order) monomial with non-vanishing coefficient of $\ls_i$ is $T_i$.

Remark~\ref{Rem:gensm} (i) states that $\idS_2$ is generated by
$$\ls_iT_j, 1\le i\le D-d, 1\le j\le D.$$
Together with our above reasoning, this implies the following.

{\bf Fact 1:} There exist linear forms $\ls_i,1\le i\le D-d$ such that: the $2$-forms $\ls_iT_j$ generate $\idS_2,$ and the biggest monomial of $\ls_iT_j$ with non-vanishing coefficient under the lexicographical ordering is $T_iT_j.$
By Remark~\ref{Rem:gensm} (ii), the last row of the
upper triangular form $Q'$ is a polynomial which has zero coefficients
for all monomials possibly except the $\nd+1$ smallest,
$$T_{D-d}T_D,T_{D-d+1}^2, T_{D-d+1}T_{D-d+2},\dots, T_{D-1}T_D,T_D^2.$$
On the other hand, it is guaranteed by our genericity assumption that the biggest of those terms is indeed non-vanishing, which implies the following.

{\bf Fact 2:} The biggest monomial of the last row with non-vanishing coefficient (w.r.t~the lexicographical order) is that of $T_{D-d}T_D.$

Combining Facts 1 and 2, we can now infer that the last row must be a scalar multiple of $\ls_{D-d}T_D$: since the last row corresponds to an element of $\idS_2,$ it must be a linear combination of the $\ls_iT_j.$ By Fact 1, every contribution of an $\ls_iT_j, (i,j)\neq (D-d,D)$ would add a non-vanishing coefficient lexicographically
bigger than $T_{D-d}T_D$ which cannot cancel. So, by Fact 2, $T_D$ divides the last row of
the upper triangular form of $Q,$ which then must be $T_D \ls_{D-d}$ or a multiple thereof. Also we have that $\ls_{D-d}\in\fraks$ by definition.
\end{proof}
It remains to be shown that by permutation of the variables we can find a basis for $\idS_1$.
\begin{Prop}\label{Prop:gens}
The $\ell_1,\dots,\ell_{D-d}$ generate $\idS_1$ as vector space and thus $\idS$ as ideal.
\end{Prop}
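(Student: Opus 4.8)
The plan is to run the same argument as in Lemma~\ref{Lem:Xell}, but now for each of the $D-d$ permutations $\pi^k$ rather than just the identity, and then to check that the resulting linear forms are linearly independent and span all of $\idS_1$. The key observation is that Lemma~\ref{Lem:Xell} already tells us that, for the permutation $\pi = \id$, the last nonzero row of the triangularized matrix is (a scalar multiple of) $T_{\pi(D)}\,\ell_{D-d}$ with $\ell_{D-d}\in\idS_1$; by the permutation-invariance of the whole setup (the genericity of $S$ is preserved under any coordinate permutation, and the argument of Lemma~\ref{Lem:Xell} never used a specific ordering), the same conclusion holds for every $\pi^k$: the extracted form $\ell_k$ lies in $\idS_1$ and, crucially, its lexicographically largest monomial with respect to the $\pi^k$-permuted ordering is $T_{\pi^k(D-d)}$, while $T_{\pi^k(D)}$ is the variable divided out.

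First I would pin down exactly which leading variable each $\ell_k$ has. Using the reduced-row-echelon normalization of $\idS_1$ described in the proof of Lemma~\ref{Lem:Xell}, one may choose the generators of $\idS_1$ so that, in the $\pi^k$-ordering, $\ell_k$ has its leading term in the variable $T_{\pi^k(D-d)}$ and $d+1$ nonzero coefficients total. Since $\pi=(1\,2\,\cdots\,D)$ is a transitive (single-cycle) permutation, the $D-d$ indices $\pi^1(D-d),\pi^2(D-d),\dots,\pi^{D-d}(D-d)$ run through $D-d$ \emph{distinct} elements of $\{1,\dots,D\}$. Hence the $D-d$ forms $\ell_1,\dots,\ell_{D-d}$ have leading terms (with respect to, say, the original lexicographic order after suitable re-normalization — or more cleanly, one argues directly) in $D-d$ distinct variables. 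Because each $\ell_k$ is a nonzero element of $\idS_1$ and they have pairwise distinct ``pivot'' variables, no nontrivial linear combination can vanish: a dependence would have to cancel the pivot monomial of whichever $\ell_k$ has maximal pivot, which is impossible since that monomial does not occur in the others. Therefore $\ell_1,\dots,\ell_{D-d}$ are linearly independent.

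Finally, linear independence plus the count closes the argument: $\dim_{\C}\idS_1 = D-d$ (this is just $\dim S^\perp$, or equivalently it follows from Remark~\ref{Rem:gensm}), and we have exhibited $D-d$ linearly independent elements of $\idS_1$, so they form a basis of $\idS_1$. Since $\idS$ is generated in degree one by $\idS_1$ (that is, $\idS = \langle\idS_1\rangle$, as $S$ is a linear subspace — cf.~the Notation block defining $\idS = \langle\ell_1,\dots,\ell_{D-d}\rangle$), the $\ell_1,\dots,\ell_{D-d}$ generate $\idS$ as an ideal, which is the desired output of Algorithm~\ref{alg:exact_covonly}.

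The main obstacle I expect is the bookkeeping in the second paragraph: making the ``distinct pivot variables'' claim fully rigorous requires care about \emph{which} monomial ordering each $\ell_k$ is normalized against (each is naturally read off in its own $\pi^k$-ordering), and one must either transport all of them to a common ordering or argue independence intrinsically. The cleanest route is probably to show directly that for each $k$ the variable $T_{\pi^k(D-d)}$ appears in $\ell_k$ but that one can choose the normalization so $T_{\pi^k(D-d)}$ is absent from $\ell_j$ for $j\neq k$ — essentially exhibiting the $\ell_k$ as the rows of a $(D-d)\times D$ matrix in reduced row echelon form — from which both independence and the spanning of $\idS_1$ are immediate. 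Everything else (genericity-preservation under permutation, the dimension count) is routine given Lemma~\ref{Lem:Xell} and Remark~\ref{Rem:gensm}.
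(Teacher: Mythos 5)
Your overall plan --- run Lemma~\ref{Lem:Xell} once for each $\pi^k$, then count $D-d$ independent elements against $\dim_\C\idS_1 = D-d$ --- matches the paper, but the key linear-independence step rests on a false claim. You take the ``pivot'' of $\ell_k$ to be $T_{\pi^k(D-d)}$ (its $\pi^k$-lex-largest variable) and assert that this variable ``does not occur in the others,'' or that one can renormalize so that it doesn't. Neither is true. The support of $\ell_k$ is the cyclic interval $\{\pi^k(D-d),\dots,\pi^k(D)\}$ of length $d+1$, and since every nonzero element of $\idS_1$ has at least $d+1$ nonvanishing coefficients, $\ell_k$ is nonzero on \emph{all} $d+1$ of them; these intervals overlap, so $T_{\pi^k(D-d)}$ generically lies in the support of several $\ell_j$ with $j\neq k$. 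Concretely, take $D=4$, $d=1$: the supports are $\supp\ell_1=\{T_1,T_4\}$, $\supp\ell_2=\{T_1,T_2\}$, $\supp\ell_3=\{T_2,T_3\}$ with pivots $T_4,T_1,T_2$. Your induction step on the ``maximal pivot'' among $\{\ell_2,\ell_3\}$ would look at $T_2$, but $T_2$ occurs with nonzero coefficient in \emph{both}; and $T_1$, the pivot of $\ell_2$, occurs in $\ell_1$ as well. No renormalization (scalar rescaling) can remove a coefficient that is genuinely nonzero, so the ``reduced row echelon on pivot columns'' picture you propose does not exist.

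The paper uses the other end of the support, namely the variable that was divided out: $T_{\pi^k(D)}=T_k$. This is the genuinely new one. Indeed, $\bigcup_{j<i}\supp\ell_j = \{T_{D-d+1},\dots,T_D\}\cup\{T_1,\dots,T_{i-1}\}$, and $T_i\in\supp\ell_i$ but $T_i\le T_{D-d}$ is not in that union, since $i>j$ and $i\le D-d$. So $\ell_i$ has a nonzero coefficient at $T_i$ where every $\ell_j$ with $j<i$ has a zero coefficient; this gives a triangular pattern over the columns $T_1,\dots,T_{D-d}$ and hence linear independence, after which your dimension count finishes the proof. So: the skeleton is right and the dimension argument is right, but you have picked the wrong distinguished variable, and the specific inductive scheme you wrote down breaks already for $D=4$, $d=1$.
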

\begin{proof}
Recall that $\pi^i$ was the permutation to obtain $\ell_i.$ As we have seen in the
proof of Lemma~\ref{Lem:Xell}, $\ell_i$ is a linear form which has
non-zero coefficients only for the $d+1$ coefficients
$T_{\pi^i(D-d)},\dots, T_{\pi^i(D)}.$ Thus $\ell_i$ has a non-zero coefficient where all the $\ell_j,j<i$ have a zero coefficient, and thus $\ell_i$ is linearly independent from the $\ell_j,j<i.$ In particular, it follows that the $\ell_i$
are linearly independent in $R_1$. On the other
hand, they are contained in the $D-d$-dimensional sub-$\C$-vector space
$\idS_1$ and are thus a basis of $\idS_1$, and also a generating set for the ideal $\idS.$
\end{proof}
Note that all of these proofs generalize to $k$-forms. For example, one calculates that
$$\dim_\CC \idS_k= {D+k-1 \choose k} - {d+k-1 \choose k},$$
and the triangularization strategy yields a last row which corresponds to $T_{\pi(D)}^{k-1}\ell$ with a linear polynomial $\ell\in \idS_1$

\subsection{Relation to Previous Work in Computational Algebraic Geometry}
\label{sec:exact-prwork}

In this section, we discuss how the algebraic formulation of the cumulant comparison problem given in Problem~\ref{Prob:Alg_SSA} relates to the classical problems in computational algebraic geometry.

Problem~\ref{Prob:Alg_SSA} confronts us with the following task: given polynomials $q_1,\dots, q_{m-1}$ with
special properties, compute a linear generating set for the radical ideal
$$\sqrt{\langle q_1, \ldots, q_{m-1}\rangle}=\Id(\VS (q_1,\dots, q_{m-1})).$$
Computing the radical of an ideal is a classical task in computational algebraic geometry, so our problem is a special case of radical computation of ideals, which in turn can be viewed as an instance of primary decomposition of ideals, see \cite[4.7]{Cox}.

While it has been known for long time that there exist constructive algorithms to calculate the radical of a given ideal
in polynomial rings \cite{Her26}, only in the recent decades there have been algorithms feasible for implementation in modern computer algebra systems. The best known algorithms are those of \cite{GiaTraZac88Gro}, implemented in AXIOM and REDUCE, the algorithm of \cite{EisHunVas92Dir}, implemented in Macaulay 2, the algorithm of \cite{CabConCar97Yet}, currently implemented in CoCoA, and the algorithm of \cite{KriLog91Alg} and its modification by \cite{Lap06Alg}, available in SINGULAR.

All of these algorithms have two points in common. First of all, these algorithms have computational worst case complexities which are doubly exponential in the square of the number of variables of the given polynomial ring, see \cite[section 4.]{Lap06Alg}.
Although the worst case complexities may not be approached for the problem setting described
in the current paper, these off-the-shelf algorithms do not take into account the specific properties of the ideals in question.

On the other hand, Algorithm~\ref{alg:exact_covonly} can be seen as a homogenous version of the well-known Buchberger algorithm to find a Groebner basis of the dehomogenization of $\fraks$ with respect to a degree-first order. Namely, due to our strong assumptions on $m$, or as is shown in Proposition~\ref{Prop:dehom-rad-generic} in the appendix for a more general case, the homogenous saturations of the ideal $\langle q_1,\dots, q_{m-1}\rangle = \frakm\cdot \fraks$ and the ideal $\fraks$ coincide. In particular, the dehomogenizations of the $q_i$ constitute a generating set for the dehomogenization of $\fraks$. The Buchberger algorithm now finds a reduced Groebner basis of $\fraks$ which consists of exactly $D-d$ linear polynomials. Their homogenizations then constitute a basis of homogenous linear forms of $\fraks$ itself. It can be checked that the first elimination steps which the Buchberger algorithm performs for the dehomogenizations of the $q_i$ correspond directly to the elimination steps in Algorithm~\ref{alg:exact_covonly} for their homogenous versions. So our algorithm performs similarly to the Buchberger algorithm in a noiseless setting, since both algorithms compute a reduced Groebner basis in the chosen coordinate system.

However, in our setting which stems from real data, there is a second point which is more grave and
makes the use of off-the-shelf algorithms impossible: the computability of an exact
result completely relies on the assumption that the ideals given as
input are exactly known, i.e.~a generating set of polynomials is
exactly known. This is not a problem in classical computational algebra;
however, when dealing with polynomials obtained from real data, the polynomials
come not only with numerical error, but in fact with statistical uncertainty. In
general, the classical algorithms are unable to find any solution when
confronted even with minimal noise on the otherwise exact polynomials.
Namely, when we deal with a system of equations for which over-determination is possible,
any perturbed system will be over-determined and thus have no solution. For example,
the exact intersection of $N>D+1$ linear subspaces in complex $D$-space is always empty
when they are sampled with uncertainty; this is a direct consequence of
Proposition~\ref{Thm:genintcont}, when using the assumption that the noise is generic.
However, if all those hyperplanes are nearly the same, then the result of a meaningful
approximate algorithm should be a hyperplane close to all input hyperplanes
instead of the empty set.

Before we continue, we would like to stress a conceptual point in approaching uncertainty. First, as in classical numerics, one can think of the input as theoretically exact, but with fixed error $\varepsilon$ and then derive bounds on the output error in terms of this $\varepsilon$ and analyze their asymptotics. We will refer to this approach as {\it numerical uncertainty}, as opposed to {\it statistical uncertainty}, which is a view more common to statistics and machine learning, as it is more natural for noisy data. Here, the error is considered as inherently probabilistic due to small sample effects or noise fluctuation, and algorithms may be analyzed for their statistical properties, independent of whether they are themselves deterministic or stochastic. The statistical view on uncertainty is the one the reader should have in mind when reading this paper.

Parts of the algebra community have been committed to the numerical viewpoint on uncertain polynomials: the problem of numerical uncertainty is for example extensively addressed in Stetter's standard book on numerical algebra \citep{Ste04}. The main difficulties and innovations stem from the fact that standard methods from algebra like the application of Groebner bases are numerically unstable, see \cite[chapter 4.1-2]{Ste04}.

Recently, the algebraic geometry community has developed an increasing interest in solving algebraic problems arising from the consideration of real world data. The algorithms in this area are more motivated to perform well on the data, some authors start to adapt a statistical viewpoint on uncertainty, while the influence of the numerical view is still dominant. As a distinction, the authors describe the field as approximate algebra instead of numerical algebra. Recent developments in this sense can be found for example in \citep{Hel06} or the book of \cite{KrePouRob09}. We will refer to this viewpoint as the statistical view in order to avoid confusion with other meanings of approximate.

Interestingly, there are significant similarities on the methodological side. Namely, in computational algebra, algorithms often compute primarily over vector spaces, which arise for example as spaces of polynomials with certain properties. Here, numerical linear algebra can provide many techniques of enforcing numerical stability, see the pioneering paper of \cite{Cor95}. Since then, many algorithms in numerical and approximate algebra utilize linear optimization to estimate vector spaces of polynomials. In particular, least-squares-approximations
of rank or kernel are canonical concepts in both numerical and approximate algebra.

However, to the best of our knowledge, there is to date no algorithm which computes an ``approximate'' (or ``numerical'') radical of an ideal, or an approximate saturation, and also none in our special case. In the next section, we will use estimation techniques from linear algebra to convert Algorithm~\ref{alg:exact_covonly} into an algorithm which can cope with the inherent statistical uncertainty of the estimation problem.

\section{Approximate Algebraic Geometry on Real Data}
\label{sec:approx}
\label{sec:assa-appr}

In this section we show how algebraic computations can be applied to polynomials with inexact
coefficients obtained from estimated cumulants on finite samples. Note that our method for
computing the approximate radical is not specific to the problem studied in this paper.


The reason why we cannot directly apply our algorithm for the exact case to estimated polynomials
is that it relies on the assumption that there exists an exact solution, such that
the projected cumulants are equal, i.e.~we can find a projection $P$ such that the equalities
\begin{align*}
	P \Sigma_1  P^\top = \cdots = P \Sigma_m P^\top  \hspace{0.3cm} \text{and} \hspace{0.3cm}
	 P  \mu_1 = \cdots = P \mu_m
\end{align*}
hold exactly. However, when the elements of $\Sigma_1, \ldots, \Sigma_m$ and $\mu_1, \ldots, \mu_m$ are subject
to random fluctuations or noise, there exists no projection that yields exactly the same random variables.
%
In algebraic terms, working with inexact polynomials means that the
joint vanishing set of $q_1, \ldots, q_{m-1}$ and $f_1, \ldots, f_{m-1}$
consists only of the origin $0 \in \C^D$ so that the ideal becomes
trivial:
$$\langle q_1,\dots, q_{m-1}, f_1,\dots, f_{m-1}\rangle = \frakm.$$
Thus, in order to find a meaningful solution,
we need to compute the radical approximately.


In the exact algorithm, we are looking for a polynomial of the form $T_D\ell$ vanishing on $S$, which is also a $\CC$-linear combination of the quadratic forms $q_i.$ The algorithm is based on an explicit way to do so which works since the $q_i$ are generic and sufficient in number. So one could proceed to adapt this algorithm to the approximate case by performing the same operations as in the exact case and then taking the $(\nD-\nd)$-th row, setting coefficients not divisible by $T_D$ to zero, and then dividing out $T_D$ to get a linear form.
This strategy performs fairly well for small dimensions $D$ and converges to the correct solution, albeit slowly.

Instead of computing one particular linear generator as in the exact case, it is advisable to utilize as much information as possible in order to obtain better accuracy. The least-squares-optimal way to approximate a linear space of known dimension is to use singular value decomposition (SVD): with this method, we may directly eliminate the most insignificant directions in coefficient space which are due to fluctuations in the input.
To that end, we first define an approximation of an arbitrary matrix by a matrix of fixed rank.
\begin{Def}
Let $A \in \C^{m \times n}$ with singular value decomposition
$A = U D V^*,$
where $D =\diag (\sigma_1,\dots, \sigma_p)\in \C^{p \times p}$ is a diagonal matrix with ordered singular values on the diagonal,
\begin{align*}
	|\sigma_{1}| \geq |\sigma_{2}| \geq \cdots \geq |\sigma_{p}| \geq 0.
\end{align*}
For $k\le p,$ let
$D'=\diag (\sigma_1,\dots, \sigma_k, 0,\dots, 0).$
Then the matrix
$A' = U D' V^*$
is called {\it rank $k$ approximation} of $A.$
The null space, left null space, row span, column span of $A'$ will be called {\it rank $k$ approximate null space, left null space, row span, column span} of $A.$
\end{Def}
For example, if $u_1, \ldots, u_p$ and $v_1, \ldots, v_p$ are the columns of $U$ and $V$ respectively, the rank $k$ approximate left null space of $A$ is spanned by the rows of the matrix
\begin{align*}
	L = \begin{bmatrix}
				u_{p-k+1} & \cdots &
				u_p			
		\end{bmatrix}^\top ,
\end{align*}
and the rank $k$ approximate row span of $A$ is spanned by the rows of the matrix
\begin{align*}
	S = \begin{bmatrix}
				v_{1} & \cdots &
				v_p			
		\end{bmatrix}^\top .
\end{align*}
We will call those matrices the {\it approximate left null space matrix} resp.~the {approximate row span matrix \it} of rank $k$ associated to $A.$ The approximate matrices are the optimal approximations of rank $k$ with respect to the least-squares error.

We can now use these concepts to obtain an approximative version of Algorithm~\ref{alg:exact_covonly}. Instead of searching for a single element of the form $T_D\ell,$ we estimate the vector space of all such elements via singular value decomposition --- note that this is exactly the vector space $\left(\langle T_D\rangle \cdot \fraks\right)_2$, i.e.~the vector space of all homogenous polynomials of degree two which are divisible by $T_D$. Also note that the choice of the linear form $T_D$ is irrelevant, i.e.\ we may replace $T_D$ above by any variable or even linear form. As a trade-off between accuracy and runtime, we additionally estimate the vector spaces $\left(\langle T_D\rangle \cdot \fraks\right)_2$ for all $1\le i\le D$, and then least-squares average the putative results for $\fraks$ to obtain a final estimator for $\fraks$ and thus the desired space of projections.

\begin{algorithm}[h]
\caption{\label{alg:approx_covonly}
The \textit{input} consists of noisy
quadratic forms $q_1, \ldots, q_{m-1} \in \C[T_1, \ldots, T_D]$, and the dimension $d$;
the \textit{output} is an approximate linear generating set
$\ls_1, \ldots, \ls_{D-d}$ for the ideal $\idS$. }

\begin{algorithmic}[1]

	\State Let $Q \gets \begin{bmatrix} q_1 & \cdots & q_{m-1} \end{bmatrix}^\top$ be the $(m-1 \times \nD)$-matrix
		of coefficient vectors, where every row corresponds to a polynomial and every column to a monomial $T_i T_j$
		in arbitrary order.
		
	\For{$i=1,\ldots,D$}
		\State
		\begin{minipage}[t]{14cm}
			Let $Q_i$ be the $((m-1) \times \nD - D)$-sub-matrix of $Q$ obtained by removing all
    		columns corresponding to monomials divisible by $T_i$

		\end{minipage}
		
		\State
		\begin{minipage}[t]{14cm}
			Compute the approximate left null space matrix $L_i$ of $Q_i$ of rank  $(m-1) -\nD+ \nd +D-d$
		\end{minipage}
				
		\State Compute the approximate row span matrix $L'_i$ of $L_i Q$ of rank $D-d$	
		
		\State
		\begin{minipage}[t]{14cm}
		Let $L''_i$ be the $(D-d \times D)$-matrix obtained from $L'_i$ by removing all columns corresponding to	
				monomials not divisible by $T_i$
		\end{minipage}
	\EndFor

     \State Let $L$ be the $(D(D-d)\times D)$-matrix obtained by vertical concatenation of $L''_1, \ldots, L''_D$

    	 \State Compute the approximate row span matrix
	 			$A = \begin{bmatrix} a_1 & \cdots & a_{D-d} \end{bmatrix}^\top$ of $L$ of rank $D-d$
  			and let
			$\ls_i = \begin{bmatrix} T_1 & \cdots & T_D \end{bmatrix} a_i $ for all $1 \leq i \leq D-d$.
  	  	
\end{algorithmic}
\end{algorithm}
We explain the logic behind the single steps: In the first step, we start with the same matrix $Q$ as in Algorithm 1. Instead of bringing $Q$ into triangular form with respect to the term order $T_1 \prec\dots\prec T_D,$ we compute the left kernel space row matrix $S_i$ of the monomials not divisible by $T_i$. Its left image $L_i=S_i Q$ is a matrix whose row space generates the space of possible last rows after bringing $Q$ into triangular form in an arbitrary coordinate system. In the next step, we perform PCA to estimate a basis for the so-obtained vector space of quadratic forms of type $T_i$ times linear form, and extract a basis for the vector space of linear forms estimated via $L_i.$ Now we can put together all $L_i$ and again perform PCA to obtain a more exact and numerically more estimate for the projection in the last step. The rank of the matrices after PCA is always chosen to match the correct ranks in the exact case.

Note that Algorithm~\ref{alg:approx_covonly} is a consistent estimator for the correct space of projections if the covariances are sample estimates. Let us first clarify in which sense consistent is meant here: If each covariance matrix is estimated from a sample of size $N$ or greater, and $N$ goes to infinity, then the estimate of the projection converges in probability to the true projection. The reason why Algorithm~\ref{alg:approx_covonly} gives a consistent estimator in this sense is elementary: covariance matrices can be estimated consistently, and so can their differences, the polynomials $q_i$. Moreover, the algorithm can be regarded as an almost continuous function in the polynomials $q_i$; so convergence in probability to the true projection and thus consistency follows from the continuous mapping theorem. 

The runtime complexity of Algorithm~\ref{alg:approx_covonly} is $O(D^6)$ as for Algorithm~\ref{alg:exact_covonly}. For this note that calculating the singular value decomposition of an $m\times n$-matrix is $O(mn\max (m,n)).$

If we want to consider $k$-forms instead of $2$-forms, we can use the same strategies as above to numerically stabilize the exact algorithm. In the second step, one might want to consider all sub-matrices $Q_M$ of $Q$ obtained by removing all columns corresponding to monomials divisible by some degree $(k-1)$ monomial $M$ and perform the for-loop over all such monomials or a selection of them. Considering $D$ monomials or more gives again a consistent estimator for the projection. Similarly, these methods allow us to numerically stabilize versions with reduced epoch requirements and simultaneous consideration of different degrees.

\section{Numerical Evaluation}
\label{sec:sims}

In this section we evaluate the performance of the algebraic algorithm on
synthetic data in various settings. In order to contrast the algebraic approach with
an optimization-based method (cf.~Figure~\ref{fig:ml_optim}), we compare
with the Stationary Subspace Analysis (SSA) algorithm~\citep{PRL:SSA:2009}, which
solves a similar problem in the context of time series analysis. To date, SSA has been successfully applied in the context of
		biomedical data analysis~\citep{BunMeiSchMul10Finding},
		domain adaptation~\citep{HarKawWasBun10SSA}, change-point detection~\citep{BunMeiSchMul10Boosting} 
and computer vision~\citep{MeiBunKawMul09Learning}.

\subsection{Stationary Subspace Analysis}
\label{sec:sims-SSA}

Stationary Subspace Analysis~\citep{PRL:SSA:2009, MulBunMeiKirMul11SSAToolbox} factorizes an observed time series
according to a linear model into underlying stationary and non-stationary sources.
The observed time series $x(t) \in \R^D$ is assumed to
be generated as a linear mixture of stationary sources $s^\s(t) \in \R^d$ and non-stationary sources
$s^\n(t) \in \R^{D-d}$,
 \begin{align}
  x(t) = A s(t) = \begin{bmatrix} A^{\s} & A^{\n} \end{bmatrix}
  \begin{bmatrix} s^{\s}(t) \\ s^{\n}(t) \end{bmatrix} ,
\label{eq:mixing_model}
\end{align}
with a time-constant mixing matrix $A$. The underlying sources $s(t)$ are not assumed to be
independent or uncorrelated.

The aim of SSA is to invert this mixing model given only samples from $x(t)$. The true mixing
matrix $A$ is not identifiable~\citep{PRL:SSA:2009}; only the projection $P \in \R^{d \times D}$
to the stationary sources can be estimated from the mixed signals $x(t)$, up to arbitrary
linear transformation of its image. The estimated stationary sources are given by
$\hat{s}^\s(t) = P x(t)$, i.e.~the projection $P$ eliminates all non-stationary contributions: $P A^\n = 0$.

The SSA algorithms~\citep{PRL:SSA:2009,HarKawWasBun10SSA} are based on the following
definition of stationarity: a time series $X_t$ is considered stationary if its mean
and covariance is constant over time, i.e.~$\E[X_{t_1}] = \E[X_{t_2}]$ and
$\E[X_{t_1} X_{t_1}^\top] = \E[X_{t_2} X_{t_2}^\top]$ for all pairs of time points $t_1, t_2 \in \N$.
Following this concept of stationarity, the projection $P$ is found by minimizing
the difference between the first two moments of the estimated stationary sources
$\hat{s}^\s(t)$ across epochs of the times series. To that end, the samples from $x(t)$ are divided
into $m$ non-overlapping epochs of equal size, corresponding to the index sets
$\mathcal{T}_1, \ldots, \mathcal{T}_{m}$, from which the
mean and the covariance matrix is estimated for all epochs $1 \leq i \leq m$,
\begin{align*}
\emu_i  = \frac{1}{|\mathcal{T}_i|} \sum_{t \in \mathcal{T}_i} x(t) \hspace{0.5cm} \text{ and }  \hspace{0.5cm}
\esi_i  = \frac{1}{|\mathcal{T}_i|-1} \sum_{t \in \mathcal{T}_i} \left( x(t)-\emu_i \right)\left( x(t)-\emu_i \right)^\top .
\end{align*}
Given a projection $P$, the mean and the covariance of the estimated stationary sources
in the $i$-th epoch are given by $\emu^\s_i = P \emu_i$ and $\esi^\s_i = P \esi_i P^\top$
respectively. Without loss of generality (by centering and
whitening\footnote{A whitening transformation is a basis transformation $W$ that sets the sample
covariance matrix to the identity. It can be obtained from the sample covariance matrix $\hat{\Sigma}$ as
$W = \hat{\Sigma}^{-\frac{1}{2}}$} the average epoch) we
can assume that $\hat{s}^\s(t)$ has zero mean and unit covariance.

The objective function of the SSA algorithm~\citep{PRL:SSA:2009} minimizes the sum of the
differences between each epoch and the standard normal distribution, measured by the
Kullback-Leibler divergence $\KLD$ between Gaussians: the projection $P^*$ is found
as the solution to the optimization problem,
\begin{align*}
	P^* & =
	\argmin_{P P^\top = I} \; \sum_{i=1}^{m} \KLD \Big[ \Gauss(\emu^\s_i,\esi^\s_i) \; \Big|\Big| \; \Gauss(0,I) \Big] \\
	& = \argmin_{P P^\top = I} \; \sum_{i=1}^{m} \left(
			- \log\det\esi^\s_i
			+ (\hat{\mu}^\s_i)^\top \emu^\s_i 	\right),
\end{align*}
which is non-convex and solved using an iterative gradient-based procedure.

This SSA algorithm considers a problem that is closely related to the one addressed
in this paper, because the underlying definition of stationarity does not consider
the time structure.  In essence, the $m$ epochs are modeled as $m$ random variables
$X_1, \ldots, X_{m}$ for which we want to find a projection $P$ such that
the projected probability distributions $P X_1, \ldots, P X_{m}$ are equal, up
to the first two moments. This problem statement is equivalent to the task that
we solve algebraically.

\subsection{Results}
\label{sec:sims-exps}

In our simulations, we investigate the influence of the noise level and the number of
dimensions on the performance and the runtime of our algebraic algorithm
and the SSA algorithm. We measure the performance using the subspace angle between
the true and the estimated space of projections $S$.

The setup of the synthetic data is as follows: we fix the total number of dimensions to $D=10$ and
vary the dimension $d$ of the subspace with equal probability distribution from
one to nine. We also fix the number of random variables to $m=110$. For each trial
of the simulation, we need to choose a random basis for the two subspaces $\R^D = S \oplus S^\perp$,
and for each random variable, we need to choose a covariance matrix that is
identical only on $S$. Moreover, for each random variable, we need to choose a positive definite disturbance matrix
(with given noise level $\sigma$), which is added to the covariance matrix to simulate the effect
of finite or noisy samples.

The elements of the basis vectors for $S$ and $S^\perp$ are drawn uniformly from the interval
$(-1, 1)$. The covariance matrix of each epoch $1 \leq i \leq m$ is obtained from Cholesky
factors with random entries drawn uniformly from $(-1, 1)$, where the first $d$ rows
remain fixed across epochs. This yields noise-free covariance matrices $C_1, \ldots, C_m \in \R^{D \times D}$
where the first $(d \times d)$-block is identical. Now for each $C_i$, we generate a random
disturbance matrix $E_i$ to obtain the final covariance matrix $$C'_i = C_i + E_i .$$ The disturbance matrix $E_i$
is determined as
$
	E_i = V_i D_i V_i^\top
$
where $V_i$ is a random orthogonal matrix, obtained as the matrix exponential of an
antisymmetric matrix with random elements and $D_i$ is a diagonal matrix of eigenvalues.
The noise level $\sigma$ is the log-determinant of the disturbance matrix $E_i$. Thus
the eigenvalues of $D_i$ are normalized such that
$$
	\frac{1}{10} \sum_{i=1}^{10} \log D_{ii} = \sigma .
$$
In the final step of the data generation, we transform the disturbed
covariance matrices $C'_1, \ldots, C'_m$ into the random basis to obtain
the cumulants $\Sigma_1, \ldots, \Sigma_m$ which are the input to our algorithm.

\begin{figure}[h]
  \begin{center}
    \includegraphics{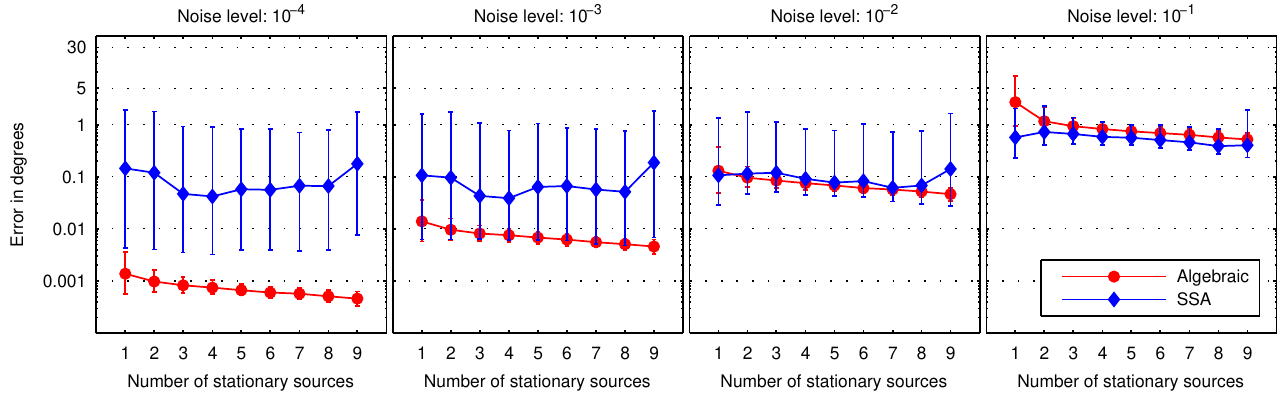}
  \caption{
        \label{fig:alg1}
        Comparison of the algebraic algorithm and the SSA algorithm. Each panel shows
        	the median error of the two algorithms (vertical axis) for varying numbers of
		stationary sources in ten dimensions (horizontal axis). The noise level increases
		from the left to the right panel; the error bars extend from the 25\% to the 75\%
		quantile estimated over 2000 random realizations of the data set.
       }
  \end{center}
\end{figure}

The first set of results is shown in Figure~\ref{fig:alg1}. With increasing noise levels
(from left to right panel) both algorithms become worse. For low noise levels, the algebraic method
yields significantly better results than the optimization-based approach, over all dimensionalities.
For medium and high-noise levels, this situation is reversed.

\begin{figure*}[h]
  \begin{center}
    \includegraphics{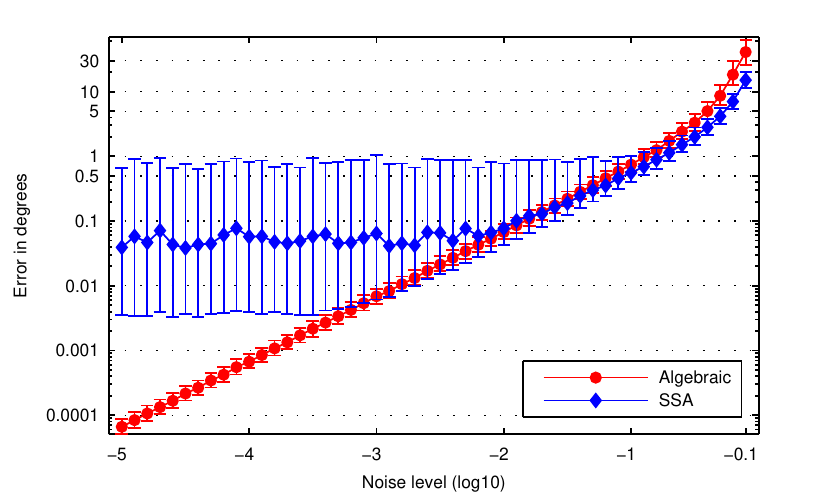}
    \includegraphics{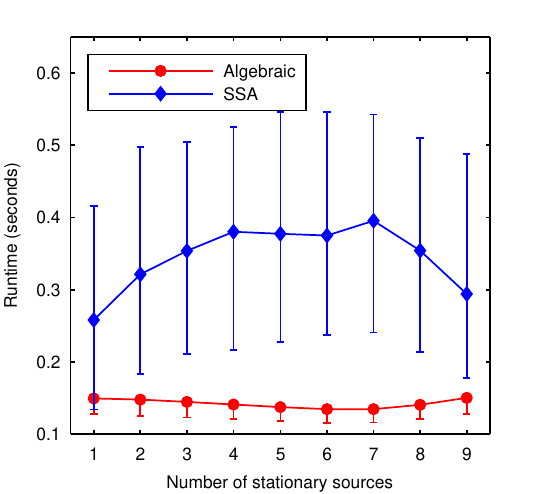}
  \caption{
        \label{fig:alg2} The left panel shows a comparison of the algebraic method and the SSA algorithm
        	over varying noise levels (five stationary sources in ten dimensions), the two curves
		show the median log error. The right panel shows a comparison of the runtime for varying
		numbers of stationary sources. The error bars extend from the 25\% to
		the 75\% quantile estimated over 2000 random realizations of the data set.
       }
  \end{center}
\end{figure*}

In the left panel of Figure~\ref{fig:alg2}, we see that the error level of the algebraic algorithm decreases with the noise level, converging to the exact solution when the noise tends to zero. In contrast, the error of original SSA decreases with noise level, reaching a minimum error baseline which it cannot fall below. In particular, the algebraic method significantly outperforms SSA for low noise levels, whereas SSA
is better for high noise. However, when noise is too high, none of the two algorithms can find the correct solution. In the right panel of Figure~\ref{fig:alg2}, we see that the algebraic method is significantly faster than SSA.

\clearpage

\section{Conclusion}
\label{sec:concl}

In this paper we have shown how a learning problem formulated in terms of
cumulants of probability distributions can be addressed in the framework of computational
algebraic geometry. As an example, we have demonstrated this viewpoint on the problem
of finding a linear map $P \in \R^{d \times D}$ such that a set of projected random
variables $X_1, \ldots, X_m \in \R^D$ have the same distribution,
\begin{align*}
	P X_1 \sim \cdots \sim P X_m .
\end{align*}
To that end, we have introduced the theoretical groundwork for an
algebraic treatment of inexact cumulants estimated from data: the concept of polynomials
that are \textit{generic} up to a certain property that we aim to recover from data.
In particular, we have shown how we can find an approximate exact solution to this
problem using algebraic manipulation of cumulants estimated on samples drawn from
$X_1, \ldots, X_m$. Therefore we have introduced the notion of computing an
\textit{approximate saturation} of an ideal that is optimal in a least-squares sense.
Moreover, using the algebraic problem formulation in terms of generic
polynomials, we have presented compact proofs for a condition on the identifiability
of the true solution.

In essence, instead of searching the surface of a non-convex objective function involving
the cumulants, the algebraic algorithm directly finds the solution by manipulating cumulant
polynomials --- which is the more natural representation of the problem. This viewpoint is
not only theoretically appealing, but conveys practical advantages
that we demonstrate in a numerical comparison to Stationary Subspace Analysis \citep{PRL:SSA:2009}:
the computational cost is significantly lower and the error converges to zero as the
noise level goes to zero. However, the algebraic algorithm requires $m \geq \nD$
random variables with distinct distributions, which is quadratic in the number of dimensions $D$.
This is due to the fact that the algebraic algorithm represents the cumulant polynomials in the vector space of coefficients.
Consequently,
the algorithm is confined to linearly combining the polynomials which describe the solution.
However, the set of solutions is also invariant under multiplication of
polynomials and polynomial division, i.e.~the algorithm does not utilize all information
contained in the polynomial equations. We conjecture that we can construct a
more efficient algorithm, if we also multiply and divide polynomials.

The theoretical and algorithmic techniques introduced in this paper can be applied
to other scenarios in machine learning, including the following examples.
\begin{itemize}
	\item \textbf{Finding properties of probability distributions.} Any inference problem that
		can be formulated in terms of polynomials, in principle,
		amenable our algebraic approach; incorporating polynomial constraints is also straightforward.
		
	\item \textbf{Approximate solutions to polynomial equations.} In machine learning,
		the problem of solving polynomial equations can e.g.~occur in the context of finding
		the solution to a constrained nonlinear optimization problem by means of setting the gradient to
		zero.
		
    \item \textbf{Conditions for identifiability.} Whenever a machine learning problem can be formulated
        in terms of polynomials, identifiability of its generative model can also be phrased in terms of algebraic geometry,
        where a wealth of proof techniques stands at disposition.


\end{itemize}

We argue for a cross-fertilization of approximate computational algebra and machine learning:
the former can benefit from the wealth of techniques for dealing with uncertainty and noisy data;
the machine learning community may find a novel framework for representing learning
problems that can be solved efficiently using symbolic manipulation.

\acks{We thank Marius Kloft and Jan Saputra M\"uller for valuable discussions. We are particularly grateful to Gert-Martin Greuel for his insightful remarks. We also thank Andreas Ziehe for proofreading the manuscript. This work has been supported by the Bernstein Cooperation (German Federal Ministry of Education and Science), Förderkennzeichen 01~GQ~0711,
and the Mathematisches Forschungsinstitut Oberwolfach (MFO). A preprint version of this manuscript has appeared as part of the Oberwolfach Preprint series \citep{Kir11}.
}

\appendix

\section{An example}
\label{app-example}
In this section, we will show by using a concrete example how the Algorithms~\ref{alg:exact_covonly} and \ref{alg:approx_covonly} work. The setup will be the similar to the example presented in the introduction. We will use the notation introduced in Section~\ref{sec:exact}.
\begin{Ex}\rm
In this example, let us consider the simplest non-trivial case: Two random variables $X_1,X_2$ in $\R^2$ such that there is exactly one direction $w\in \R^2$ such that $w^\top X_1=w^\top X_2$. I.e.~the total number of dimensions is $D=2$, the dimension of the set of projections is $d=1$. As in the beginning of Section~\ref{sec:exact}, we may assume that $\R^2=S\oplus S^\perp$ is an orthogonal sum of a one-dimensional space of projections $S$ and its orthogonal complement $S^\perp$. In particular, $S^\perp$ is given as the linear span of a single vector, say $\begin{bmatrix} \alpha & \beta \end{bmatrix}^\top$. The space $S$ is also the linear span of the vector $\begin{bmatrix} \beta & -\alpha \end{bmatrix}^\top.$

Now we partition the sample into $D(D+1)/2-d(d+1)/2=2$ epochs (this is the lower bound needed by Proposition~\ref{Prop:Alg1corr}). From the two epochs we can estimate two covariance matrices $\hat{\Sigma}_1,\hat{\Sigma}_2$. Suppose we have
\begin{align}
 \hat{\Sigma}_1	   & = \begin{bmatrix}
		   				a_{11} & a_{12} \\ a_{21} & a_{22}
		   		      \end{bmatrix} .
\end{align}
From this matrices, we can now obtain a polynomial
\begin{align}
	q_1 & = w^\top (\hat{\Sigma}_1 - I) w \nonumber \\
		   & = w^\top \begin{bmatrix}
		   				a_{11}-1 & a_{12} \\ a_{21} & a_{22}-1
		   		      \end{bmatrix} w \nonumber \\
		   & = (a_{11}-1) T_1^2 + ( a_{12} + a_{21}) T_1 T_2 + (a_{22}-1) T_2^2,
\end{align}
where $w=\begin{bmatrix} T_1 & T_2 \end{bmatrix}^\top.$ Similarly, we obtain a polynomial $q_2$ as the Gram polynomial of $\hat{\Sigma}_2-I.$

First we now illustrate how Algorithm~\ref{alg:exact_covonly}, which works with homogenous exact polynomials, can determine the vector space $S$ from these polynomials. For this, we assume that the estimated polynomials are exact; we will discuss the approximate case later. We can also write $q_1$ and $q_2$ in coefficient expansion:
\begin{align*}
q_1&=q_{11}T_1^2+q_{12}T_1T_2+q_{13}T_2^2\\
q_2&=q_{21}T_1^2+q_{22}T_1T_2+q_{23}T_2^2.
\end{align*}
We can also write this formally in the $(2\times 3)$ coefficient matrix $Q=(q_{ij})_{ij},$ where the polynomials can be reconstructed as the entries in the vector
$$Q\cdot \begin{bmatrix} T_1^2 & T_1T_2 & T_2^2 \end{bmatrix}^\top.$$
Algorithm~\ref{alg:exact_covonly} now calculates the upper triangular form of this matrix. For polynomials, this is equivalent to calculating the last row
\begin{align*}
 &q_{21} q_1 -q_{11} q_2\\
&=[q_{21}q_{12}-q_{11}q_{22}]T_1T_2+[q_{21}q_{13}-q_{11}q_{23}]T_2^2.
\end{align*}
Then we divide out $T_2$ and obtain
$$P=[q_{21}q_{12}-q_{11}q_{22}]T_1+[q_{21}q_{13}-q_{11}q_{23}]T_2.$$
The algorithm now identifies $S^\perp$ as the vector space spanned by the vector
$$\begin{bmatrix} \alpha & \beta \end{bmatrix}^\top
=\begin{bmatrix}q_{21}q_{12}-q_{11}q_{22} & q_{21}q_{13}-q_{11}q_{23}\end{bmatrix}^\top.$$
This already finishes the calculation given by Algorithm~\ref{alg:exact_covonly}, as we now explicitly know the solution
$$\begin{bmatrix} \alpha & \beta \end{bmatrix}^\top$$
To understand why this strategy works, we need to have a look at the input. Namely, one has to note that $q_1$ and $q_2$ are generic homogenous polynomials of degree $2$, vanishing on $S$. That is, we will have $q_i(x)=0$ for $i=1,2$ and all points $x\in S.$ It is not difficult to see that every polynomial fulfilling this condition has to be of the form
$$(\alpha T_1+\beta T_2)(a T_1 +bT_2)$$
for some $a,b\in \C;$ i.e.~a multiple of the equation defining $S$. However we may not know this factorization a priori, in particular we are in general agnostic as to the correct values of $\alpha$ and $\beta$. They have to be reconstructed from the $q_i$ via an algorithm. Nonetheless, a correct solution exists, so we may write
\begin{align*}
q_1&=(\alpha T_1+\beta T_2)(a_1 X +b_1 T_2)\\
q_2&=(\alpha T_1+\beta T_2)(a_2 X +b_2 T_2),
\end{align*}
with $a_i,b_i$ generic, without knowing the exact values a priori. If we now compare to the above expansion in the $q_{ij},$ we obtain the linear system of equations
\begin{align*}
q_{i1}&=\alpha a_i\\
q_{i2}&=\alpha b_i+ \beta a_i\\
q_{i3}&=\beta b_i
\end{align*}
for $i=1,2$, from which we may reconstruct the $a_i,b_i$ and thus $\alpha$ and $\beta$. However, a more elegant and general way of getting to the solution is to bring the matrix $Q$ as above into triangular form. Namely, by assumption, the last row of this triangular form corresponds to the polynomial $P$ which vanishes on $S$. Using the same reasoning as above, the polynomial $P$ has to be a multiple of $(\alpha T_1+\beta T_2)$. To check the correctness of the solution, we substitute the $q_{ij}$ in the expansion of $P$ for $a_i,b_i$, and obtain
\begin{align*}
P=&[q_{21}q_{12}-q_{11}q_{22}]T_1T_2+[q_{21}q_{13}-q_{11}q_{23}]T_2^2\\
=&[\alpha a_2 (\alpha b_1+\beta a_1)-\alpha a_1 (\alpha b_2+\beta a_2)]T_1T_2+[\alpha a_2 \beta b_1 -\alpha a_1 \beta b_2]T_2^2\\
=&[\alpha^2 a_2 b_1-\alpha^2 a_1 b_2]T_1T_2+[\alpha\beta a_2 b_1 -\alpha\beta a_1b_2]T_2^2\\
=&(\alpha T_1 + \beta T_2) \alpha [a_2b_1-a_1b_2] T_2.
\end{align*}
This is $(\alpha T_1+\beta T_2)$ times $T_2$ up to a scalar multiple - from the coefficients of the form $P$, we may thus directly reconstruct the vector $\begin{bmatrix} \alpha & \beta \end{bmatrix}$ up to a common factor and thus obtain a representation for $S,$ since the calculation of these coefficients did not depend on a priori knowledge about $S.$

If the estimation of the $\hat{\Sigma}_i$ and thus of the $q_i$ is now endowed with noise, and we have more than two epochs and polynomials, Algorithm~\ref{alg:approx_covonly} provides the possibility to perform this calculation approximately. Namely, Algorithm~\ref{alg:approx_covonly} finds a linear combination of the $q_i$ which is approximately of the form $T_D\ell$ with a linear form $\ell$ in the variables $T_1,T_2$. The Young-Eckart Theorem guarantees that we obtain a consistent and least-squares-optimal estimator for $P$, similarly to the exact case. The reader is invited to check this by hand as an exercise. 
\end{Ex}

Now the observant reader may object that we may have simply obtained the linear form $(\alpha T_1+\beta T_2)$ and thus $S$ directly from factoring $q_1$ and $q_2$ and taking the unique common factor. This is true, but this strategy can only be applied in the very special case $D-d=1.$ To illustrate the additional difficulties in the general case, we repeat the above example for $D=4$ and $d=2$ for the exact case: 
\begin{Ex}\rm
In this example, we need already $D(D+1)/2-d(d+1)/2=7$ polynomials $q_1,\dots, q_7$ to solve the problem with Algorithm~\ref{alg:exact_covonly}. As above, we can write
\begin{align*}
q_i=&q_{i1}T_1^2+q_{i2}T_1T_2+q_{i3}T_1T_3+q_{i4}T_1T_4+q_{i5}T_2^2\\
&+q_{i6}T_2T_3+q_{i7}T_2T_4+q_{i8}T_3^2+q_{i9}T_3T_4+q_{i,10}T_4^2\\
\end{align*}
for $i=1,\dots, 7$, and again we can write this in a $(7\times 10)$ coefficient matrix $Q=(q_{ij})_{ij}$. In Algorithm~\ref{alg:exact_covonly}, this matrix is brought into triangular form. The last row of this triangular matrix will thus correspond to a polynomial of the form
$$P =p_7T_2T_4+p_{8}T_3^2+p_{9}T_3T_4+p_{10}T_4^2$$
A polynomial of this form is not divisible by $T_4$ in general. However, Proposition~\ref{Prop:Alg1corr} guarantees us that the coefficient $p_8$ is always zero due to our assumptions. So we can divide out $T_4$ to obtain a linear form
$$p_7T_2+p_{9}T_3+p_{10}T_4.$$
This is one equation defining the linear space $S$. One obtains another equation in the variables $T_1,T_2,T_3$ if one, for example, inverts the numbering of the variables $1-2-3-4$ to $4-3-2-1$. Two equations suffice to describe $S$, and so Algorithm~\ref{alg:exact_covonly} yields the correct solution.

As in the example before, it can be checked by hand that the coefficient $p_7$ indeed vanishes, and the obtained linear equations define the linear subspace $S$. For this, one has to use the classical result from algebraic geometry that every $q_i$ can be written as
$$q_i=\ell_1 P_1+\ell_2 P_2,$$
where the $\ell_i$ are fixed but arbitrary linear forms defining $S$ as their common zero set, and the $P_i$ are some linear forms determined by $q_i$ and the $\ell_i$ (this is for example a direct consequence of Hilbert's Nullstellensatz). Caution is advised as the equations involved become very lengthy - while not too complex - already in this simple example. So the reader may want to check only that the coefficient $p_8$ vanishes as claimed.

\end{Ex}

\section{Algebraic Geometry of Genericity}
\label{app-generic}

In the paper, we have reformulated a problem of comparing probability distributions in algebraic terms. For the problem to be well-defined, we need the concept of genericity for the cumulants. The solution can then be determined as an ideal generated by generic homogenous polynomials vanishing on a linear subspace. In this supplement, we will extensively describe this property which we call genericity and derive some simple consequences.

Since genericity is an algebraic-geometric concept, knowledge about basic algebraic geometry will be required for an understanding of this section. In particular, the reader should be at least familiar with the following concepts before reading this section: Polynomial rings, ideals, radicals, factor rings, algebraic sets, algebra-geometry correspondence (including Hilbert's Nullstellensatz), primary decomposition, height resp.~dimension theory in rings. A good introduction into the necessary framework can be found in the book of \cite{Cox}.

\label{sec:Alg-Generic}
\subsection{Definition of genericity}
\label{sec:gendef}
In the algebraic setting of the paper, we would like to calculate the radical of an ideal
$$\calI=\langle q_1,\dots, q_{m-1}, f_1,\dots, f_{m-1}\rangle.$$
This ideal $\calI$ is of a special kind: its generators are random, and are only subject to the constraints that they vanish on the linear subspace $S$ to which we project, and that they are homogenous of fixed degree. In order to derive meaningful results on how $\calI$ relates to $S$, or on the solvability of the problem, we need to model this kind of randomness.

In this section, we introduce a concept called genericity. Informally, a generic situation is a situation without pathological degeneracies. In our case, it is reasonable to believe that apart from the conditions of homogeneity and the vanishing on $S$, there are no additional degeneracies in the choice of the generators. So, informally spoken, the ideal $\calI$ is generated by generic homogenous elements vanishing on $S.$ This section is devoted to developing a formal theory in order to address such generic situations efficiently.

The concept of genericity is already widely used in theoretical computer science, combinatorics or discrete mathematics; there, it is however often defined inexactly or not at all, or it is only given as an ad-hoc definition for the particular problem. On the other hand, genericity is a classical concept in algebraic geometry, in particular in the theory of moduli. The interpretation of generic properties as probability-one-properties is also a known concept in applied algebraic geometry, e.g.~algebraic statistics. However, the application of probability distributions and genericity to the setting of generic ideals, in particular in the context of conditional probabilities, are original to the best of our knowledge, though not being the first one to involve generic resp.~general polynomials, see \citep{Iar84}. Generic polynomials and ideals have been also studied in \citep{Fro94}. A collection of results on generic polynomials and ideals which partly overlap with ours may also be found in the recent paper \citep{Par10}.

Before continuing to the definitions, let us explain what genericity should mean. Intuitively, generic objects are objects without unexpected pathologies or degeneracies. For example, if one studies say $n$ lines in the real plane, one wants to exclude pathological cases where lines lie on each other or where many lines intersect in one point. Having those cases excluded means examining the ``generic'' case, i.e. the case where there are $n(n+1)/2$ intersections, $n(n+1)$ line segments and so forth. Or when one has $n$ points in the plane, one wants to exclude the pathological cases where for example there are three affinely dependent points, or where there are more sophisticated algebraic dependencies between the points which one wants to exclude, depending on the problem.

In the points example, it is straightforward how one can define genericity in terms of sampling from a probability distribution:
one could draw the points under a suitable continuous probability distribution from real two-space. Then, saying that the points are ``generic'' just amounts to examine properties which are true with probability one for the $n$ points. Affine dependencies for example would then occur with probability zero and are automatically excluded from our interest. One can generalize this idea to the lines example: one can parameterize the lines by a parameter space, which in this case is two-dimensional (slope and ordinate), and then sample lines uniformly distributed in this space (one has of course to make clear what this means).  For example, lines lying on each other or more than two lines intersecting at a point would occur with probability zero, since the part of parameter space for this situation would have measure zero under the given probability distribution.

When we work with polynomials and ideals, the situation gets a bit more complicated, but the idea is the same. Polynomials are uniquely determined by their coefficients, so they can naturally be considered as objects in the vector space of their coefficients. Similarly, an ideal can be specified by giving the coefficients of some set of generators. Let us make this more explicit: suppose first we have given a single polynomial $f\in \C[X_1,\dots X_D]$ of degree $k$.

In multi-index notation, we can write this polynomial as a finite sum
$$f=\sum_{\alpha\in \NN^D}c_\alpha X^\alpha\,\quad \mbox{with}\;c_\alpha\in \C.$$
This means that the possible choices for $f$ can be parameterized by the ${D+k \choose k}$ coefficients $c_I$ with $\|I\|_1\le k.$ Thus polynomials of degree $k$ with complex coefficients can be parameterized by complex ${D+k \choose k}$-space.

Algebraic sets can be similarly parameterized by parameterizing the generators of the corresponding ideal. However, this correspondence is highly non-unique, as different generators may give rise to the same zero set. While the parameter space can be made unique by dividing out redundancies, which gives rise to the Hilbert scheme, we will instead use the redundant, though pragmatic characterization in terms of a finite dimensional vector space over $\C$ of the correct dimension.

We will now fix notation for the parameter space of polynomials and endow it with algebraic structure. The extension to ideals will then be derived later. Let us write $\calM_k$ for complex ${D+k \choose k}$-space (we assume $D$ as fixed), interpreting it as a parameter space for the polynomials of degree $k$ as shown above. Since the parameter space $\calM_k$ is isomorphic to complex ${D+k\choose k}$-space, we may speak about algebraic sets in $\calM_k$. Also, $\calM_k$ carries the complex topology induced by the topology on $\R^{2k}$ and by topological isomorphy the Lebesgue measure; thus it also makes sense to speak about probability distributions and random variables on $\calM_k.$ This dual interpretation will be the main ingredient in our definition of genericity, and will allow us to relate algebraic results on genericity to the probabilistic setting in the applications. As $\calM_k$ is a topological space, we may view any algebraic set in $\calM_k$ as an event if we randomly choose a polynomial in $\calM_k$:
\begin{Def}
Let $X$ be a random variable with values in $\calM_k$. Then an event for $X$ is called {\it algebraic event} or {\it algebraic property} if the corresponding event set in $\calM_k$ is an algebraic set. It is called {\it irreducible} if the corresponding event set in $\calM_k$ is an irreducible algebraic set.
\end{Def}

If an event $A$ is irreducible, this means that if we write $A$ as the event ``$A_1$ and $A_2$'', for algebraic events $A_1,A_2$, then $A=A_1$, or $A=A_2.$ We now give some examples for algebraic properties.

\begin{Ex}\rm\label{Ex:algevts}
The following events on $\calM_k$ are algebraic:
\begin{enumerate}
  \item The sure event.
  \item The empty event.
  \item The polynomial is of degree $n$ or less.
  \item The polynomial vanishes on a prescribed algebraic set.
  \item The polynomial is contained in a prescribed ideal.
  \item The polynomial is homogenous.
  \item The polynomial is a square.
  \item The polynomial is reducible.
\end{enumerate}
Properties 1-5 are additionally irreducible.

We now show how to prove these claims: 1-2 are clear, we first prove that properties 3-5 are algebraic and irreducible. By definition, it suffices to prove that the subset of $\calM_k$ corresponding to those polynomials is an irreducible algebraic set. We claim: in any of those cases, the subset in question is moreover a linear subspace, and thus algebraic and irreducible. This can be easily verified by checking directly that if $f_1,f_2$ fulfill the property in question, then $f_1+\alpha f_2$ also fulfills the property.

Property 6 is algebraic, since it can be described as the disjunction of the properties ``The polynomial is homogenous and of degree $n$'' for all $n\le k.$ Those single properties can be described by linear subspaces of $\calM_k$ as above, thus property 6 is parameterized by the union of those linear subspaces. In general, these are orthogonal, so property 6 is not irreducible.

Property 7 is algebraic, as we can check it through the vanishing of a system of generalized discriminant polynomials. One can show that it is also irreducible since the subset of $\calM_k$ in question corresponds to the image of a Veronese map (homogenization to degree $k$ is a strategy); however, since we will not need such a result, we do not prove it here.

Property 8 is algebraic, since factorization can also be checked by sets of equations. One has to be careful here though, since those equations depend on the degrees of the factors. For example, a polynomial of degree $4$ may factor into two polynomials of degree $1$ and $3$, or in two polynomials of degree $2$ each. Since in general each possible combination defines different sets of equations and thus different algebraic subsets of $\calM_k$, property 8 is in general not irreducible (for $k\le 3$ it is).
\end{Ex}

The idea defining a choice of polynomial as generic follows the intuition of the affirmed non-sequitur: a generic, resp.~generically chosen polynomial should not fulfill any algebraic property. A generic polynomial, having a particular simple (i.e.~irreducible) algebraic property, should not fulfill any other algebraic property which is not logically implied by the first one. Here, algebraic properties are regarded as the natural model for restrictive and degenerate conditions, while their logical negations are consequently interpreted as generic, as we have seen in Example~\ref{Ex:algevts}. These considerations naturally lead to the following definition of genericity in a probabilistic context:

\begin{Def}\label{Def:genrand}
Let $X$ be a random variable with values in $\calM_k$. Then $X$ is called {\it generic}, if for any irreducible algebraic events $A,B,$ the following holds:

The conditional probability $P_X(A|B)$ exists and vanishes if and only if $B$ does not imply $A$.
\end{Def}
In particular, $B$ may also be the sure event.

Note that without giving a further explication, the conditional probability $P_X(A|B)$ is not well-defined, since we condition on the event $B$ which has probability zero. There is also no unique way of remedying this, as for example the Borel-Kolmogorov paradox shows. In section~\ref{sec:genalt}, we will discuss the technical notion which we adopt to ensure well-definedness.

Intuitively, our definition means that an event has probability zero to occur unless it is logically implied by the assumptions. That is, degenerate dependencies between events do not occur.

For example, non-degenerate multivariate Gaussian distributions or Gaussian mixture distributions on $\calM_k$ are generic distributions. More general, any positive continuous probability distribution which can be approximated by Gaussian mixtures is generic (see Example~\ref{Ex:gen-nongen}). Thus we argue that non-generic random variables are very pathological cases. Note however, that our intention is primarily not to analyze the behavior of particular fixed generic random variables (this is part of classical statistics). Instead, we want to infer statements which follow not from the particular structure of the probability function, but solely  from the fact that it is generic, as these statements are intrinsically implied by the conditional postulate in Definition~\ref{Def:genrand} alone. We will discuss the definition of genericity and its implications in more detail in section~\ref{sec:genalt}.

With this definition, we can introduce the terminology of a generic object: it is a generic random variable which is object-valued.

\begin{Def}
We call a generic random variable with values in $\calM_k$ a generic polynomial of degree $k.$ When the degree $k$ is arbitrary, but fixed (and still $\ge 1$), we will say that $f$ is a generic polynomial, or that $f$ is generic, if it is clear from the context that $f$ is a polynomial. If the degree $k$ is zero, we will analogously say that $f$ is a generic constant.\\

We call a set of constants or polynomials $f_1,\dots, f_m$ generic if they are generic and independent.\\

We call an ideal generic if it is generated by a set of $m$ generic polynomials.\\

We call an algebraic set generic if it is the vanishing set of a generic ideal.\\

Let $\calP$ be an algebraic property on a polynomial, a set of polynomials, an ideal, or an algebraic set (e.g.~homogenous, contained in an ideal et.). We will call a polynomial, a set of polynomials, or an ideal, a {\it generic} $\calP$ polynomial, set, or ideal, if it the conditional of a generic random variable with respect to $\calP$.\\

If $\calA$ is a statement about an object (polynomial, ideal etc), and $\calP$ an algebraic property, we will say briefly ``A generic $\calP$ object is $\calA$'' instead of saying ``A generic $\calP$ object is $\calA$ with probability one''.
\end{Def}

Note that formally, these objects are all polynomial, ideal, algebraic set etc -valued random variables. By convention, when we state something about a generic object, this will be an implicit probability-one statement. For example, when we say\\

``A generic green ideal is blue'',\\

this is an abbreviation for the by definition equivalent but more lengthy statement\\

``Let $f_1,\dots, f_m$ be independent generic random variables with values in $\calM_{k_1},\dots,\calM_{k_m}.$ If the ideal $\langle f_1,\dots, f_m\rangle$ is green, then with probability one, it is also blue - this statement is independent of the choice of the $k_i$ and the choice of which particular generic random variables we use to sample.\\

On the other hand, we will use the verb ``generic'' also as a qualifier for ``constituting generic distribution''. So for example, when we say\\

``The Z of a generic red polynomial is a generic yellow polynomial'',\\

this is an abbreviation of the statement\\

``Let $X$ be a generic random variable on $\calM_k,$ let $X'$ be the yellow conditional of $X$. Then the Z of $X'$ is the red conditional of some generic random variable - in particular this statement is independent of the choice of $k$ and the choice of $X$.''\\

It is important to note that the respective random variables will not be made explicit in the following subsections, since the statements will rely only on its property of being generic, and not on its particular structure which goes beyond being generic.\\

As an application of these concepts, we may now formulate the problem of comparing cumulants in terms of generic algebra:

\begin{Prob}\label{Prob:SSA-alg}
Let $\fraks=\Id(S)$, where $S$ is an unknown $d$-dimensional subspace of $\C^D$. Let
$$\calI=\langle f_1,\dots, f_m \rangle$$
with $f_i\in \fraks$ generic of fixed degree each (in our case, one and two), such that $\sqrt{\calI}=\fraks.$

Then determine a reduced Groebner basis (or another simple generating system) for $\fraks.$
\end{Prob}

As we will see, genericity is the right concept to model random sampling of polynomials, as we will derive special properties of the ideal $\calI$ which follow from the genericity of the $f_i$.

\subsection{Zero-measure conditionals, and relation to other types of genericity}\label{sec:genalt}
In this section, se will discuss the definition of genericity in Definition~\ref{Def:genrand} and ensure its well-definedness. Then we will invoke alternative definitions for genericity and show their relation to our probabilistic intuitive approach from section~\ref{sec:gendef}. As this section contains technical details and is not necessary for understanding the rest of the appendix, the reader may opt to skip it.

An important concept in our definition of genericity in Definition~\ref{Def:genrand} is the conditional probability $P_X(A|B)$. As $B$ is an algebraic set, its probability $P_X(B)$ is zero, so the Bayesian definition of conditional cannot apply. There are several ways to make it well-defined; in the following, we explain the Definition of conditional we use in Definition~\ref{Def:genrand}. The definition of conditional we use is one which is also often applied in this context.
\begin{Rem}\label{Rem:genmeas}
Let $X$ be a real random variable (e.g.~with values in $\calM_k$) with probability measure $\mu$. If $\mu$ is absolutely continuous, then by the theorem of Radon-Nikodym, there is a unique continuous density $p$ such that
$$\mu(U)=\int_U p\, d\lambda$$
for any Borel-measurable set $U$ and the Lebesgue measure $\lambda$. If we assume that $p$ is a continuous function, it is unique, so we may define a restricted measure $\mu_B$ on the event set of $B$ by setting
$$\nu(U)=\int_U p\, dH,$$
for Borel subsets of $U$ and the Hausdorff measure $H$ on $B$. If $\nu(B)$ is finite and non-zero, i.e.~$\nu$ is absolutely continuous with respect to $H$, then it can be renormalized to yield a conditional probability measure $\mu(.)|_B=\nu(.)/\nu(B).$ The conditional probability $P_X(A|B)$ has then to be understood as
$$P_X(A|B)=\int_{B}\mathbbm{1} (A\cap B)\,d\mu\mid_B,$$
whose existence in particular implies that the Lebesgue integrals $\nu (B)$ are all finite and non-zero.
\end{Rem}

As stated, we adopt this as the definition of conditional probability for algebraic sets $A$ and $B$. It is important to note that we have made implicit assumptions on the random variable $X$ by using the conditionals $P_X(A|B)$ in Remark~\ref{Rem:genmeas} (and especially by assuming that they exist): namely, the existence of a continuous density function and existence, finiteness, and non-vanishing of the Lebesgue integrals. Similarly, by stating Definition~\ref{Def:genrand} for genericity, we have made similar assumptions on the generic random variable $X$, which can be summarized as follows:

\begin{Ass}\label{Ass:gen}
$X$ is an absolutely continuous random variable with continuous density function $p$, and for every algebraic event $B$, the Lebesgue integrals
$$\int_B p\, dH,$$
where $H$ is the Hausdorff measure on $B$, are non-zero and finite.
\end{Ass}

This assumption implies the existence of all conditional probabilities $P_X(A|B)$ in Definition~\ref{Def:genrand}, and are also necessary in the sense that they are needed for the conditionals to be well-defined. On the other hand, if those assumptions are fulfilled for a random variable, it is  automatically generic:

\begin{Rem}\label{Prop:gen=cont}
Let $X$ be a $\calM_k$-valued random variable, fulfilling the Assumptions in~\ref{Ass:gen}. Then, the probability density function of $X$ is strictly positive. Moreover, $X$ is a generic random variable.
\end{Rem}
\begin{proof}
Let $X$ be a $\calM_k$-valued random variable fulfilling the Assumptions in~\ref{Ass:gen}. Let $p$ be its continuous probability density function.

We first show positivity: If $X$ would not be strictly positive, then $p$ would have a zero, say $x$. Taking $B=\{x\},$ the integral $\int_B p\, dH$ vanishes, contradicting the assumption.

Now we prove genericity, i.e.~that for arbitrary irreducible algebraic properties $A,B$ such that $B$ does not imply $A$, the conditional probability $P_X(A|B)$ vanishes. Since $B$ does not imply $A$, the algebraic set defined by $B$ is not contained in $A$. Moreover, as $B$ and $A$ are irreducible and algebraic, $A\cap B$ is also of positive codimension in $B$. Now by assumption, $X$ has a positive continuous probability density function $f$ which by assumption restricts to a probability density on $B$, being also positive and continuous. Thus the integral
$$P_X(A|B)=\int_B \mathbbm{1}_A f(x)\, dH,$$
where $H$ is the Hausdorff measure on $B$, exists. Moreover, it is zero, as we have derived that $A$ has positive codimension in $B$.
\end{proof}

This means that already under mild assumptions, which merely ensure well-definedness of the statement in the Definition~\ref{Def:genrand} of genericity, random variables are generic. The strongest of the comparably mild assumptions are the convergence of the conditional integrals, which allow us to renormalize the conditionals for all algebraic events. In the following example, a generic and a non-generic probability distribution are presented.

\begin{Ex}\rm\label{Ex:gen-nongen}
Gaussian distributions and Gaussian mixture distributions are generic, since for any algebraic set $B$, we have
$$\int_B \mathbbm{1}_{\mathcal{B}(t)}\, dH = O(t^{\dim B}),$$
where $\mathcal{B}(t)=\{x\in\mathbb{R}^n\;;\; \|x\|<t\}$ is the open disc with radius $t.$ Note that this particular bound is false in general and may grow arbitrarily large when we omit $B$ being algebraic, even if $B$ is a smooth manifold.
Thus $P_X(A|B)$ is bounded from above by an integral (or a sum) of the type
$$\int_{0}^\infty\exp(-t^2)t^a\;dt\quad\mbox{with}\; a\in\mathbb{N}$$
which is known to be finite.

Furthermore, sums of generic distributions are again generic; also, one can infer that any continuous probability density dominated by the distribution of a generic density defines again a generic distribution.

An example of a non-generic but smooth distribution is given by the density function
$$p(x,y)=\frac{1}{\mathcal{N}}e^{-x^4y^4}$$
where $\mathcal{N}$ is some normalizing factor. While $p$ is integrable on $\mathbb{R}^2,$ its restriction to the coordinate axes $x=0$ and $y=0$ is constant and thus not integrable.
\end{Ex}

Now we will examine different known concepts of genericity and relate them briefly to the one we have adopted. 

A definition of genericity in combinatorics and geometry which can be encountered in different variations is that there exist no degenerate interpolating functions between the objects:

\begin{Def}\label{Def:gencomb}
Let $P_1,\dots, P_m$ be points in the vector space $\mathbb{C}^n$. Then $P_1,\dots, P_m$ are general position (or generic, general) if no $n+1$ points lie on a hyperplane. Or, in a stronger version: for any $d\in\mathbb{N}$, no (possibly inhomogenous) polynomial of degree $d$ vanishes on ${n+d \choose d}+1$ different $P_i$.
\end{Def}
As $\calM_k$ is a finite dimensional $\mathbb{C}$-vector space, this definition is in principle applicable to our situation. However, this definition is deterministic, as the $P_i$ are fixed and no random variables, and thus preferable when making deterministic statements. Note that the stronger definition is equivalent to postulating general position for the points $P_1,\dots, P_m$ in any polynomial kernel feature space.

Since not lying on a hyperplane (or on a hypersurface of degree $d$) in $\mathbb{C}^n$ is a non-trivial algebraic property for any point which is added beyond the $n$-th (resp.~the ${n+d\choose d}$-th) point $P_i$ (interpreted as polynomial in $\calM_k$), our definition of genericity implies general position. This means that generic polynomials $f_1,\dots, f_m\in\calM_k$ (almost surely) have the deterministic property of being in general position as stated in Definition~\ref{Def:gencomb}. A converse is not true for two reasons: first, the $P_i$ are fixed and no random variables. Second, even if one would define genericity in terms of random variables such that the hyperplane (resp.~hypersurface) conditions are never fulfilled, there are no statements made on conditionals or algebraic properties other than containment in a hyperplane, also Lebesgue zero sets are not excluded from occuring with positive probability.

Another example where genericity classically occurs is algebraic geometry, where it is defined rather general for moduli spaces. While the exact definition may depend on the situation or the particular moduli space in question, and is also not completely consistent, in most cases, genericity is defined as follows: general, or generic, properties are properties which hold on a Zariski-open subset of an (irreducible) variety, while very generic properties hold on a countable intersection of Zariski-open subsets  (which are thus paradoxically ''less'' generic than general resp.~generic properties in the algebraic sense, as any general resp.~generic property is very generic, but the converse is not necessarily true). In our special situation, which is the affine parameter space of tuples of polynomials, these definitions can be rephrased as follows:
\begin{Def}\label{Def:gencomb}
Let $B\subseteq\mathbb{C}^k$ be an irreducible algebraic set, let $P=(f_1,\dots, f_m)$ be a tuple of polynomials, viewed as a point in the parameter space $B.$ Then a statement resp.~property $A$ of $P$ is called very generic if it holds on the complement of some countable union of algebraic sets in $B.$ A statement resp.~property $A$ of $P$ is called general (or generic) if it holds on the complement of some finite union of algebraic sets in $B.$
\end{Def}
This definition is more or less equivalent to our own; however, our definition adds the practical interpretation of generic/very generic/general properties being true with probability one, while their negations are subsequently true with probability zero. In more detail, the correspondence is as follows:
If we restrict ourselves only to algebraic properties $A$, it is equivalent to say that the property $A$ is very generic, or general for the $P$ in $B$, and to say with our original definition that a generic $P$ fulfilling $B$ is also $A$; since if $A$ is by assumption an algebraic property, it is both an algebraic set and a complement of a finite (countable) union of algebraic sets in an irreducible algebraic set, so $A$ must be equal to an irreducible component of $B$; since $B$ is irreducible, this implies equality of $A$ and $B$. On the other hand, if $A$ is an algebraic property, it is equivalent to say that the property not-$A$ is very generic, or general for the $P$ in $B$, and to say with our original definition that a generic $P$ fulfilling $B$ is not $A$ - this corresponds intuitively to the probability-zero condition $P(A|B)=0$ which states that non-generic cases do not occur. Note that by assumption, not-$A$ is then always the complement of a finite union of algebraic sets.

\subsection{Arithmetic of generic polynomials}
In this subsection, we study how generic polynomials behave under classical operations in rings and ideals. This will become important later when we study generic polynomials and ideals.\\

To introduce the reader to our notation of genericity, and since we will use the presented facts and similar notations implicitly later, we prove the following

\begin{Lem}\label{Lem:Gen-arith}
Let $f\in\C [X_1,\dots, X_D]$ be generic of degrees $k.$ Then:\\
\itboxx{i} The product $\alpha f$ is generic of degree $k$ for any fixed $\alpha\in\C\setminus \{\0\}.$\\
\itboxx{ii} The sum $f + g$ is generic of degree $k$ for any $g\in \C [X_1,\dots, X_D]$ of degree $k$ or smaller.\\
\itboxx{iii} The sum $f + g$ is generic of degree $k$ for any generic $g\in \C [X_1,\dots, X_D]$ of degree $k$ or smaller.
\end{Lem}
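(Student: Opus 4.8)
The cleanest route is to reduce all three parts to the sufficient criterion of Remark~\ref{Prop:gen=cont}: a $\calM_k$-valued random variable is generic as soon as it satisfies Assumption~\ref{Ass:gen}, i.e.\ it is absolutely continuous with a \emph{strictly positive, continuous} density $p$ and $\int_B p\,dH\in(0,\infty)$ for every algebraic event $B$ (with $H$ the Hausdorff measure on $B$). Hence for each part it suffices to write down the density of the transformed polynomial and verify these properties; the conditional-probability clause of Definition~\ref{Def:genrand} then comes for free.

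For (i) and (ii) the transformed polynomial is the image of $f$ under an \emph{affine bijection} $\varphi$ of $\calM_k$ --- the linear isomorphism $v\mapsto\alpha v$ (using $\alpha\neq 0$) in case (i), and the translation $v\mapsto v+g$ in case (ii), which is well defined on $\calM_k$ because $\deg g\le k$. If $p$ is the density of $f$, then $\varphi(f)$ has density $v\mapsto c\,p(\varphi^{-1}(v))$ with a positive constant Jacobian factor $c$; this is again continuous and strictly positive. Since an affine bijection sends an algebraic event $B$ to the algebraic event $\varphi(B)$ of the same dimension and distorts the associated Hausdorff measure by a constant positive factor, $\int_{\varphi(B)} c\,p\circ\varphi^{-1}\,dH$ is finite and nonzero whenever $\int_B p\,dH$ is. So $\varphi(f)$ satisfies Assumption~\ref{Ass:gen} and is generic by Remark~\ref{Prop:gen=cont}. (Equivalently one may argue directly with conditionals: $\varphi$ preserves irreducibility and the implication relation between algebraic events, and $P_{\varphi(f)}(A\mid B)=P_f(\varphi^{-1}(A)\mid\varphi^{-1}(B))$ since the Jacobian cancels in the normalization of Remark~\ref{Rem:genmeas}.)

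For (iii) I would \emph{condition on} $g$. For each fixed value $g_0$, part (ii) shows $f+g_0$ is generic; by independence of $f$ and $g$ the law of $f+g$ is therefore the mixture over $g_0$ of these generic laws, i.e.\ its density is the convolution $p_f*p_g$. Strict positivity and continuity of $p_f*p_g$ are immediate from those of $p_f,p_g$, so the only remaining point is the integrability clause: $\int_B(p_f*p_g)\,dH<\infty$ for every algebraic $B$. By Tonelli this equals $\int p_g(y)\bigl(\int_{B-y}p_f\,dH\bigr)\,d\lambda(y)$, a $p_g$-weighted average of the integrals of $p_f$ over the translates $B-y$, each of which is finite by (ii); one then concludes using the fact recorded in Example~\ref{Ex:gen-nongen} that sums of generic distributions are again generic (equivalently, that a density dominated by a generic density is generic).

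\textbf{Expected main obstacle.} Parts (i) and (ii) are essentially bookkeeping about affine changes of variables, so the real work is in (iii): showing that the integrability part of Assumption~\ref{Ass:gen} survives convolution, i.e.\ that the weighted average $\int p_g(y)\bigl(\int_{B-y}p_f\,dH\bigr)\,d\lambda(y)$ does not diverge. This is exactly the place where the mild regularity encoded in Assumption~\ref{Ass:gen} (verified for Gaussian and Gaussian-mixture densities in Example~\ref{Ex:gen-nongen}) is genuinely needed, and in the write-up I would lean on the domination/mixture statements of Example~\ref{Ex:gen-nongen} rather than re-deriving the estimate from scratch.
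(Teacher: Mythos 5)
Your approach is genuinely different from the paper's. The paper's proof is a terse three-sentence sketch: (i) and (ii) are declared to hold because scalar multiplication and translation merely rescale or shift the coefficient vector without introducing algebraic relations; (iii) is argued algebraically --- any algebraic dependency among the coefficients of $f+g$ would force one on $f$ or $g$ alone, or violate their independence. Your reduction of (i) and (ii) to the sufficient criterion of Remark~\ref{Prop:gen=cont} via Assumption~\ref{Ass:gen} is sound and in fact more careful than the paper: an affine bijection of $\calM_k$ preserves algebraic events, strict positivity and continuity of the density, and (up to a constant Jacobian factor) the Hausdorff integrals, so Assumption~\ref{Ass:gen} transports.

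For (iii), however, you correctly flag the integrability of the convolution as the crux and then do not actually close it. The appeal to Example~\ref{Ex:gen-nongen} does not carry your case: the ``sums of generic distributions are again generic'' statement there refers to \emph{finite mixtures} of generic densities (in the Gaussian-mixture context), not to the law of a sum of independent random variables, which is the \emph{convolution} $p_f * p_g$ --- a continuum mixture $\int p_f(\,\cdot\,-y)\,p_g(y)\,d\lambda(y)$. Your ``equivalently (domination)'' parenthetical is not equivalent either, and would require exhibiting a dominating generic density for $p_f*p_g$, which you do not do. Finiteness of each inner integral $\int_{B-y}p_f\,dH$ does not by itself bound the weighted outer integral, since Assumption~\ref{Ass:gen} gives no a priori control on how these inner integrals vary with $y$. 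So on the measure-theoretic route you still owe a proof that Assumption~\ref{Ass:gen} is stable under convolution; the paper's own argument for (iii) sidesteps this entirely by reasoning algebraically about dependencies rather than about densities.
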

\begin{proof}
(i) is clear since the coefficients of $g_1$ are multiplied only by a constant. (ii) follows directly from the definitions since adding a constant $g$ only shifts the coefficients without changing genericity. (iii) follows since $f,g$ are independently sampled: if there were algebraic dependencies between the coefficients of $f+g$, then either $f$ or $g$ was not generic, or the $f,g$ are not independent, which both would be a contradiction to the assumption.
\end{proof}

Recall again what this Lemma means: for example, Lemma~\ref{Lem:Gen-arith} (i) does not say, as one could think:\\

``Let $X$ be a generic random variable with values in the vector space of degree $k$ polynomials. Then $X=\alpha X$ for any $\alpha\in \C\setminus \{0\}.$''\\

The correct translation of Lemma~\ref{Lem:Gen-arith} (i) is:\\

``Let $X$ be a generic random variable with values in the vector space of degree $k$ polynomials. Then $X'=\alpha X$ for any fixed $\alpha\in \C\setminus \{0\}$ is a generic random variable with values in the vector space of degree $k$ polynomials''\\

The other statements in Lemma~\ref{Lem:Gen-arith} have to be interpreted similarly.\\

The following remark states how genericity translates through dehomogenization:
\begin{Lem}\label{Lem:dehom}
Let $f\in\C [X_1,\dots, X_D]$ be a generic homogenous polynomial of degree $d.$ \\
Then the dehomogenization $f(X_1,\dots, X_{D-1},1)$ is a generic polynomial of degree $d$ in the polynomial ring $\C [X_1,\dots, X_{D-1}].$\\

Similarly, let $\fraks\idof \C [X_1,\dots, X_D]$ be a generic homogenous ideal. Let $f\in \fraks$ be a generic homogenous polynomial of degree $d.$ \\
Then the dehomogenization $f(X_1,\dots, X_{D-1},1)$ is a generic polynomial of degree $d$ in the dehomogenization of $\fraks.$
\end{Lem}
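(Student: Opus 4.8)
The plan is to deduce both statements from one structural fact: \emph{genericity is preserved under a linear isomorphism of parameter spaces}. The first point is that dehomogenization $\phi\colon f\mapsto f(X_1,\dots,X_{D-1},1)$ restricts to a $\C$-linear isomorphism from the space $V$ of homogeneous degree-$d$ polynomials in $\C[X_1,\dots,X_D]$ onto the space $W\cong\calM_d$ of polynomials of degree $\le d$ in $\C[X_1,\dots,X_{D-1}]$; its inverse is homogenization to degree $d$, we have $\dim_\C V=\binom{D+d-1}{d}=\dim_\C W$, and both $\phi$ and $\phi^{-1}$ are linear (hence polynomial) in the coefficients.

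Next I would establish the structural fact: if $X$ is a generic random variable with values in a finite-dimensional $\C$-vector space $V$ and $\psi\colon V\to W$ is a linear isomorphism, then $\psi(X)$ is generic. Since $\psi$ and $\psi^{-1}$ are polynomial maps, $B'\mapsto\psi^{-1}(B')$ is an inclusion- and irreducibility-preserving bijection between the algebraic subsets of $W$ and those of $V$ which respects the implication relation between algebraic events. Recall from the discussion of Assumption~\ref{Ass:gen} (and Remark~\ref{Prop:gen=cont}) that $X$ has a continuous density $p$ for which $\int_B p\,dH$ is finite and non-zero for every algebraic $B\subseteq V$, where $H$ is the Hausdorff measure on $B$. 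Then $\psi(X)$ has the continuous density $c\cdot(p\circ\psi^{-1})$ with $c>0$ the constant Jacobian of $\psi^{-1}$; and for any algebraic $B'\subseteq W$ the area formula applied to the bi-Lipschitz map $\psi|_{\psi^{-1}(B')}\colon\psi^{-1}(B')\to B'$ --- whose Jacobian along $\psi^{-1}(B')$ is bounded between two positive constants because $\psi$ is linear --- shows that $\int_{B'}p_{\psi(X)}\,dH_{B'}$ lies between two positive constant multiples of $\int_{\psi^{-1}(B')}p\,dH$, hence is finite and non-zero. Thus $\psi(X)$ satisfies Assumption~\ref{Ass:gen} and is generic by Remark~\ref{Prop:gen=cont}. (Equivalently one checks directly from Definition~\ref{Def:genrand} that $P_{\psi(X)}(A\mid B')=P_X(\psi^{-1}(A)\mid\psi^{-1}(B'))$, using the same change of variables for the restricted measures of Remark~\ref{Rem:genmeas}.)

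Taking $\psi=\phi$ yields the first statement, since a generic homogeneous polynomial of degree $d$ is by definition a generic random variable with values in $V$, so $\phi(f)$ is a generic random variable with values in $W\cong\calM_d$, i.e.\ a generic polynomial of degree $d$ in $\C[X_1,\dots,X_{D-1}]$. For the ideal version, write $\fraks_d$ for the degree-$d$ graded piece of the homogeneous ideal $\fraks$; then ``$f\in\fraks$ generic homogeneous of degree $d$'' means that $f$ is a generic random variable with values in the $\C$-vector space $\fraks_d$, and $\phi$ restricts to a linear isomorphism of $\fraks_d$ onto $\phi(\fraks_d)$. One then identifies $\phi(\fraks_d)$ with the space of degree-$\le d$ elements of the dehomogenized ideal $\fraks^{\mathrm{deh}}$: the inclusion ``$\subseteq$'' is immediate, and the reverse inclusion holds because $\fraks$ is an ideal, so that any $h\in\fraks^{\mathrm{deh}}$ of degree $\le d$ is the dehomogenization of a suitable homogeneous degree-$d$ element of $\fraks$ obtained by multiplying homogeneous components of a preimage of $h$ by appropriate powers of $X_D$. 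The structural fact then shows $\phi(f)$ is a generic random variable with values in $(\fraks^{\mathrm{deh}})_{\le d}$, i.e.\ a generic polynomial of degree $d$ in $\fraks^{\mathrm{deh}}$.

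The main obstacle is the measure-theoretic bookkeeping inside the structural fact: justifying that, on the smooth locus of an algebraic subvariety (its complement being Hausdorff-null), a real-linear isomorphism distorts the relevant Hausdorff measure only by factors bounded above and below, so that the conditional integrals of Remark~\ref{Rem:genmeas}, together with their finiteness and non-vanishing, transfer. The only other delicate point is the identification $\phi(\fraks_d)=(\fraks^{\mathrm{deh}})_{\le d}$ in the ideal case; this can fail for arbitrary homogeneous ideals (for instance $\langle X_1X_2\rangle$) but holds here because the ideals in question are saturated with respect to $X_D$ --- in particular when $\fraks=\Id(S)$ for $S$ a generic linear subspace.
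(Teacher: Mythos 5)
Your proof follows essentially the same route as the paper's (which in two lines asserts that dehomogenization is a coefficient-space bijection and that genericity transfers across it), but you carefully justify the measure-theoretic transfer that the paper leaves implicit, and you flag a real subtlety the paper glosses over: the identification of $\phi(\fraks_d)$ with the degree-$\le d$ part of the dehomogenization of $\fraks$ requires $\fraks$ to be saturated with respect to $X_D$, as your counterexample $\langle X_1X_2\rangle$ shows. Your remark that this is harmless here because the ideals in question are generic (in the main application, prime ideals of linear subspaces not contained in $\{X_D=0\}$, hence $X_D$-saturated) is the correct fix, and is a worthwhile addition to the paper's terse argument.
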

\begin{proof}
For the first statement, it suffices to note that the coefficients of a homogenous polynomial of degree $d$ in the variables $X_1,\dots, X_D$ are in bijection with the coefficients of a polynomial of degree $d$ in the variables $X_1,\dots, X_{D-1}$ by dehomogenization. For the second part, recall that the dehomogenization of $\fraks$ consists exactly of the dehomogenizations of elements in $\fraks.$ In particular, note that the homogenous elements of $\fraks$ of degree $d$ are in bijection to the elements of degree $d$ in the dehomogenization of $\fraks$. The claims then follows from the definition of genericity.
\end{proof}

\subsection{Generic spans and generic height theorem}
In this subsection, we will derive the first results on generic ideals. We will derive an statement about spans of generic polynomials, and generic versions of Krull's principal ideal and height theorems which will be the main tool in controlling the structure of generic ideals. This has immediate applications for the cumulant comparison problem.

Now we present the first result which can be easily formulated in terms of genericity:

\begin{Prop}\label{Prop:GenVec}
Let $P$ be an algebraic property such that the polynomials with property $P$ form a vector space $V$. Let $f_1,\dots, f_m\in \C[X_1,\dots X_D]$ be generic polynomials satisfying $P.$ Then
$$\rk \lspan (f_1,\dots, f_m)=\min (m, \dim V).$$
\end{Prop}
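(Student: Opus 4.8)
The plan is as follows. One inequality is immediate: $\lspan(f_1,\dots,f_m)$ is spanned by $m$ vectors and each $f_i$ lies in $V$, so $\rk\lspan(f_1,\dots,f_m)\le\min(m,\dim V)$ in all cases. The content is the reverse inequality, which --- by the conventions of the paper --- is to be read as a probability-one statement, and for it I would reduce to the following claim: any $\ell:=\min(m,\dim V)$ generic polynomials satisfying $P$ are $\C$-linearly independent almost surely. Granting this, $\rk\lspan(f_1,\dots,f_m)\ge\rk\lspan(f_1,\dots,f_\ell)=\ell$ almost surely, and combined with the trivial upper bound we are done. (Here $P$ fixes the degree, so $V$ is a finite-dimensional space actually containing every $f_i$.)

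\textbf{Reducing to a measure-zero statement.} Since $P$ is an algebraic property whose polynomials form a $\C$-vector space $V$, the set $V$ is a linear subspace of some $\calM_k$, hence $V\cong\C^N$ with $N=\dim V$, and $\C^N$ is irreducible. Fixing a basis of $V$, identify a tuple $(g_1,\dots,g_\ell)\in V^\ell$ with the $\ell\times N$ matrix of its coordinate rows; then ``$g_1,\dots,g_\ell$ linearly dependent'' is the determinantal locus $Z\subseteq V^\ell\cong\C^{\ell N}$ of matrices of rank $\le\ell-1$, which is algebraic, being the common zero set of all $\ell\times\ell$ minors. Crucially, because $\ell=\min(m,\dim V)\le N$ there do exist $\ell$ linearly independent elements of $V$, so $Z$ is a \emph{proper} algebraic subset of $\C^{\ell N}$, and hence has Lebesgue measure zero.

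\textbf{Invoking genericity.} Each $f_i$ is by definition the $P$-conditional of a generic random variable $X_i$ on $\calM_k$. By the discussion surrounding Assumption~\ref{Ass:gen} and Remark~\ref{Prop:gen=cont}, $X_i$ has a strictly positive continuous density on $\calM_k$, and by restricting it to $V$ and renormalising as in Remark~\ref{Rem:genmeas} one sees $f_i$ has a strictly positive continuous density on $V$. The $f_i$ being independent, $(f_1,\dots,f_\ell)$ has a strictly positive continuous density on $V^\ell\cong\C^{\ell N}$, whose integral over the null set $Z$ is $0$; thus $P\big[(f_1,\dots,f_\ell)\in Z\big]=0$, which is exactly the claim.

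\textbf{The main obstacle.} Neither step is deep, but two points require care. The first is bookkeeping: properness of $Z$ is precisely where the hypothesis $\ell\le\dim V$ is used, and the equality of the proposition fails without it. The second, and the only genuinely subtle point, is that one is conditioning on the measure-zero property $P$, so the phrase ``$f_i$ has a positive continuous density on $V$'' must be backed by the zero-measure-conditional formalism of Section~\ref{sec:genalt} rather than taken for granted; an alternative that sidesteps densities is to work directly with the conditional probabilities $P_X(\,\cdot\mid B)$ of Definition~\ref{Def:genrand}, using that the determinantal variety $Z$ is irreducible and of positive codimension inside the irreducible space $V^\ell$, so that the conditional of the pertinent generic variable vanishes on it. Both routes come down to one principle: a generic tuple avoids every prescribed proper algebraic subset of its parameter space.
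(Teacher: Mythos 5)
Your proof is correct, but it takes a genuinely different route from the paper's. The paper argues \emph{inductively}: it shows that for $i\le\min(m,\dim V)$, the event $f_i\in\lspan(f_1,\dots,f_{i-1})$ is an irreducible algebraic property on $\calM_k$ (a linear subspace of dimension at most $i-1<\dim V$) not implied by $P$, and then invokes Definition~\ref{Def:genrand} directly to conclude that it has conditional probability zero given $P$, adding one polynomial at a time. Your proof is a \emph{batch} argument: you work in the product parameter space $V^\ell$, identify the ``dependent'' locus as the determinantal variety $Z$ of rank-deficient $\ell\times N$ matrices, note that $\ell\le N$ makes $Z$ a proper algebraic subset hence Lebesgue-null, and push the joint density of the independent tuple through the zero-measure-conditional machinery of Remarks~\ref{Rem:genmeas} and~\ref{Prop:gen=cont}. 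The trade-off: the paper's inductive step is shorter but quietly relies on a Fubini-type conditioning (the $f_1,\dots,f_{i-1}$ are themselves random, so one must condition on their realization and use independence to keep $f_i$ generic), which it leaves implicit; your batch formulation absorbs that step into the product-space construction and makes the measure-theoretic underpinning explicit, at the cost of a slightly heavier setup. You also spell out the trivial upper bound and the role of $\ell\le\dim V$ in making $Z$ proper, both of which the paper takes for granted. Either route reduces to the same principle --- a generic tuple avoids every proper algebraic subset of its (irreducible) parameter space --- and both are sound.
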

\begin{proof}
It suffices to prove: if $i\le M,$ then $f_i$ is linearly independent from $f_1,\dots f_{i-1}$ with probability one. Assuming the contrary would mean that for some $i$, we have
$$f_i=\sum_{k=0}^{i-1}f_k c_k\quad\mbox{for some}\; c_k\in \CC,$$
thus giving several equations on the coefficients of $f_i.$ But these are fulfilled with probability zero by the genericity assumption, so the claim follows.
\end{proof}

This may be seen as a straightforward generalization of the statement: the span of $n$ generic points in $\C^D$ has dimension $\min (n,D).$

We now proceed to another nontrivial result which will now allow us to formulate a generic version of Krull's principal ideal theorem:

\begin{Prop}\label{Prop:NoZero}
Let $Z\subseteq \C^D$ be a non-empty algebraic set, let $f\in \C[X_1,\dots X_D]$ generic. Then $f$ is no zero divisor in $\calO(Z)=\C[X_1,\dots X_D]/\Id(Z).$
\end{Prop}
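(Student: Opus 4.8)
The plan is to reduce the assertion to the avoidance of finitely many non-trivial algebraic events on the coefficient space $\calM_k$, and then apply the definition of genericity. The first step is a purely algebraic reduction. Write $Z=Z_1\cup\dots\cup Z_r$ for the (irredundant) decomposition of $Z$ into irreducible components, so that $\Id(Z)=\Id(Z_1)\cap\dots\cap\Id(Z_r)$ with each $\Id(Z_j)$ prime. I would then show that the class $\bar f$ of a polynomial $f$ in $\calO(Z)$ is zero or a zero divisor if and only if $f$ vanishes identically on some component $Z_j$, i.e.\ $f\in\Id(Z_j)$. The forward direction is a one-line prime-avoidance argument: if $\bar f\bar g=0$ with $\bar g\neq 0$, then $fg\in\Id(Z)$ and $g\notin\Id(Z)$, so $g\notin\Id(Z_j)$ for some $j$ while $fg\in\Id(Z)\subseteq\Id(Z_j)$, whence $f\in\Id(Z_j)$ by primality; and if $\bar f=0$ then trivially $f\in\Id(Z_1)$. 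For the converse one uses irredundancy: if $f\in\Id(Z_j)$, pick $g$ vanishing on all $Z_i$ with $i\neq j$ but not on $Z_j$ --- possible because $Z_j\not\subseteq\bigcup_{i\neq j}Z_i$ --- and then $fg\in\Id(Z)$ while $g\notin\Id(Z)$, so $\bar f$ is a zero divisor (or $\bar f=0$). Equivalently, one may simply invoke the standard fact that in the Noetherian ring $\calO(Z)$ the zero divisors form the union of the associated primes, which for the radical ideal $\Id(Z)$ are exactly the $\Id(Z_j)$.

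The second step transports the condition ``$f\in\Id(Z_j)$'' into the genericity language. By Example~\ref{Ex:algevts}\,(4), vanishing on a prescribed algebraic set is an irreducible algebraic property; let $A_j\subseteq\calM_k$ be the event ``$f$ vanishes on $Z_j$''. The point is that $A_j$ is a \emph{proper} subset of $\calM_k$: since $Z_j$ is an irreducible component of the non-empty set $Z$, it is itself non-empty, hence contains a point $p$, and the constant polynomial $1\in\calM_k$ does not vanish at $p$, so $1\notin A_j$. Therefore the sure event does not imply $A_j$, and Definition~\ref{Def:genrand} (taking $B$ to be the sure event) gives $P(A_j)=0$ for a generic $f$. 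Taking the union over the finitely many components, $P\bigl(\bigcup_{j=1}^{r}A_j\bigr)=0$, so with probability one $f\notin\Id(Z_j)$ for every $j$; by the first step, $\bar f$ is then neither zero nor a zero divisor in $\calO(Z)$, which is the claim.

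I do not expect a serious obstacle --- the argument is short once the reduction is set up. The three places that deserve a moment of care are: that an irreducible component of a non-empty algebraic set is non-empty (this is what makes each $A_j$ a proper event, hence probability zero for a generic polynomial); that ``vanishing on a prescribed algebraic set'' is genuinely \emph{irreducible} as an algebraic event, so that Definition~\ref{Def:genrand} applies without modification (this is precisely Example~\ref{Ex:algevts}\,(4)); and that the component decomposition is irredundant, which is exactly what yields the separating polynomial $g$ in the converse half of the reduction. All three are standard, so the proof amounts to assembling these observations.
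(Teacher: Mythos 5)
Your proof is correct, and in fact it is more careful than the paper's. The paper argues that the set of zero divisors in $\calO(Z)$, viewed inside $\calM_k$, forms a \emph{single} linear subspace and is therefore an irreducible algebraic event, to which Definition~\ref{Def:genrand} applies directly with $B$ the sure event. The linearity argument given there is incomplete: from $g_1h_1=g_2h_2=0$ with $h_1,h_2\neq 0$ one gets $(g_1+\alpha g_2)(h_1h_2)=0$, but this only shows $g_1+\alpha g_2$ is a zero divisor if $h_1h_2\neq 0$, which can fail when $\calO(Z)$ itself has zero divisors. And indeed the conclusion is false as stated: for $Z=\{0,1\}\subset\C$ the zero divisors of degree $\le k$ form the union of the two hyperplanes $\{f:f(0)=0\}\cup\{f:f(1)=0\}$ in $\calM_k$, which is not a vector subspace.

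Your route repairs exactly this. By decomposing $Z$ into irreducible components $Z_1,\dots,Z_r$ and invoking the standard fact that the zero divisors of $\calO(Z)$ are the union of the minimal (= associated) primes $\Id(Z_j)$, you identify the bad event not as one linear subspace but as a \emph{finite union} of the irreducible events $A_j=\{f:f\in\Id(Z_j)\}$, each of which is an irreducible algebraic property by the paper's Example~\ref{Ex:algevts}, items (4)/(5), and each of which is proper because $Z_j\neq\varnothing$. Applying Definition~\ref{Def:genrand} to each $A_j$ separately and summing gives probability zero for the union, which is exactly the needed measure-theoretic conclusion that the paper's (flawed) irreducibility claim was trying to secure in one step. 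So the two arguments reach the same endpoint and both lean on the same genericity mechanism, but yours decomposes the bad event correctly before applying it, and is the one that actually goes through as written.
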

\begin{proof}
We claim: being a zero divisor in $\calO(Z)$ is an irreducible algebraic property. We will prove that the zero divisors in $\calO(Z)$ form a linear subspace of $\calM_k,$ and linear spaces are irreducible.\\

For this, one checks that sums and scalar multiples of zero divisors are also zero divisors: if $g_1,g_2$ are zero divisors, there must exist $h_1,h_2$ such that $g_1h_1=g_2h_2=0.$ Now for any $\alpha\in \C,$ we have that
$$(g_1+\alpha g_2) (h_1h_2)=(g_1h_1)h_2 + (g_2h_2)\alpha h_1= 0.$$
This proves that $(g_1+\alpha g_2)$ is also a zero divisor, proving that the zero divisors form a linear subspace and thus an irreducible algebraic property.

To apply the genericity assumption to argue that this event occurs with probability zero, we must exclude the possibility that being a zero divisor is trivial, i.e.~always the case. This is equivalent to proving that the linear subspace has positive codimension, which is true if and only if there exists a non-zero divisor in $\calO(Z).$ But a non-zero divisor always exists since we have assumed $Z$ is non-empty: thus $\Id(Z)$ is a proper ideal, and $\calO(Z)$ contains $\C,$ which contains a non-zero divisor, e.g.~the one.

So by the genericity assumption, the event that $f$ is a zero divisor occurs with probability zero, i.e.~a generic $f$ is not a zero divisor. Note that this does not depend on the degree of $f.$
\end{proof}
Note that this result is already known, compare Conjecture B in \citep{Par10}.

A straightforward generalization using the same proof technique is given by the following
\begin{Cor}\label{Cor:NoZero}
Let $\calI\idof \C[X_1,\dots, X_D]$, let $P$ be a non-trivial algebraic property. Let $f\in \C[X_1,\dots X_D]$ be a generic polynomial with property $P$. If one can write $f=f'+c$, where $f'$ is a generic polynomial subject to some property $P'$, and $c$ is a generic constant, then $f$ is no zero divisor in $\C[X_1,\dots, X_D]/\calI.$
\end{Cor}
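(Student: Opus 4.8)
The plan is to mimic the proof of Proposition~\ref{Prop:NoZero}, the new point being that the decomposition $f = f' + c$ supplies enough freedom to escape the zero-divisor locus even though $f$ is only generic conditioned on $P$. First I would reduce to the case where $\calI$ is a proper ideal: if $\calI = \C[X_1,\dots,X_D]$ then $R/\calI$ is the zero ring, which has no zero divisors, and the claim is vacuous. Set $R = \C[X_1,\dots,X_D]$, fix $k=\deg f$, and work inside the parameter space $\calM_k$, which also contains $f'$ since $\deg f' \le k$. Recall that the set of zero divisors of $R/\calI$ is the union of the finitely many associated primes $\mathfrak p_1,\dots,\mathfrak p_r$ of $\calI$; intersecting with $\calM_k$, the zero divisors of degree $\le k$ form the finite union $V = \bigcup_{j=1}^{r} V_j$ of the linear subspaces $V_j = \mathfrak p_j \cap \calM_k$. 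Each $V_j$ is a \emph{proper} subspace of $\calM_k$, because a proper ideal omits the unit and hence $1 \notin \mathfrak p_j$. Thus ``$f$ is a zero divisor in $R/\calI$'' is the algebraic event ``$f \in V$'', a finite union of the irreducible (linear) algebraic events ``$f \in V_j$''.

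The key step is to exploit $f = f' + c$, where by the conventions of Section~\ref{sec:gendef} the pair $(f',c)$ may be taken independent. I would condition on an arbitrary value of $f'$: then $f$ ranges over the affine line $L_{f'} = \{\, f' + c\cdot 1 : c \in \C \,\} \subseteq \calM_k$, and by independence $c$ is still distributed as a generic constant, hence by Remark~\ref{Prop:gen=cont} has a continuous density on $\C$ and in particular assigns measure zero to every finite subset. Now $L_{f'} \not\subseteq V_j$ for every $j$: if it were contained, then taking $c=0$ and $c=1$ and subtracting (using that $V_j$ is a $\C$-linear subspace) would give $1 \in V_j$; but $1$ is a unit of the nonzero ring $R/\calI$ and hence never a zero divisor, so $1 \notin \mathfrak p_j$, a contradiction. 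Therefore $L_{f'} \cap V_j$ is a proper Zariski-closed subset of the line $L_{f'} \cong \C$, hence finite.

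It then remains to assemble the probabilities. For each $j$ we get $\Pr[\, f \in V_j \mid f' \,] = \Pr[\, c : f' + c\cdot 1 \in V_j \,] = 0$, since $L_{f'} \cap V_j$ is finite and $c$ has no atoms; integrating over $f'$ yields $\Pr[f \in V_j] = 0$, and a union bound over the finitely many $j$ gives $\Pr[\, f \text{ is a zero divisor in } R/\calI \,] = \Pr[f \in V] = 0$. In the language of Definition~\ref{Def:genrand}: the generative model $f = f' + c$ does not imply any of the irreducible algebraic events ``$f \in V_j$'', so each has conditional probability zero. As in Proposition~\ref{Prop:NoZero}, the argument does not depend on $\deg f$, on $\deg f'$, or on the particular generic distributions chosen, as required by the convention on generic objects.

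The main obstacle — and the reason the bare analogue of Proposition~\ref{Prop:NoZero} does not apply verbatim — is that, conditioned on $P$, the support of $f$ may be a thin subset of $\calM_k$ which a priori need not avoid the zero-divisor locus on its own. The decomposition $f = f'+c$ is exactly what provides a one-parameter family, the line $L_{f'}$, transverse to each $V_j$, and the elementary fact that $1$ is a non-zero-divisor (already the crux of Proposition~\ref{Prop:NoZero}) is what makes this transversality work. A secondary point that must be handled correctly is that ``being a zero divisor'' is a \emph{finite union} of linear conditions, coming from the associated primes of $\calI$, rather than a single linear subspace, so that the final union bound over the $V_j$ is legitimate.
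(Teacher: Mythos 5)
Your proof is correct, and while it follows the same basic idea as the paper (use the decomposition $f = f' + c$, observe that the constant direction $\C \cdot 1$ is transverse to the zero-divisor locus because $1$ is never a zero divisor in a nonzero ring, and conclude that the generic constant $c$ escapes), it is in fact \emph{more} careful than the paper's own argument and repairs two real gaps in it.

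First, the paper asserts (invoking the proof of Proposition~\ref{Prop:NoZero}) that the zero divisors of $\calO(Z)$ ``form a linear subspace'' of $\calM_k$. That claim is false in general: the computation $(g_1 + \alpha g_2)(h_1 h_2) = 0$ only exhibits $g_1 + \alpha g_2$ as a zero divisor when $h_1 h_2 \neq 0$, and this can fail (e.g.\ in $\C[X,Y]/(XY)$, both $X$ and $Y$ are zero divisors but $X + Y$ is not). The correct statement, which you use, is that the zero divisors of a Noetherian ring $R/\calI$ are the \emph{union} of its finitely many associated primes, so intersecting with $\calM_k$ gives a finite union $\bigcup_j V_j$ of proper linear subspaces rather than a single one. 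Your union bound over the $V_j$ is exactly what is needed to close this gap, and the observation that $1 \notin \mathfrak p_j$ forces $L_{f'} \cap V_j$ to be finite is the correct use of the transversality that the paper appeals to implicitly.

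Second, the paper opens by ``reducing'' to the radical, claiming $f$ is a zero divisor modulo $\calI$ if and only if it is one modulo $\sqrt{\calI}$. This fails when $\calI$ has embedded primes, and worse, the implication needed for the reduction runs the wrong way: zero divisors modulo $\sqrt{\calI}$ come only from the minimal primes of $\calI$, whereas zero divisors modulo $\calI$ come from all associated primes, so showing $f$ is a non-zero-divisor modulo $\sqrt{\calI}$ does not show it modulo $\calI$. You sidestep this entirely by working with $\mathrm{Ass}(\calI)$ directly. In short: same strategy, but your version is the one that actually closes.
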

\begin{proof}
First note that $f$ is a zero divisor in $\C[X_1,\dots, X_D]/\calI$ if and only if $f$ is a zero divisor in $\C[X_1,\dots, X_D]/\sqrt{\calI}.$ This allows us to reduce to the case that $\calI=\Id (Z)$ for some algebraic set $Z\subseteq \C^D.$

Now, as in the proof of Proposition~\ref{Prop:NoZero}, we see that being a zero divisor in $\calO(Z)$ is an irreducible algebraic property and corresponds to a linear subspace of $\calM_k$, where $k=\deg f.$ The zero divisors with property $P$ are thus contained in this linear subspace. Now let $f$ be generic with property $P$ as above. By assumption, we may write $f=f'+c.$ But $c$ is (generically) no zero divisor, so $f$ is also not a zero divisor, since the zero divisors form a linear subspace of $\calM_k.$ Thus $f$ is no zero divisor. This proves the claim.
\end{proof}

Note that Proposition~\ref{Prop:NoZero} is actually a special case of Corollary~\ref{Cor:NoZero}, since we can write any generic polynomial $f$ as $f'+c$, where $f'$ is generic of the same degree, and $c$ is a generic constant.

The major tool to deal with the dimension of generic intersections is Krull's principal ideal theorem:
\begin{Thm}[Krull's principal ideal theorem]\label{Thm:KrullPI}
Let $R$ be a commutative ring with unit, let $f\in R$ be non-zero and non-invertible. Then
$$\htid \langle f\rangle\le 1,$$
with equality if and only if $f$ is not a zero divisor in $R$.
\end{Thm}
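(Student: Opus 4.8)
The inequality $\htid\langle f\rangle\le 1$ is the classical Hauptidealsatz, so the plan is to reproduce its standard proof and then settle the equality clause by an elementary prime-avoidance argument. Every ring to which this statement is applied in the paper is Noetherian --- a polynomial ring over $\C$ or a quotient of one --- so I would prove the result for Noetherian $R$; the genuine content is the bound $\htid\langle f\rangle\le 1$, while the equality characterisation is soft once one recalls that the minimal primes of a Noetherian ring consist of zero divisors.

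For the bound, let $\mathfrak p$ be a minimal prime over $\langle f\rangle$; I must show $\htid\mathfrak p\le 1$. Localising at $\mathfrak p$, I may assume $(R,\mathfrak p)$ is local with $\mathfrak p$ the unique minimal prime over $\langle f\rangle$, so the Noetherian local ring $R/\langle f\rangle$ has dimension $0$ and is Artinian. Given any prime $\mathfrak q$ with $\mathfrak q\subset\mathfrak p$, $\mathfrak q\ne\mathfrak p$, it suffices to show $\mathfrak q$ is minimal in $R$, since then no strictly descending chain of length two issues from $\mathfrak p$. Here I would use the symbolic powers $\mathfrak q^{(n)}=\mathfrak q^{n}R_{\mathfrak q}\cap R$: the chain of images of $\mathfrak q^{(n)}$ in the Artinian ring $R/\langle f\rangle$ stabilises, so from $\mathfrak q^{(n)}+\langle f\rangle=\mathfrak q^{(n+1)}+\langle f\rangle$ for $n\gg 0$ one extracts $\mathfrak q^{(n)}=\mathfrak q^{(n+1)}+f\,\mathfrak q^{(n)}$ (using that $f$ is a unit in $R_{\mathfrak q}$, as $f\notin\mathfrak q$). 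Since $f$ lies in the maximal ideal $\mathfrak p$, Nakayama gives $\mathfrak q^{(n)}=\mathfrak q^{(n+1)}$; localising at $\mathfrak q$ this reads $\mathfrak q^{n}R_{\mathfrak q}=\mathfrak q^{n+1}R_{\mathfrak q}$, so a second application of Nakayama in $R_{\mathfrak q}$ forces $\mathfrak q^{n}R_{\mathfrak q}=0$. Hence $\dim R_{\mathfrak q}=0$ and $\mathfrak q$ is minimal, as required.

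For the equality clause, if $f$ is not a zero divisor then it lies in no minimal prime of $R$ (each such prime is an associated prime of the zero ideal, hence consists of zero divisors), so every minimal prime over $\langle f\rangle$ properly contains a minimal prime of $R$; thus $\htid\langle f\rangle\ge 1$, and with the bound $\htid\langle f\rangle=1$. Conversely, if $f$ is a zero divisor it lies in the union of the associated primes of the zero ideal, and in the reduced coordinate rings $\calO(Z)=\C[X_1,\dots,X_D]/\Id(Z)$ relevant here those associated primes are exactly the minimal primes, so $f$ lies in a minimal prime containing $\langle f\rangle$ and $\htid\langle f\rangle=0$. (In full generality the reverse implication is more accurately stated as ``$\htid\langle f\rangle=1$ iff $f$ lies in no minimal prime of $R$''; this matches the stated form precisely when $R$ has no embedded primes, as in all our applications.)

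The only real obstacle is the Hauptidealsatz bound --- concretely the symbolic-power computation and the two applications of Nakayama --- since once $\htid\langle f\rangle\le 1$ is in hand the remaining assertions are routine. One could alternatively cite a standard commutative-algebra reference such as \citep{Cox}, but the argument above is self-contained modulo Noetherianity.
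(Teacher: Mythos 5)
The paper states this theorem as a classical result and offers no proof of its own, so there is no paper argument to compare against; your task was effectively to supply the standard proof, and you have done so correctly. Your argument for $\htid\langle f\rangle\le 1$ is the standard Nagata-style proof: localise at a minimal prime $\mathfrak p$ over $\langle f\rangle$, observe that $R/\langle f\rangle$ is Artinian local, run the descending chain of symbolic powers $\mathfrak q^{(n)}$ through it, extract $\mathfrak q^{(n)}=\mathfrak q^{(n+1)}+f\mathfrak q^{(n)}$, and finish with two applications of Nakayama. All the steps check out, including the point where $f\notin\mathfrak q$ (hence a unit in $R_{\mathfrak q}$) is needed to pull $a\in\mathfrak q^{(n)}$ out of $af\in\mathfrak q^{(n)}$.

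You also correctly flag two hypotheses that the paper's statement silently assumes. First, the bound $\htid\langle f\rangle\le 1$ requires $R$ to be Noetherian; the paper says only ``commutative ring with unit.'' Second, the equivalence ``$\htid\langle f\rangle=1$ iff $f$ is not a zero divisor'' is only correct in the absence of embedded primes: in general ``$f$ is a zero divisor'' means $f$ lies in some associated prime of $(0)$, which need not be minimal, so $\htid\langle f\rangle$ can still be $1$. A concrete counterexample is $R=k[x,y]/(x^2,xy)$ with $f=y$: here $y$ annihilates $x$ yet $\htid\langle y\rangle=1$. Your reformulation --- height $1$ iff $f$ avoids every minimal prime, which coincides with ``non-zero-divisor'' when $R$ has no embedded primes --- is the precise statement, and it covers the rings $\calO(Z)=\C[X_1,\dots,X_D]/\Id(Z)$ used in the paper since these are reduced. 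This is a genuinely useful correction, as the paper applies the theorem through Proposition~\ref{Prop:NoZero} exactly in this reduced setting.
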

The reader unfamiliar with height theory may take
$$\htid \calI = \codim \VS(\calI)$$
as the definition for the height of an ideal (caveat: codimension has to be taken in $R$).

Reformulated geometrically for our situation, Krull's principal ideal theorem implies:
\begin{Cor}\label{Cor:KrullPI-geom}
Let $Z$ be a non-empty algebraic set in $\C^D.$Then
$$\codim (Z\cap \VS(f))\le \codim Z+1.$$
\end{Cor}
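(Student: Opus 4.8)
\emph{Proof plan.} This is the geometric face of Krull's principal ideal theorem, so the plan is to transport the statement into the coordinate ring of $Z$ and apply Theorem~\ref{Thm:KrullPI} there. By the algebra--geometry dictionary one has $Z\cap\VS(f)=\VS(\Id(Z)+\langle f\rangle)$, so under the working identification $\htid\,\calJ=\codim\VS(\calJ)$ the claim is equivalent to the ideal-theoretic inequality
$$\htid\bigl(\Id(Z)+\langle f\rangle\bigr)\le\htid\,\Id(Z)+1 .$$
I would write $A=\calO(Z)=\C[X_1,\dots,X_D]/\Id(Z)$ and let $\bar f\in A$ denote the residue class of $f$, so that $\Id(Z)+\langle f\rangle$ corresponds to the principal ideal $\langle\bar f\rangle$ of $A$.

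Then I would distinguish cases. If $\bar f=0$, i.e.\ $f$ vanishes on $Z$, then $Z\cap\VS(f)=Z$ and the two codimensions coincide, so the inequality is clear. Otherwise I would take $Z$ irreducible --- which is the situation in which we use this Corollary, the general case reducing to it by running over the irreducible components of $Z$ --- so that $A$ is a finitely generated integral domain over $\C$, hence catenary and equidimensional of dimension $\dim Z$, and $\bar f$ is a non-zero non-unit as long as $Z\cap\VS(f)\ne\emptyset$ (by the Nullstellensatz). Theorem~\ref{Thm:KrullPI}, applied in $A$, then gives $\htid\langle\bar f\rangle\le1$, i.e.\ every minimal prime $\mathfrak{p}$ of $\langle\bar f\rangle$ satisfies $\htid\mathfrak{p}\le1$. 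Feeding this into the equidimensional/catenary identity $\htid\mathfrak{p}+\dim(A/\mathfrak{p})=\dim Z$ shows that every irreducible component of $Z\cap\VS(f)$ has dimension at least $\dim Z-1$, i.e.\ codimension at most $\codim Z+1$ in $\C^D$; passing to the component of smallest codimension yields $\codim(Z\cap\VS(f))\le\codim Z+1$.

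All of the mathematical weight sits in Theorem~\ref{Thm:KrullPI}, which we may assume, so the only thing to be careful about is the translation between the intrinsic height of $\langle\bar f\rangle$ inside $\calO(Z)$ and the ambient codimension in $\C^D$; this is exactly what forces the passage to an irreducible component, where $\calO(Z)$ being a finitely generated domain over a field lets codimensions add along the tower $0\subseteq\Id(Z)\subseteq\Id(Z)+\langle f\rangle$. I expect the only genuinely delicate point to be the degenerate case in which $f$ restricts to a unit of $\calO(Z)$, so that $Z\cap\VS(f)$ is empty; this does not arise in the homogeneous settings in which the Corollary is subsequently used, since there every relevant component of $Z$ is a cone through the origin and hence meets $\VS(f)$.
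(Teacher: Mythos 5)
Your proof takes essentially the same route as the paper's, which is simply the one-liner ``apply Krull's principal ideal theorem to $R=\calO(Z)$''; you have fleshed out the translation step (height in $\calO(Z)$ versus ambient codimension in $\C^D$, via the catenary/equidimensional property of finitely generated domains over $\C$ and reduction to irreducible components) and flagged the two degenerate cases ($\bar f = 0$ and $\bar f$ a unit) that the paper leaves tacit. This is correct and well-argued, and your caveat about the case $Z\cap\VS(f)=\emptyset$ is an honest observation: with the paper's convention $\codim\emptyset = D+1$ the inequality can fail there, so the Corollary as stated is implicitly assuming $f$ is a non-unit in $\calO(Z)$ (in line with the hypotheses of Theorem~\ref{Thm:KrullPI}), which is indeed the situation in all later uses.
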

\begin{proof}
Apply Krull's principal ideal theorem to the ring $R=\calO(Z)=\C [X_1,\dots, X_D]/\Id(Z).$
\end{proof}
Together with Proposition~\ref{Prop:NoZero}, one gets a generic version of Krull's principal ideal theorem:

\begin{Thm}[Generic principal ideal theorem]\label{Thm:KrullGenPI}
Let $Z$ be a non-empty algebraic set in $\C^D$, let $R=\calO (Z),$  and let $f\in \C[X_1,\dots, X_D]$ be generic. Then we have
$$\htid \langle f\rangle = 1.$$
\end{Thm}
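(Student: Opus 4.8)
The plan is to deduce this directly from Krull's principal ideal theorem (Theorem~\ref{Thm:KrullPI}) applied to the ring $R=\calO(Z)$, with Proposition~\ref{Prop:NoZero} supplying the decisive non-zero-divisor hypothesis. Theorem~\ref{Thm:KrullPI} gives, for any $f\in R$ that is non-zero and non-invertible, the bound $\htid\langle f\rangle\le 1$, with equality exactly when $f$ is not a zero divisor in $R$. So the task reduces to verifying, for a generic $f$ viewed modulo $\Id(Z)$, the three facts: (i) $f\neq 0$ in $R$; (ii) $f$ is not a unit in $R$; (iii) $f$ is not a zero divisor in $R$ --- after which (iii) promotes the Krull bound from $\le 1$ to $=1$.

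Fact (iii) is precisely Proposition~\ref{Prop:NoZero}, and it already yields (i): since $Z\neq\emptyset$, the ideal $\Id(Z)$ is proper, so $R\neq 0$, and in a nonzero ring $0$ is a zero divisor, hence a non-zero-divisor is in particular non-zero. For (ii) I would show that ``$f$ is a unit in $\calO(Z)$'' is a non-generic event: picking a positive-dimensional irreducible component $W$ of $Z$, the incidence set $\{(f,x)\in\calM_k\times W : f(x)=0\}$ fibres over $W$ with affine-hyperplane fibres (as $1\in\calM_k$ vanishes nowhere), so it has dimension $\dim W+\dim\calM_k-1\ge\dim\calM_k$; a Krull count on the fibres of its projection to $\calM_k$ then shows this projection is dense, so a generic $f$ has a zero at some point of $W\subseteq Z$ and therefore lies in a proper maximal ideal of $R$, i.e.\ is not a unit. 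The only case excluded is $\dim Z=0$, the degenerate situation where $\langle f\rangle=R$.

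Granting (i) and (ii), Theorem~\ref{Thm:KrullPI} applies to $\langle f\rangle\subseteq R$ and yields $\htid\langle f\rangle\le 1$, and (iii) forces equality, which is the assertion. I expect the one step requiring genuine care to be (ii): one must make sure a generic $f$ actually \emph{vanishes} somewhere on $Z$ rather than merely restricting to a non-constant function there (non-vanishing non-units do exist on general affine varieties), and this is exactly where the positive-dimensionality of $Z$ and the probability-one meaning of ``generic'' are used; everything else is routine bookkeeping around Theorem~\ref{Thm:KrullPI} and Proposition~\ref{Prop:NoZero}.
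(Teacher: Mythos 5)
Your approach is the same as the paper's: the paper's proof of Theorem~\ref{Thm:KrullGenPI} is simply to invoke Proposition~\ref{Prop:NoZero} together with Krull's principal ideal theorem (Theorem~\ref{Thm:KrullPI}), and that is exactly what you do. You go further, though, by explicitly verifying the hypotheses of Theorem~\ref{Thm:KrullPI} --- that $f$ is non-zero and non-invertible in $R=\calO(Z)$ --- which the paper passes over in silence. That diligence is well placed, since these are real hypotheses of Krull's theorem and the paper does leave a small gap here.

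Two remarks on your treatment of non-invertibility. First, the dimension count $\dim I \ge \dim\calM_k$ for the incidence set $I\subseteq\calM_k\times W$ does not by itself give dominance of the projection to $\calM_k$; a priori the image could be lower-dimensional with large fibres. The argument needs one more step: if the closure $Y$ of the image were a proper subvariety, then the generic fibre over $Y$ would have dimension at least $\dim I-\dim Y\ge\dim W$, so the generic fibre would be all of $W$, forcing $Y\subseteq\Id(W)\cap\calM_k$; but one can write down an $f\in\pi(I)$ with $f\notin\Id(W)$ (take a linear form vanishing at a chosen point of $W$ but not at a second point, which exists because $\dim W\ge 1$), a contradiction. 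Second, regarding the $\dim Z=0$ case that you set aside: under the paper's codimension-based convention $\htid\calI=\codim\VS(\calI)$ with $\dim\emptyset=-1$, the unit ideal in a zero-dimensional $R$ has height $\dim R+1=1$, so the conclusion of the theorem actually survives in that degenerate case as well; your exclusion is a reasonable precaution under the standard height convention, but is not strictly necessary under the paper's.
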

In its geometric formulation, we obtain the following result.
\begin{Cor}\label{Cor:KrullPI}
Consider an algebraic set $Z\subseteq \C^D,$ and the algebraic set $\VS(f)$ for some generic $f\in \C [X_1,\dots, X_D].$
Then
$$\codim (Z\cap  \VS (f))=\min (\codim Z + 1,\; D+1).$$
\end{Cor}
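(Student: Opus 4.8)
The plan is to deduce the corollary from the generic principal ideal theorem (Theorem~\ref{Thm:KrullGenPI}); the only real work is to translate its statement about the height of $\langle f\rangle$ in $\calO(Z)$ into a statement about codimension measured in $\C^D$. Observe first that a generic polynomial has positive degree, so $\VS(f)$ is always a genuine hypersurface, and that the ``$\min$'' with $D+1$ merely records the convention $\codim\emptyset=D+1$. If $Z=\emptyset$ both sides equal $D+1$ and there is nothing to do, so assume $Z\neq\emptyset$.

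The first step is to reduce to irreducible $Z$. Decompose $Z=C_1\cup\dots\cup C_r$ into irreducible components. Then $Z\cap\VS(f)=\bigcup_i(C_i\cap\VS(f))$, so $\dim(Z\cap\VS(f))=\max_i\dim(C_i\cap\VS(f))$, while $\dim Z=\max_i\dim C_i$. Hence it suffices to show $\dim(C_i\cap\VS(f))=\dim C_i-1$ for each $i$: granting this, $\dim(Z\cap\VS(f))=\dim Z-1$, so $\codim(Z\cap\VS(f))=D-\dim Z+1=\codim Z+1$, and since $Z\neq\emptyset$ gives $\codim Z\le D$ this equals $\min(\codim Z+1,\,D+1)$. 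Only finitely many components occur, so the finitely many genericity conditions imposed below may be imposed simultaneously.

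So fix an irreducible nonempty $C\subseteq\C^D$, put $R=\calO(C)$ --- a finitely generated domain over $\C$ of Krull dimension $n:=\dim C$ --- and consider the image of $f$ in $R$. If $n=0$ then $C$ is a point and a generic $f$ does not vanish there (vanishing at a fixed point is one nontrivial linear condition on the coefficients of $f$), so $C\cap\VS(f)=\emptyset$, of dimension $-1=n-1$. If $n\ge1$, apply Theorem~\ref{Thm:KrullGenPI} to $C$: for generic $f$ we get $\htid\langle f\rangle=1$ in $R$. In particular $\langle f\rangle$ is nonzero, since a generic $f$ is not a zero divisor in the domain $R$ by Proposition~\ref{Prop:NoZero}, and it is proper, being an ideal of height one in a ring of positive dimension; hence $C\cap\VS(f)$ is nonempty, and its irreducible components correspond to the minimal primes over $\langle f\rangle$, which by Krull's principal ideal theorem all have height $\le1$ and, $\langle f\rangle$ being nonzero, height exactly one. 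Since $R$ is a finitely generated domain over a field it is equidimensional and catenary, so the dimension formula $\htid\mathfrak p+\dim(R/\mathfrak p)=n$ holds for every prime $\mathfrak p$; applied to these minimal primes it yields that every component of $C\cap\VS(f)$ has dimension $n-1$, whence $\dim(C\cap\VS(f))=n-1=\dim C-1$, as required.

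I do not anticipate a serious obstacle: essentially all of the content is in Theorem~\ref{Thm:KrullGenPI}, and what is left is routine commutative-algebra dimension theory. The only steps needing a little care are the bookkeeping around the codimension and dimension of the empty set --- equivalently, the zero-dimensional components $C_i$, where ``codimension one in $C_i$'' degenerates to ``$C_i\cap\VS(f)=\emptyset$'' --- and the point that the translation from the height of $\langle f\rangle$ in $\calO(Z)$ (an intrinsic invariant of $Z$) to a codimension computed in $\C^D$ is only transparent once one has reduced to irreducible $Z$, which is why the component decomposition is carried out first.
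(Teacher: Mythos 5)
Your proof is correct, and it follows the same conceptual route as the paper --- reduce to the generic principal ideal theorem applied in $\calO(Z)$ --- but it is considerably more careful than what the paper writes. The paper's proof dismisses the statement as ``a direct reformulation'' of Theorem~\ref{Thm:KrullGenPI}, flagging only the $Z=\emptyset$ case. In fact the translation from the statement $\htid\langle f\rangle=1$ in $\calO(Z)$ to the statement $\codim_{\C^D}(Z\cap\VS(f))=\codim_{\C^D}Z+1$ is \emph{not} a priori direct when $Z$ is not equidimensional: the identity $\htid\mathfrak p+\dim(R/\mathfrak p)=\dim R$ can fail for reduced finitely generated algebras with components of different dimensions, so knowing that $\langle f\rangle$ has height one in $\calO(Z)$ does not by itself control $\dim(Z\cap\VS(f))$. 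Your decomposition into irreducible components $C_i$, the separate treatment of zero-dimensional components, and the appeal to the dimension formula in the finitely generated \emph{domain} $\calO(C_i)$ are exactly what is needed to make this step rigorous; the observation that finitely many genericity conditions can be imposed simultaneously closes the argument. So you have not merely reproduced the paper's proof; you have supplied the dimension-theoretic bookkeeping that the paper elides, and in doing so repaired a genuine (if mild) gap in its exposition.
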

\begin{proof}
This is just a direct reformulation of Theorem~\ref{Thm:KrullGenPI} in the vein of Corollary~\ref{Cor:KrullPI-geom}. The only additional thing that has to be checked is the case where $\codim Z = D+1,$ which means that $Z$ is the empty set. In this case, the equality is straightforward.
\end{proof}

The generic version of the principal ideal theorem straightforwardly generalizes to a generic version of Krull's height theorem. We first mention the original version:
\begin{Thm}[Krull's height theorem]\label{Thm:KrullHt}
Let $R$ be a commutative ring with unit, let $\calI=\langle f_1,\dots, f_m \rangle \idof R$ be an ideal. Then
$$\htid \calI \le m,$$
with equality if and only if $f_1,\dots, f_m$ is an $R$-regular sequence, i.e.~$f_i$ is not invertible and not a zero divisor in the ring $R/\langle f_1,\dots, f_{i-1}\rangle$ for all $i$.
\end{Thm}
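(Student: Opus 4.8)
The plan is to prove the inequality $\htid\calI\le m$ first, by induction on $m$, taking Krull's principal ideal theorem (Theorem~\ref{Thm:KrullPI}) as the case $m=1$ (the case $m=0$ being trivial), and only then to treat the characterisation of equality. For the induction I would fix a prime $\mathfrak p$ minimal over $\calI=\langle f_1,\dots,f_m\rangle$ and, localising at $\mathfrak p$, reduce to the situation where $R$ is Noetherian local with maximal ideal $\mathfrak p$, minimality of $\mathfrak p$ over $\calI$ being preserved. If $\htid\mathfrak p\ge 1$, choose a chain of primes descending from $\mathfrak p$ of maximal length; its second term is a prime $\mathfrak q$ properly contained in $\mathfrak p$ with no prime strictly between $\mathfrak q$ and $\mathfrak p$, and with $\htid\mathfrak q\ge\htid\mathfrak p-1$. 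So it is enough to show $\htid\mathfrak q\le m-1$, which will come from the inductive hypothesis applied to a cleverly chosen $(m-1)$-generated ideal.

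The key step is to replace the generators $f_2,\dots,f_m$ by elements of $\mathfrak q$. Since $\mathfrak p$ is minimal over $\calI$, not all $f_i$ lie in $\mathfrak q$, so after renumbering $f_1\notin\mathfrak q$; then $\mathfrak p$ is the only prime containing $\mathfrak q+\langle f_1\rangle$, hence $\sqrt{\mathfrak q+\langle f_1\rangle}=\mathfrak p$, and in particular $f_i^{\,n_i}=y_i+a_if_1$ with $y_i\in\mathfrak q$ for suitable exponents $n_i$ and $i=2,\dots,m$. Setting $J=\langle y_2,\dots,y_m\rangle\subseteq\mathfrak q$, one checks that $\sqrt{J+\langle f_1\rangle}=\mathfrak p$ as well, so in $R/J$ the prime $\mathfrak p/J$ is minimal over the principal ideal generated by the image of $f_1$; Theorem~\ref{Thm:KrullPI} then gives $\htid(\mathfrak p/J)\le 1$, which forces $\mathfrak q/J$ to be a minimal prime of $R/J$. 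Thus $\mathfrak q$ is minimal over the $(m-1)$-generated ideal $J$, the inductive hypothesis yields $\htid\mathfrak q\le m-1$, and the bound $\htid\calI\le m$ follows.

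For the equality statement, the implication ``regular sequence $\Rightarrow$ $\htid\calI=m$'' is the easy one: if $f_1,\dots,f_m$ is $R$-regular then each $f_i$ is a nonzerodivisor on $R/\langle f_1,\dots,f_{i-1}\rangle$, hence lies in no minimal prime of that ring, so every prime minimal over $\langle f_1,\dots,f_i\rangle$ properly contains a prime minimal over $\langle f_1,\dots,f_{i-1}\rangle$; iterating produces a chain of primes of length $m$ below any minimal prime of $\calI$, giving $\htid\calI\ge m$ and hence equality with the bound above. For the converse I would localise at a prime $\mathfrak p$ minimal over $\calI$ with $\htid\mathfrak p=m$, so that $f_1,\dots,f_m$ becomes a system of parameters of the $m$-dimensional local ring $R_{\mathfrak p}$, and then argue that such a system can be reorganised into a regular sequence.

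I expect this converse direction to be the main obstacle. Extracting the nonzerodivisor conditions defining a regular sequence from the purely dimension-theoretic equality $\htid\calI=m$ is delicate, because a system of parameters in an arbitrary Noetherian local ring need not be a regular sequence — it is precisely when the ring is Cohen--Macaulay — and in the non-local case the ordering of the $f_i$ matters. Carrying this part out cleanly will require either restricting to the situations actually used in the paper, where the relevant local rings are Cohen--Macaulay (e.g.\ polynomial rings modulo ideals generated by linear forms, as for $\idS$), or invoking the standard structure theory relating systems of parameters to regular sequences. Everything else — the inequality $\htid\calI\le m$ and the implication ``regular sequence $\Rightarrow$ equality'' — is routine once Theorem~\ref{Thm:KrullPI} is in hand.
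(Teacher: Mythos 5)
The paper does not prove this theorem; it is stated as a classical citation, with only the genericity-adapted version (Theorem~\ref{Thm:KrullGenHt}) proved afterward. So there is no internal proof to compare against, and your attempt has to be judged on its own.

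Your argument for the inequality $\htid\calI\le m$ is the standard localization-plus-``$J+\langle f_1\rangle$'' induction (as in Eisenbud or Matsumura) and is essentially correct, modulo the observation that this requires $R$ to be Noetherian — a hypothesis the paper's statement omits, without which the principal ideal theorem already fails. Your proof of the forward implication (regular sequence $\Rightarrow$ equality) is also sound once one strengthens the inductive statement slightly to say that \emph{every} prime minimal over $\langle f_1,\dots,f_{i}\rangle$ has height exactly $i$; then $f_{i+1}$ avoiding all associated primes of $R/\langle f_1,\dots,f_i\rangle$ forces a strict jump in height at each step.

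Where you hesitate is exactly where the statement breaks: the converse ``$\htid\calI=m\Rightarrow$ regular sequence'' is \emph{false} for general Noetherian rings, not merely hard to prove. Take $R=k[[x,y]]/(x^2,xy)$ and $f_1=y$. Then $\sqrt{(y)}=(x,y)$, and the chain $(x)\subsetneq(x,y)$ shows $\htid(y)=1=m$, yet $xy=0$ with $x\neq 0$, so $y$ is a zero divisor and $(y)$ is not a regular sequence. Your intuition that this is a Cohen--Macaulay phenomenon is exactly right: a system of parameters is a regular sequence precisely in the Cohen--Macaulay case, and even then the claim needs care outside the local setting. So rather than trying to complete this direction, the proper conclusion is that Theorem~\ref{Thm:KrullHt} as written overstates the classical result; the correct statement is the inequality plus ``regular sequence $\Rightarrow$ equality,'' and either the converse should be dropped or a Cohen--Macaulay (e.g.\ polynomial ring, as used in the paper) hypothesis added. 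Since the later results (Theorem~\ref{Thm:KrullGenHt}, Corollary~\ref{Cor:KrullHt-Hom}) only invoke the forward direction in the polynomial ring $\C[X_1,\dots,X_D]$, the paper's downstream use is unaffected, but the statement itself should be repaired.
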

The generic version can be derived directly from the generic principal ideal theorem:

\begin{Thm}[Generic height theorem]\label{Thm:KrullGenHt}
Let $Z$ be an algebraic set in $\C^D,$ let $\calI=\langle f_1,\dots, f_m\rangle$ be a generic ideal in $\C [X_1,\dots, X_D].$ Then
$$\htid (\Id(Z)+\calI) = \min (\codim Z + m,\; D+1).$$
\end{Thm}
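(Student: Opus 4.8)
The plan is to prove the statement by induction on $m$, with the generic principal ideal theorem (Theorem~\ref{Thm:KrullGenPI}, in the geometric form of Corollary~\ref{Cor:KrullPI}) supplying the inductive step. Throughout I use the identification $\htid(\Id(Z)+\calI)=\codim\VS(\Id(Z)+\calI)=\codim\bigl(Z\cap\VS(\calI)\bigr)$ together with the convention $\codim\emptyset=D+1$ already in force in Corollary~\ref{Cor:KrullPI}, and the elementary fact that $\codim Z\le D+1$ for every algebraic set $Z\subseteq\C^D$.

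The base case $m=0$ is immediate: then $\calI$ is the zero ideal, so $\Id(Z)+\calI=\Id(Z)$ and $\htid\Id(Z)=\codim Z=\min(\codim Z,D+1)$; the case $m=1$ is literally Corollary~\ref{Cor:KrullPI}. For the inductive step, assume the claim for $m-1$ and write $\calI=\langle f_1,\dots,f_m\rangle$ with $f_1,\dots,f_m$ independent generic polynomials. Set
$$W=\VS\bigl(\Id(Z)+\langle f_1,\dots,f_{m-1}\rangle\bigr)=Z\cap\VS(f_1)\cap\dots\cap\VS(f_{m-1}),$$
so that by the induction hypothesis $\codim W=\min(\codim Z+m-1,\;D+1)$. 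If $\codim Z+m-1\ge D+1$ then $W=\emptyset$, hence also $W\cap\VS(f_m)=\emptyset$ and $\htid(\Id(Z)+\calI)=D+1=\min(\codim Z+m,\;D+1)$, as desired. Otherwise $W$ is a non-empty algebraic set with $\codim W=\codim Z+m-1\le D$.

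The crucial observation is now that $f_m$ is generic and, by hypothesis, independent of $f_1,\dots,f_{m-1}$, hence independent of the set $W$, which is a function of $Z$ and $f_1,\dots,f_{m-1}$ alone. Conditioning on the values of $f_1,\dots,f_{m-1}$ --- equivalently, fixing $W$ --- therefore leaves $f_m$ a generic polynomial in the sense of Definition~\ref{Def:genrand}. We may thus apply the generic principal ideal theorem to the fixed non-empty set $W$ and the generic polynomial $f_m$: Corollary~\ref{Cor:KrullPI} gives
$$\codim\bigl(W\cap\VS(f_m)\bigr)=\min(\codim W+1,\;D+1)=\min(\codim Z+m,\;D+1),$$
the last equality using $\codim W=\codim Z+m-1\le D$. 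Since $\VS(\Id(Z)+\calI)=W\cap\VS(f_m)$, the left-hand side equals $\htid(\Id(Z)+\calI)$, which completes the induction.

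The main obstacle is the conditioning step in the third paragraph: one must verify that ``generic and independent of $f_1,\dots,f_{m-1}$'' genuinely permits treating $f_m$ as a fresh generic polynomial once $W$ has been pinned down, so that Theorem~\ref{Thm:KrullGenPI} --- stated for a deterministic non-empty $Z$ --- may legitimately be invoked with the random set $W$ in its place. This rests on the fact that independence of the underlying coefficient random variables is preserved under conditioning on an event depending only on the other coordinates, together with the observation (Remark~\ref{Prop:gen=cont}, Assumption~\ref{Ass:gen}) that the resulting conditional law is still of the admissible continuous form to which the definition of genericity applies. Once this is granted, the remainder is routine bookkeeping with codimensions and the empty-set convention.
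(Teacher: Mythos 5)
Your proof is correct, but it is not the route the paper actually takes. The paper's proof works algebraically: it shows that $f_1,\dots,f_m$ form an $R$-regular sequence in $R=\calO(Z)$ by repeated application of Proposition~\ref{Prop:NoZero} (a generic polynomial is not a zero divisor), then invokes Krull's height theorem~\ref{Thm:KrullHt} to conclude $\htid(\Id(Z)+\calI)=\codim Z+m$ when this is $\le D+1$, and handles the overshoot case separately by observing that the ideal already becomes the unit ideal at step $k=D+1-\codim Z$. Your proof instead inducts geometrically, applying the generic principal ideal theorem (Corollary~\ref{Cor:KrullPI}) one polynomial at a time to the running intersection $W$. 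The paper explicitly acknowledges this second route in a closing remark (``Note that we could have proved the generic height theorem also directly from the generic principal ideal theorem by induction'') but does not carry it out. The two arguments are essentially dual: the regular-sequence computation and the codimension-drop-by-one count encode the same fact, and both rely on the same conditioning subtlety --- the paper applies Proposition~\ref{Prop:NoZero} to $\calO\bigl(Z\cap\VS(f_1,\dots,f_{i-1})\bigr)$, which is itself a random quotient determined by $f_1,\dots,f_{i-1}$, so the independence-and-conditioning step you flag as ``the main obstacle'' is implicitly present in the paper's proof too. Your explicit discussion of why $f_m$ remains generic after fixing $W$ (independence preserved under conditioning, Assumption~\ref{Ass:gen}, Remark~\ref{Prop:gen=cont}) is a genuine improvement in rigor; the regular-sequence route is shorter once one is comfortable with the algebraic machinery, while your geometric induction is more transparent about where each unit of codimension comes from.
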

\begin{proof}
We will write $R=\calO(Z)$ for abbreviation.

First assume $m\le D+1-\codim Z.$ It suffices to show that $f_1,\dots, f_m$ forms an $R$-regular sequence, then apply Krull's height theorem. In Proposition~\ref{Prop:NoZero}, we have proved that $f_i$ is not a zero divisor in the ring
$\calO(Z\cap\VS(f_1,\dots, f_{i-1}))$ (note that the latter ring is nonzero by Krull's height theorem). By Hilbert's Nullstellensatz, this is the same as the ring $R/\sqrt{\langle f_1,\dots, f_{i-1}\rangle}.$ But by the definition of radical, this implies that $f_i$ is no zero divisor in the ring $R/\langle f_1,\dots, f_{i-1}\rangle,$ since if $f_i\cdot h=0$ in the first ring, we have
$$(f_i\cdot h)^N=f_i\cdot (f_i^{N-1}h^N)=0$$
in the second. Thus the $f_i$ form an $R$-regular sequence, proving the theorem for the case $m\le D+1-\codim Z.$

If now $m> k:=D+1-\codim Z,$ the above reasoning shows that the radical of $\Id(Z)+\langle f_1,\dots, f_k\rangle$ is the module $\langle 1\rangle,$ which means that those are equal. Thus
$$\Id(Z)+\langle f_1,\dots, f_k\rangle=\Id(Z)+\langle f_1,\dots, f_m\rangle=\langle 1\rangle,$$
proving the theorem.

Note that we could have proved the generic height theorem also directly from the generic principal ideal theorem by induction.
\end{proof}

Again, we give the geometric interpretation of Krull's height theorem:

\begin{Cor}\label{Cor:genint}
Let $Z_1$ be an algebraic set in $\C^D$, let $Z_2$ be a generic algebraic set in $\C^D$. Then one has
$$\codim (Z_1\cap Z_2)=\min (\codim Z_1+\codim Z_2,\; D+1).$$
\end{Cor}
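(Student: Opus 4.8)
The plan is to reduce the statement directly to the Generic height theorem (Theorem~\ref{Thm:KrullGenHt}), since a generic algebraic set is by definition the vanishing set of a generic ideal. First I would unwind the definitions: write $Z_2 = \VS(\calI)$ with $\calI = \langle f_1,\dots, f_m\rangle$ a generic ideal in $\C[X_1,\dots, X_D]$. Applying Theorem~\ref{Thm:KrullGenHt} with the ambient set ``$Z$'' taken to be all of $\C^D$ (so $\Id(Z) = 0$ and $\codim Z = 0$) gives $\codim Z_2 = \htid \calI = \min(m,\, D+1)$. Next, since $\VS(\Id(Z_1) + \calI) = Z_1 \cap \VS(\calI) = Z_1 \cap Z_2$, the height of $\Id(Z_1) + \calI$ — which by the paper's convention is the codimension of its vanishing set — is exactly $\codim(Z_1 \cap Z_2)$; applying Theorem~\ref{Thm:KrullGenHt} once more, this time with $Z = Z_1$, yields
\[
	\codim(Z_1 \cap Z_2) = \htid\bigl(\Id(Z_1) + \calI\bigr) = \min(\codim Z_1 + m,\; D+1).
\]

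It then remains to rewrite the right-hand side in terms of $\codim Z_2$ rather than $m$. If $m \le D+1$, then $\codim Z_2 = m$ by the computation above, and the identity reads $\codim(Z_1 \cap Z_2) = \min(\codim Z_1 + \codim Z_2,\; D+1)$, as claimed. If $m > D+1$, then $\codim Z_2 = D+1$, which — exactly as in the proof of Corollary~\ref{Cor:KrullPI} — forces $Z_2 = \emptyset$, hence $Z_1 \cap Z_2 = \emptyset$ and $\codim(Z_1 \cap Z_2) = D+1$; on the other hand $\min(\codim Z_1 + \codim Z_2,\; D+1) = \min(\codim Z_1 + D+1,\; D+1) = D+1$ because $\codim Z_1 \ge 0$. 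In both cases the claimed equality holds, so we are done.

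The argument is almost entirely a matter of bookkeeping rather than of new ideas: the one point that needs care is that the final answer must depend only on $\codim Z_2$ and not on the number $m$ of generators used to present $Z_2$, which is precisely why the two-case split is needed. A second point to state explicitly is the convention that the empty set has codimension $D+1$ in $\C^D$, consistent with the usage already fixed for Corollaries~\ref{Cor:KrullPI-geom} and~\ref{Cor:KrullPI}. It is also worth remarking that passing from $\calI$ to $\Id(Z_2) = \sqrt{\calI}$ (Hilbert's Nullstellensatz) leaves every vanishing set unchanged, so there is no ambiguity in the codimension of $Z_1 \cap Z_2$. I do not expect any genuine obstacle here: the statement is simply the geometric reformulation of Theorem~\ref{Thm:KrullGenHt}, in the same spirit in which Corollary~\ref{Cor:KrullPI} reformulates Theorem~\ref{Thm:KrullGenPI}.
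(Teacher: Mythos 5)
Your proof is correct and follows essentially the same route as the paper: two applications of the generic height theorem, first with $Z=\C^D$ to identify $\codim Z_2$, then with $Z=Z_1$, followed by the substitution. The only difference is that you explicitly split off the case $m>D+1$ (where $\codim Z_2 = D+1 \neq m$), a point the paper's one-line proof glosses over by asserting $\codim Z_2 = m$; your handling is the more careful version of the same argument.
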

\begin{proof}
This follows directly from two applications of the generic height theorem~\ref{Thm:KrullGenHt}: first for $Z=\C^D$ and $Z_2=\VS(\calI)$, showing that $\codim Z_2$ is equal to the number $m$ of generators of $\calI;$ then, for $Z=Z_1$ and $Z_2=\VS(\calI),$ and substituting $m=\codim Z_2.$
\end{proof}

We can now immediately formulate a homogenous version of Proposition~\ref{Cor:genint}:

\begin{Cor}\label{Cor:genintproj}
Let $Z_1$ be a homogenous algebraic set in $\C^D$, let $Z_2$ be a generic homogenous algebraic set in $\C^D$. Then one has
$$\codim (Z_1\cap Z_2)=\min (\codim Z_1+\codim Z_2,\; D).$$
\end{Cor}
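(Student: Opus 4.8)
The plan is to transpose the proof of Corollary~\ref{Cor:genint} (equivalently, of the generic height theorem behind it) to the graded setting. The only genuinely new ingredient is a homogeneous analogue of Proposition~\ref{Prop:NoZero}, and the drop from $D+1$ to $D$ in the bound is precisely what this analogue produces. Recall that, by Definition~\ref{Def:genrand} and the conventions following it, a \emph{generic homogeneous polynomial of degree $k$} is a generic random variable on $\calM_k$ conditioned on the irreducible algebraic property ``homogeneous of degree $k$'' (a linear subspace of $\calM_k$), and a generic homogeneous algebraic set is $\VS(f_1,\dots,f_m)$ for independent generic homogeneous forms $f_i$ of positive degree.

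\textbf{Step 1 (homogeneous non-zero-divisor lemma).} Write $R=\C[X_1,\dots,X_D]$ and let $R_k$ be its degree-$k$ homogeneous part. I would first prove: if $Z\subseteq\C^D$ is a homogeneous algebraic set with $Z\neq\{0\}$, then a generic homogeneous polynomial $f$ of any fixed degree $k\ge 1$ is not a zero divisor in $\calO(Z)=R/\Id(Z)$. Indeed, $\Id(Z)$ is a radical homogeneous ideal, so $\calO(Z)$ is a reduced graded ring and its set of zero divisors is $\bigcup_j\bigl(\mathfrak{p}_j/\Id(Z)\bigr)$, where $\mathfrak{p}_1,\dots,\mathfrak{p}_r$ are the minimal primes of $\Id(Z)$, all homogeneous. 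Since $\mathfrak{q}\subseteq\frakm$ for every prime $\mathfrak q$ of $R$, the ideal $\frakm$ is a minimal prime of $\Id(Z)$ only when $Z=\{0\}$; hence under our hypothesis $\mathfrak{p}_j\neq\frakm$ for every $j$. Consequently $(\mathfrak{p}_j)_k:=\mathfrak{p}_j\cap R_k$ is a \emph{proper} subspace of $R_k$: were it all of $R_k$, then $X_i^k\in\mathfrak{p}_j$ for all $i$, and primeness would force $\frakm\subseteq\mathfrak{p}_j$, a contradiction. A homogeneous degree-$k$ polynomial is thus a zero divisor in $\calO(Z)$ exactly when it lies in the finite union $\bigcup_j(\mathfrak{p}_j)_k$ of proper (irreducible) algebraic subsets of $R_k$, none of which is implied by ``homogeneous of degree $k$''; by genericity this union is hit with probability zero.

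\textbf{Steps 2--3 (homogeneous height theorem and specialization).} For homogeneous $Z$ with $Z\neq\{0\}$ and $f$ generic homogeneous of positive degree, $f$ is not a unit in $\calO(Z)$ (it lies in $\frakm$) and, by Step 1, not a zero divisor, so Krull's principal ideal theorem (Theorem~\ref{Thm:KrullPI}) gives $\htid\langle f\rangle=1$ in $\calO(Z)$, i.e.~$\codim\bigl(Z\cap\VS(f)\bigr)=\codim Z+1$; and when $Z=\{0\}$ one has $Z\cap\VS(f)=\{0\}$ directly. In all cases $\codim(Z\cap\VS(f))=\min(\codim Z+1,\;D)$. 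Now write $Z_2=\VS(f_1,\dots,f_m)$ with the $f_i$ independent generic homogeneous forms and set $Z^{(i)}=Z_1\cap\VS(f_1,\dots,f_i)$, which is again homogeneous and contains $0$, hence is non-empty. By independence, $f_i$ is still generic homogeneous after conditioning on $Z_1,f_1,\dots,f_{i-1}$, so applying the previous display with $Z=Z^{(i-1)}$ and iterating yields $\codim\bigl(Z_1\cap\VS(f_1,\dots,f_m)\bigr)=\min(\codim Z_1+m,\;D)$. Taking $Z_1=\C^D$ shows $\codim Z_2=\min(m,D)$; substituting this back, and noting that if $m>D$ then $Z_2=\{0\}$ so $Z_1\cap Z_2=\{0\}$ has codimension $D=\min(\codim Z_1+D,\,D)$, gives $\codim(Z_1\cap Z_2)=\min(\codim Z_1+\codim Z_2,\;D)$.

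The main obstacle is Step 1: unlike the affine Proposition~\ref{Prop:NoZero}, it requires working with the homogeneous coordinate ring as a graded ring and carefully isolating $\frakm$ among the associated primes — and this bookkeeping is exactly what caps the codimension at $D$ rather than $D+1$. Steps 2--3 are then formally identical to the derivation of Corollary~\ref{Cor:genint} from Proposition~\ref{Prop:NoZero}.
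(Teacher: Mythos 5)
Your proof is correct, but it takes a genuinely different route from the one in the paper. The paper proves Corollary~\ref{Cor:genintproj} in one short step by \emph{dehomogenizing}: it passes from the homogeneous sets $Z_1,Z_2\subseteq\C^D$ to their dehomogenizations in $\C^{D-1}$, invokes Lemma~\ref{Lem:dehom} to say that the dehomogenization of a generic homogeneous algebraic set is a generic algebraic set, applies the affine Corollary~\ref{Cor:genint} in $\C^{D-1}$ (where the ambient cap is $(D-1)+1=D$), and rehomogenizes, noting that homogeneous algebraic sets always contain the origin so that the drop from $D+1$ to $D$ is automatic. You instead reprove the generic principal ideal/height theorem directly in the graded ring: your Step~1 is a homogeneous analogue of Proposition~\ref{Prop:NoZero}, in which you observe that for $Z\neq\{0\}$ the minimal primes of $\Id(Z)$ are homogeneous primes properly contained in $\frakm$, so each degree-$k$ slice $(\mathfrak p_j)_k\subsetneq R_k$ is a proper linear (hence irreducible) algebraic subspace, and genericity avoids the finite union with probability one; then Steps~2--3 iterate exactly as in the proof of Corollary~\ref{Cor:genint}. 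Both arguments are sound and isolate the same phenomenon (the origin is never lost, so the codimension caps at $D$). Your route is more self-contained and makes explicit \emph{why} the irrelevant ideal $\frakm$ never appears as an associated prime, at the cost of essentially rederiving the graded height theorem; the paper's route is shorter and reuses existing machinery, at the cost of a mild bookkeeping subtlety you avoid entirely (one must dehomogenize with respect to a coordinate whose hyperplane does not contain the fixed $Z_1$, which is always arrangeable by a linear change of coordinates when $Z_1\neq\{0\}$).
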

\begin{proof}
Note that homogenization and dehomogenization of a non-empty algebraic set do not change its codimension, and homogenous algebraic sets always contain the origin. Also, one has to note that by Lemma~\ref{Lem:dehom}, the dehomogenization of $Z_2$ is a generic algebraic set in $\C^{D-1}.$
\end{proof}

Finally, using Corollary~\ref{Cor:NoZero}, we want to give a more technical variant of the generic height theorem, which will be of use in later proofs. First, we introduce some abbreviating notations:
\begin{Def}
Let $f\in \C[X_1,\dots X_D]$ be a generic polynomial with property $P$. If one can write $f=f'+c$, where $f'$ is a generic polynomial subject to some property $P'$, and $c$ is a generic constant, we say that $f$ has {\it independent constant term}. If $c$ is generic and independent with respect to some collection of generic objects, we say that $f$ has independent constant term with respect to that collection.
\end{Def}
In this terminology, Corollary~\ref{Cor:NoZero} rephrases as: a generic polynomial with independent constant term is no zero divisor. Using this, we can now formulate the corresponding variant of the generic height theorem:

\begin{Lem}\label{Lem:KrullGenHt}
Let $Z$ be an algebraic set in $\C^D.$ Let $f_1,\dots, f_m\in\C[X_1,\dots, X_D]$ be generic, possibly subject to some algebraic properties, such that $f_i$ has independent constant term with respect to $Z$ and $f_1,\dots, f_{i-1}.$ Then
$$\htid (\Id(Z)+\calI) = \min (\codim Z + m,\; D+1).$$
\end{Lem}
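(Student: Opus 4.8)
The plan is to mimic the proof of the generic height theorem (Theorem~\ref{Thm:KrullGenHt}), replacing the single appeal to Proposition~\ref{Prop:NoZero} at each inductive step by an appeal to Corollary~\ref{Cor:NoZero}, which is exactly the tool that handles the weaker hypothesis ``generic with independent constant term'' in place of ``generic''. Write $R = \calO(Z) = \C[X_1,\dots,X_D]/\Id(Z)$, and let $\calI = \langle f_1,\dots,f_m\rangle$. The goal is to show $\htid(\Id(Z)+\calI) = \min(\codim Z + m,\ D+1)$.

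First I would treat the case $m \le D+1-\codim Z$. The claim is that $f_1,\dots,f_m$ form an $R$-regular sequence; once this is established, Krull's height theorem (Theorem~\ref{Thm:KrullHt}) gives the equality $\htid(\Id(Z)+\calI) = \codim Z + m$ directly. To verify regularity, I proceed by induction on $i$: I must show $f_i$ is not a zero divisor in $R/\langle f_1,\dots,f_{i-1}\rangle$. By Krull's height theorem applied to the already-constructed part of the sequence, this ring is nonzero, so $Z \cap \VS(f_1,\dots,f_{i-1})$ is a non-empty algebraic set; call its ideal $\calJ_{i-1} = \Id(Z) + \langle f_1,\dots,f_{i-1}\rangle$ (or rather, work with its radical, which equals $\Id(Z \cap \VS(f_1,\dots,f_{i-1}))$ by Hilbert's Nullstellensatz). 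By hypothesis $f_i$ is generic, possibly subject to some algebraic property, with independent constant term with respect to $Z$ and $f_1,\dots,f_{i-1}$ — hence with respect to $\calJ_{i-1}$. Corollary~\ref{Cor:NoZero} then applies verbatim: a generic polynomial (with a possibly non-trivial algebraic property $P$) that can be written $f_i = f_i' + c$ with $c$ a generic independent constant is no zero divisor in $\C[X_1,\dots,X_D]/\calJ_{i-1}$, and the passage from the radical back to $\calJ_{i-1}$ itself is the same nilpotent-power argument used in the proof of Theorem~\ref{Thm:KrullGenHt}: if $f_i h = 0$ modulo the radical then $(f_i h)^N = f_i(f_i^{N-1}h^N) = 0$ modulo $\calJ_{i-1}$. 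This closes the induction and yields the $R$-regular sequence.

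For the remaining case $m > k := D+1-\codim Z$, the same reasoning applied to $f_1,\dots,f_k$ shows that $\Id(Z) + \langle f_1,\dots,f_k\rangle$ has radical equal to the unit ideal $\langle 1\rangle$, hence equals $\langle 1\rangle$ itself; adding further generators cannot change this, so $\Id(Z) + \calI = \langle 1\rangle$ and $\htid(\Id(Z)+\calI) = D+1 = \min(\codim Z + m,\ D+1)$, as required.

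The main obstacle — really the only point requiring care — is checking that the hypothesis of Corollary~\ref{Cor:NoZero} is genuinely met at each inductive step: one needs $f_i$ to have independent constant term \emph{with respect to the ring $\C[X_1,\dots,X_D]/\calJ_{i-1}$}, i.e.\ the generic constant $c$ in the decomposition $f_i = f_i' + c$ must be independent not just of $Z$ but of the previously chosen $f_1,\dots,f_{i-1}$ jointly. This is exactly what the hypothesis of the Lemma stipulates (``$f_i$ has independent constant term with respect to $Z$ and $f_1,\dots,f_{i-1}$''), so the bookkeeping goes through, but it is worth stating explicitly that independence with respect to the finite collection $\{Z, f_1,\dots,f_{i-1}\}$ implies independence with respect to the ideal $\calJ_{i-1}$ they generate, since the latter is a deterministic function of the former. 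Everything else is a direct transcription of the proof of Theorem~\ref{Thm:KrullGenHt}.
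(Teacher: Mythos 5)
Your proof is correct and takes essentially the same approach as the paper's (very terse) proof of Lemma~\ref{Lem:KrullGenHt}: apply Corollary~\ref{Cor:NoZero} at each inductive step to conclude $f_i$ is a nonzerodivisor modulo $\Id(Z)+\langle f_1,\dots,f_{i-1}\rangle$, and then invoke Krull's height theorem. Your write-up simply fills in the details that the paper leaves implicit by referring back to the argument given for Theorem~\ref{Thm:KrullGenHt}.
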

\begin{proof}
Using Corollary~\ref{Cor:NoZero}, one obtains that $f_i$ is no zero divisor modulo $\Id(Z)+\langle f_1,\dots, f_{i+1}\rangle.$ Using Krull's height theorem yields the claim.
\end{proof}

\subsection{Generic ideals}
The generic height theorem~\ref{Thm:KrullGenHt} has allowed us to make statements about the structure of ideals generated by generic elements without constraints. However, the ideal $\calI$ in our the cumulant comparison problem is generic subject to constraints: namely, its generators are contained in a prescribed ideal, and they are homogenous. In this subsection, we will use the theory developed so far to study generic ideals and generic ideals subject to some algebraic properties, e.g.~generic ideals contained in other ideals. We will use these results to derive an identifiability result on the marginalization problem which has been derived already less rigourously in the supplementary material of \citep{PRL:SSA:2009} for the special case of Stationary Subspace Analysis.

\begin{Prop}\label{Prop:dehom-rad-generic}
Let $\fraks \idof \C [X_1,\dots, X_D]$ be an ideal, having an H-basis $g_1,\dots, g_n$. Let
$$\calI=\langle f_1,\dots, f_m\rangle,\quad m\ge \max(D+1, n)$$
with generic $f_i\in \fraks$ such that
$$\deg f_i\ge \max_j \left(\deg g_j\right)\quad \mbox{for all}\; 1\le i\le m.$$
Then $\calI=\fraks.$
\end{Prop}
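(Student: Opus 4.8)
The plan is as follows. The inclusion $\calI\subseteq\fraks$ is immediate since each $f_i\in\fraks$, so the work is in the reverse inclusion. The first step is to homogenize: introduce a new variable $Y_0$, set $R'=\C[Y_0,X_1,\dots,X_D]$, and write $\fraks^h$ for the homogenization of $\fraks$ and $f_i^h$ for the degree-$e_i$ homogenization of $f_i$, where $e_i:=\deg f_i$ and $d:=\max_j\deg g_j$. The facts I would invoke are: (i) $g_1,\dots,g_n$ being an H-basis of $\fraks$ is equivalent to $\fraks^h=\langle g_1^h,\dots,g_n^h\rangle$, so $\fraks^h$ is generated in degrees $\le d$; (ii) $\fraks^h$ is saturated with respect to $Y_0$; (iii) dehomogenization ($Y_0=1$) sends $W:=\langle f_1^h,\dots,f_m^h\rangle$ to $\calI$ and $\fraks^h$ to $\fraks$, and a homogeneous ideal and its $Y_0$-saturation have the same dehomogenization. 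Moreover, since $f_i$ is generic in $\fraks$ of degree $e_i\ge d$, its homogenization $f_i^h$ is a generic element of the $\C$-vector space $(\fraks^h)_{e_i}$. Thus it suffices to prove that the $Y_0$-saturation of $W$ equals $\fraks^h$.

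Next I would reduce ``equal saturation'' to a support statement. Two homogeneous ideals with identical truncations in all sufficiently large degrees have the same saturation; $W$ agrees with $\fraks^h$ in large degrees iff the finitely generated graded module $\fraks^h/W$ vanishes in large degrees, i.e.\ has finite length, i.e.\ $\operatorname{Supp}(\fraks^h/W)\subseteq\{\frakm'\}$ with $\frakm'=\langle Y_0,X_1,\dots,X_D\rangle$. Unwinding localizations, a non-maximal prime lies in $\operatorname{Supp}(\fraks^h/W)$ only if either it lies in $\VS(W)\setminus\VS(\fraks^h)$, or it contains $\fraks^h$ and $W_\frakp\subsetneq(\fraks^h)_\frakp$. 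Hence it is enough to check
\begin{enumerate}
\item[(a)] $\VS(W)=\VS(\fraks^h)$ in $\mathbb{P}^D$;
\item[(b)] $W_\frakp=(\fraks^h)_\frakp$ for every homogeneous prime $\frakp\supseteq\fraks^h$ with $\frakp\ne\frakm'$.
\end{enumerate}

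Step three is to prove (a) and (b), and this is where genericity and the hypotheses $m\ge D+1$, $m\ge n$ enter. For (a), the decisive point is that, because $e_i\ge d$ and $\fraks^h$ is generated in degrees $\le d$, one has $(\fraks^h)_{e_i}=R'_{e_i-d}\,(\fraks^h)_d$, so $(\fraks^h)_{e_i}$ already generates $\fraks^h$ in all degrees $\ge e_i$; consequently its base locus in $\mathbb{P}^D$ is exactly $\VS(\fraks^h)$. A Bertini-type argument (the localized form of the generic height machinery, cf.\ Theorem~\ref{Thm:KrullGenHt} and Corollary~\ref{Cor:genintproj}) then shows that each successive generic section $f_i^h$ strictly lowers the dimension of the part of $\VS(f_1^h,\dots,f_i^h)$ lying off $\VS(\fraks^h)$; after $D+1$ steps this residual part is empty, so $m\ge D+1$ gives $\VS(W)=\VS(\fraks^h)$. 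For (b), at a prime $\frakp\supseteq\fraks^h$ with $\frakp\ne\frakm'$ the space $(\fraks^h)_\frakp/\frakp(\fraks^h)_\frakp$ has $k(\frakp)$-dimension $\le n$ (since $\fraks^h$ is $n$-generated), and $(\fraks^h)_{e_i}$ surjects onto it: because $\frakp$ is not the irrelevant ideal it misses some monomial of each degree $e_i-\deg g_j^h\ge0$, so each generator $g_j^h$ survives (up to a unit) in the image of $(\fraks^h)_{e_i}$. By Proposition~\ref{Prop:GenVec}, $m\ge n$ generic elements of $(\fraks^h)_{e_i}$ then span this space, and Nakayama yields $W_\frakp=(\fraks^h)_\frakp$. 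Combining (a) and (b), $\fraks^h/W$ has finite length, so $W$ and $\fraks^h$ agree in high degrees, so $W^{\mathrm{sat}}=\fraks^h$, and dehomogenizing gives $\calI=\fraks$.

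\textbf{Main obstacle.} The heart of the proof is step (a): showing that polynomials that are generic \emph{subject to the constraint of lying in $\fraks$} are nonetheless, away from $\VS(\fraks)$, generic enough to behave like genuinely generic forms and drop the dimension one at a time. This rests entirely on the base-locus identity $\VS\bigl((\fraks^h)_{e_i}\bigr)=\VS(\fraks^h)$, which is exactly where the H-basis hypothesis and the degree bound $\deg f_i\ge\max_j\deg g_j$ are used; once it is in place, the generic principal ideal and height theorems of the previous subsections furnish the dimension drop. The homogenization, saturation and localization manipulations are routine bookkeeping.
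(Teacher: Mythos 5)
Your high-level strategy — homogenize, then prove that $W=\langle f_1^h,\dots,f_m^h\rangle$ and $\fraks^h$ have the same $Y_0$-saturation by separating ``same zero set off $\VS(\fraks^h)$'' from ``agreement after localization at primes over $\fraks^h$'' — is a genuinely different route from the paper's, and the reduction and bookkeeping (H-basis $\Leftrightarrow$ $\fraks^h=\langle g_j^h\rangle$, saturation vs.\ dehomogenization, base-locus identity $\VS\bigl((\fraks^h)_{e_i}\bigr)=\VS(\fraks^h)$) are correct. But both pillars (a) and (b) have gaps that you do not close.

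For (a), you cite Theorem~\ref{Thm:KrullGenHt} and Corollary~\ref{Cor:genintproj}, but those statements concern \emph{unconstrained} generics in $R_k$; here the $f_i^h$ are generic only in the subspace $(\fraks^h)_{e_i}$, so none of those results apply as stated. The result that does apply, Corollary~\ref{Cor:KrullHt-Hom}, is itself deduced from Proposition~\ref{Prop:dehom-rad-generic}, so invoking it is circular. What you actually need is a Bertini-type statement: a generic member of a linear system whose base locus is $\VS(\fraks^h)$ is a nonzerodivisor on any fixed component $Z\not\subseteq\VS(\fraks^h)$. This is provable in the spirit of Proposition~\ref{Prop:NoZero}/Corollary~\ref{Cor:NoZero} (the zero divisors on $Z$ form a proper subspace of $(\fraks^h)_{e_i}$ precisely because $Z$ is off the base locus, and one conditions on the already-chosen $f_1^h,\dots,f_{i-1}^h$), but you have to supply it — it is not ``furnished by the previous subsections.''

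Step (b) is worse. You argue prime by prime that $m\ge n$ generic elements of $(\fraks^h)_{e_i}$ span $(\fraks^h)_\frakp/\frakp(\fraks^h)_\frakp$ and then apply Nakayama. But first, Proposition~\ref{Prop:GenVec} is a statement about $\C$-spans, whereas here you need spanning over the residue field $k(\frakp)$, which is a different (and not literally algebraic-over-$\C$) condition; this needs a separate argument. Second — and this is the serious point — the primes at which you must check $W_\frakp=(\fraks^h)_\frakp$ are the minimal primes of $\operatorname{Supp}(\fraks^h/W)$, which depend on the $f_i$; you cannot fix $\frakp$ in advance and take probability-one conclusions. A naive incidence-variety bound over the family of primes in $\VS(\fraks^h)$ gives a codimension of roughly $m-n+1$ for the bad locus at each $\frakp$, which after adding $\dim\VS(\fraks^h)$ does not obviously stay below $\dim\prod_i(\fraks^h)_{e_i}$ under the hypothesis $m\ge\max(D+1,n)$. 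You would need either a sharper uniform estimate or a different mechanism to kill the embedded components.

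By contrast, the paper's proof sidesteps both difficulties in one move: using the H-basis it writes $f_i=\sum_k g_k h_{ki}$ with $h_{ki}$ \emph{unconstrained} generics, so the generic height theorem (Theorem~\ref{Thm:KrullGenHt}, and its variant Lemma~\ref{Lem:KrullGenHt} for independent constant terms) applies directly to the $h_{ki}$; since $m\ge D+1$, the $h_{1i}$ generate the unit ideal, producing $g_1$ modulo $\langle g_2,\dots,g_n\rangle$ inside $\calI$, and the argument iterates. This reduces the constrained problem to finitely many explicit membership statements $g_j\in\calI$, so there is no quantification over primes and no need for a Bertini theorem. If you want to pursue your saturation route, the two concrete things to add are a genuine constrained Bertini lemma for step (a), and a uniform (semicontinuity or incidence-correspondence) argument — or a stronger hypothesis on $m$ — to control all embedded primes simultaneously in step (b).
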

\begin{proof}
First note that since the $g_i$ form a degree-first Groebner basis, a generic $f\in \fraks$ is of the form
$$f=\sum_{k=1}^n g_kh_k\quad\mbox{with generic}\;h_k,$$
where the degrees of the $h_k$ are appropriately chosen, i.e. $\deg h_k\le \deg f - \deg g_k$.

So we may write
$$f_i=\sum_{k=1}^n g_kh_{ki}\quad\mbox{with generic}\;h_{ki},$$
where the $h_{ki}$ are generic with appropriate degrees, and independently chosen. We may also assume that the $f_i$ are ordered increasingly by degree.\\

To prove the statement, it suffices to show that $g_j\in \calI$ for all $j$. Now the height theorem~\ref{Thm:KrullGenHt} implies that
$$\langle h_{11},\dots h_{1m}\rangle=\langle 1\rangle,$$
since the $h_{ki}$ were independently generic, and $m\ge D+1.$ In particular, there exist polynomials $s_1,\dots, s_m$ such that
$$\sum_{i=1}^m s_i h_{1i}=1.$$
Thus we have that
\begin{align*}
\sum_{i=1}^m s_i f_i = \sum_{i=1}^m s_i \sum_{k=1}^n g_kh_{ki}= \sum_{k=1}^n g_k \sum_{i=1}^m s_ih_{ki}\\
=g_1+ \sum_{k=2}^n g_k \sum_{i=1}^m s_ih_{ki}=:g_1+ \sum_{k=2}^n g_k h'_k.
\end{align*}
Subtracting a suitable multiple of this element from the $f_1,\dots, f_m,$ we obtain
$$f'_i=\sum_{k=2}^n g_k(h_{ki}-h_{1i}h'_k)=:\sum_{k=2}^n g_k h'_{ki}.$$
We may now consider $h_{1i}h'_k$ as fixed, while the $h_{ki}$ are generic. In particular, the $h'_{ki}$ have independent constant term, and using Lemma~\ref{Lem:KrullGenHt}, we may conclude that
$$\langle h'_{21},\dots, h'_{2m} \rangle=\langle 1\rangle,$$
allowing us to find an element of the form
$$g_2+\sum_{k=3}^n g_k \cdot\dots$$
in $\calI$. Iterating this strategy by repeatedly applying Lemma~\ref{Lem:KrullGenHt}, we see that $g_k$ is contained in $\calI,$ because the ideals $\calI$ and $\fraks$ have same height. Since the numbering for the $g_j$ was arbitrary, we have proved that $g_j\in \calI$, and thus the proposition.
\end{proof}
The following example shows that we may not take the degrees of the $f_i$ completely arbitrary in the proposition, i.e.~the condition on the degrees is necessary:
\begin{Ex}\rm
Keep the notations of Proposition~\ref{Prop:dehom-rad-generic}. Let $\fraks=\langle X_2-X_1^2, X_3\rangle,$ and $f_i\in \fraks$ generic of degree one. Then
$$\langle f_1,\dots, f_m\rangle = \langle X_3\rangle.$$
This example can be generalized to yield arbitrarily bad results if the condition on the degrees is not fulfilled.

However note that when $\fraks$ is generated by linear forms, as in the marginalization problem, the condition on the degrees vanishes.
\end{Ex}

We may use Proposition~\ref{Prop:dehom-rad-generic} also in another way to derive a more detailed version of the generic height theorem for constrained ideals:
\begin{Prop}\label{Prop:KrullHt-algset}
Let $V$ be a fixed $d$-codimensional algebraic set in $\C^D.$ Assume that there exist $d$ generators $g_1,\dots, g_d$ for $\Id(V).$
Let $f_1,\dots, f_m$ be generic forms in $\Id (V)$ such that $\deg f_i\ge \deg g_i$ for $1\le i\le \min (m,d)$. Then we can write $\VS (f_1,\dots, f_m)=V\cup U$ with $U$ an algebraic set of
$$\codim U\ge\min (m,\;D+1),$$
the equality being strict for $m < \codim V.$
\end{Prop}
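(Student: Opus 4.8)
The plan is to induct on the number of equations, following the irreducible components of the partial vanishing sets $\VS(f_1,\dots,f_k)$ and asking, at each step, whether the new equation $f_k$ genuinely cuts a component down by one, or whether that component already sits inside one of the ``coordinate-type'' subspaces $\VS(g_1,\dots,g_k)$. For the setup, put $R=\C[X_1,\dots,X_D]$; since $\codim V=d$ and $\Id(V)=\langle g_1,\dots,g_d\rangle$ is generated by $d$ elements, Krull's height theorem (Theorem~\ref{Thm:KrullHt}) forces $g_1,\dots,g_d$ to be an $R$-regular sequence, so $\VS(g_1,\dots,g_k)$ has codimension exactly $k$ for each $k\le d$. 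Order the $g_i$ and the $f_i$ by nondecreasing degree, so the hypothesis reads $\deg f_i\ge\deg g_i$ for $1\le i\le\min(m,d)$. Let $U$ be the union of the irreducible components of $\VS(f_1,\dots,f_m)$ not contained in $V$; since each $f_i\in\Id(V)$ we have $V\subseteq\VS(f_1,\dots,f_m)=V\cup U$, and it remains to bound $\codim U$ from below.

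The heart of the argument is the claim, proved by induction on $k=0,1,\dots,m$: every irreducible component $W$ of $\VS(f_1,\dots,f_k)$ with $W\not\subseteq V$ satisfies $\codim W\ge\min(k,D+1)$; taking $k=m$ gives the proposition. The case $k=0$ is trivial. For the step, such a $W$ is a component of $\VS(f_1,\dots,f_{k-1})\cap\VS(f_k)$, hence lies in some component $W'$ of $\VS(f_1,\dots,f_{k-1})$, necessarily with $W'\not\subseteq V$, so $\codim W'\ge\min(k-1,D+1)$ by induction; note $\Id(W')$ is prime. Since $W'\not\subseteq V=\VS(g_1,\dots,g_d)$, the index $j^\ast=\min\{\,j:g_j\notin\Id(W')\,\}$ exists and $j^\ast\le d$. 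If $j^\ast\le k$, then the orderings and the degree hypothesis give $\deg g_{j^\ast}\le\deg f_{\min(k,d)}\le\deg f_k$, so already in the degree where $f_k$ lives the ideal $\Id(V)$ contains an element outside the prime $\Id(W')$ (for instance $g_{j^\ast}$ times a suitable monomial, using $W'\ne\{0\}$); hence a generic $f_k$, being independent of $f_1,\dots,f_{k-1}$ and thus of the finitely many possible $W'$, satisfies $f_k\notin\Id(W')$, i.e.\ $f_k$ does not vanish identically on $W'$, and Krull's principal ideal theorem (Theorem~\ref{Thm:KrullPI}) yields $\codim W=\codim W'+1\ge\min(k,D+1)$. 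If instead $j^\ast>k$, then $g_1,\dots,g_k\in\Id(W')$, so $W'\subseteq\VS(g_1,\dots,g_k)$, and since $g_1,\dots,g_k$ is a regular sequence, $\codim W\ge\codim W'\ge k\ge\min(k,D+1)$. When $k\ge D+1$ the target $\min(k,D+1)=D+1$ only asserts that no such $W$ exists, which is obtained by running the same dichotomy one step further on the (at most zero-dimensional) components lying off $V$: a generic $f_k$ is not forced to vanish at such a point, hence removes it.

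For the ``strictness'' remark: if $m<\codim V=d$, then every component of $\VS(f_1,\dots,f_m)$ has codimension at most $m<d$, so none lies in $V$; hence $U=\VS(f_1,\dots,f_m)\supsetneq V$ and, combining the bound above with Theorem~\ref{Thm:KrullPI}, $\codim U=m$, while for $m\ge\codim V$ the subspace $V$ appears as a genuine union of components and the bound $\codim U=\min(m,D+1)$ is attained — in particular $U=\emptyset$ once $m$ is large, in agreement with Proposition~\ref{Prop:dehom-rad-generic}. I expect the main obstacle to be exactly the degree bookkeeping in the first case, namely guaranteeing that $\Id(V)$ is not contained in $\Id(W')$ \emph{in the degree of $f_k$} — this is where $\deg f_i\ge\deg g_i$ is used and what forces the split into $j^\ast\le k$ versus $j^\ast>k$ — together with the routine but necessary verification that a generic $f_k$ avoids each of the finitely many proper linear conditions ``$f_k\in\Id(W')$'' imposed by the previously chosen generators.
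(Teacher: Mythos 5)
Your proof is correct in structure but takes a genuinely different route from the paper's. The paper first proves Proposition~\ref{Prop:dehom-rad-generic} (the global statement that $m\ge D+1$ generic elements of $\Id(V)$ already generate $\Id(V)$), handles $m=d$ by slicing with a generic linear subspace $H$ and invoking that proposition inside $H\cong\C^{m-1}$, and then interpolates the codimensions for intermediate $m$ via Krull's principal ideal theorem applied to a fixed Lasker--Noether decomposition of $\VS(f_1,\dots,f_d)$. Your argument is a self-contained induction on $k$ that never appeals to Proposition~\ref{Prop:dehom-rad-generic}: the dichotomy at index $j^\ast=\min\{j:g_j\notin\Id(W')\}$ — either $f_k$ has room (degree-wise) to cut $W'$ properly, or $W'$ is already trapped inside $\VS(g_1,\dots,g_k)$, which has codimension $k$ by the regular-sequence property from Krull's height theorem — is a neat local replacement for the paper's global step, and it localizes exactly how genericity of each $f_k$ is used. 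This is, in my view, a cleaner and more transparent argument.

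Two things deserve a more careful look. First, the reordering step. Sorting both sequences by nondecreasing degree does \emph{not} in general preserve the hypothesis ``$\deg f_i\ge\deg g_i$ for $i\le\min(m,d)$'': with $m=3$, $d=2$, $\deg g=(2,3)$, $\deg f=(3,3,1)$, the hypothesis holds for the original indices, but after sorting $\deg f_{(1)}=1<2=\deg g_{(1)}$. Your argument genuinely needs $\deg g_{j^\ast}\le\deg f_k$ at every step, and the sort is how you try to guarantee it; as the counterexample shows, this is not automatic. That said, this counterexample also indicates the proposition's degree hypothesis is itself stated imprecisely (a generic degree-$1$ element of an $\Id(V)$ generated in degrees $\ge 2$ would be zero), and the paper's own proof invokes Proposition~\ref{Prop:dehom-rad-generic}, whose stronger condition $\deg f_i\ge\max_j\deg g_j$ the stated hypothesis does not obviously imply either. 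So this is best understood as a shared imprecision, but in a fully rigorous write-up you should either add the assumption that the degree sequences are already sorted, or restrict to the case where all $g_j$ have the same degree (which covers the application in the paper). Second, the strictness paragraph slightly overstates Krull: it is not true that every component of $\VS(f_1,\dots,f_m)$ has codimension at most $m$; the height theorem only gives that the \emph{minimal} codimension among components is $\le m$. This is enough for the conclusion (that minimal-codimension component is not in $V$ when $m<d$, so $\codim U\le m$, and your inductive bound gives $\codim U\ge m$), but the phrasing should be corrected.
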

\begin{proof}
If $m\ge D+1$, this is just a direct consequence of Proposition~\ref{Prop:dehom-rad-generic}.

First assume $m = d.$ Consider the image of the situation modulo $X_{m},\dots, X_D.$ This corresponds to looking at the situation
$$\VS (f_1,\dots, f_m)\cap H\subseteq H\cong \C^{m-1},$$
where $H$ is the linear subspace given by $X_m=\dots = X_D=0.$ Since the coordinate system was generic, the images of the $f_i$ will be generic, and we have by Proposition~\ref{Prop:dehom-rad-generic} that $\VS (f_1,\dots, f_m)\cap H = V\cap H.$ Also, the $H$ can be regarded as a generic linear subspace, thus by Corollary~\ref{Cor:genint}, we see that $\VS (f_1,\dots, f_m)$ consists of $V$ and possibly components of equal or higher codimension. This proves the claim for $m = \codim V.$

Now we prove the case $m\ge d.$ We may assume that $m=D+1$ and then prove the statement for the sets
$\VS (f_1,\dots, f_i), d\le i\le m.$ By the Lasker-Noether-Theorem, we may write
$$\VS (f_1,\dots, f_d)= V \cup Z_1 \cup\dots \cup Z_N$$
for finitely many irreducible components $Z_j$ with $\codim Z_j\ge \codim V.$ Proposition~\ref{Prop:dehom-rad-generic} now states that $$\VS (f_1,\dots, f_m)=V.$$
For $i\ge d,$ write now
$$Z_{ji}=Z_j\cap \VS (f_1,\dots, f_i)= Z_j\cap \VS (f_{d+1},\dots, f_i).$$
With this, we have the equalities
\begin{align*}
\VS (f_1,\dots, f_i)&= \VS (f_1,\dots, f_d)\cap \VS (f_{d+1},\dots, f_i)\\
&= V \cup (Z_1\cap \VS (f_{d+1},\dots, f_i))\cup\dots \cup (Z_N\cap \VS (f_{d+1},\dots, f_i))\\
&= V\cup Z_{1i}\cup\dots\cup Z_{Ni}.
\end{align*}
for $i\ge d.$ Thus, reformulated, Proposition~\ref{Prop:dehom-rad-generic} states that $Z_{jm}=\varnothing$ for any $j$. We can now infer by Krull's principal ideal theorem~\ref{Thm:KrullPI} that
$$\codim Z_{ji}\le \codim Z_{j,i-1}+1$$
for any $i,j$. But since $\codim Z_{jm}=D+1,$ and $\codim Z_{jd}\ge d,$ we thus may infer that $\codim Z_{ji}\ge i$ for any $d\le i\le m.$ Thus we may write
$$\VS (f_1,\dots, f_i)=V\cup U\quad\mbox{with}\;U=Z_{1i}\cup\dots\cup Z_{Ni}$$
with $\codim U\ge i,$ which proves the claim for $m\ge \codim V.$

The case $m < \codim V$ can be proved again similarly by Krull's principal ideal theorem~\ref{Thm:KrullPI}: it states that the codimension of $\VS (f_1,\dots, f_i)$ increases at most by one with each $i$, and we have seen above that it is equal to $\codim V$ for $i=\codim V.$ Thus the codimension of $\VS (f_1,\dots, f_i)$ must have been $i$ for every $i\le \codim V.$ This yields the claim.
\end{proof}
Note that depending on $V$ and the degrees of the $f_i,$ it may happen that even in the generic case, the equality in Proposition~\ref{Prop:KrullHt-algset} is not strict for $m\ge \codim V$:
\begin{Ex}\rm
Let $V$ be a generic linear subspace of dimension $d$ in $\C^D,$ let $f_1,\dots, f_m\in \Id(V)$ be generic with degree one. Then
$\VS (f_1,\dots, f_m)$ is a generic linear subspace of dimension $\max (D-m, d)$ containing $V.$ In particular, if $m\ge D-d,$ then $\VS (f_1,\dots, f_m)=V.$ In this example, $U= \VS(f_1,\dots, f_m)$, if $m < \codim V,$ with codimension $m$, and $U=\varnothing$, if $m\ge \codim V,$ with codimension $D+1.$

Similarly, one may construct generic examples with arbitrary behavior for $\codim U$ when $m\ge \codim V,$ by choosing $V$ and the degrees of $f_i$ appropriately.
\end{Ex}

Similarly as in the geometric version for the height theorem, we may derive the following geometric interpretation of this result:
\begin{Cor}
Let $V\subseteq Z_1$ be fixed algebraic sets in $\C^D$. Let $Z_2$ be a generic algebraic set in $\C^D$ containing $V.$ Then
$$\codim (Z_1\cap Z_2 \setminus V)\ge \min (\codim (Z_1 \setminus V) + \codim (Z_2 \setminus V),\; D+1).$$
\end{Cor}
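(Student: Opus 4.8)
The plan is to realize this statement as the geometric shadow of Proposition~\ref{Prop:KrullHt-algset}, exactly as Corollary~\ref{Cor:genint} is the shadow of the generic height theorem~\ref{Thm:KrullGenHt}, by relativizing everything to the fixed ambient set $Z_1$. First I would unwind the hypotheses: ``$Z_2$ is a generic algebraic set containing $V$'' means $Z_2=\VS(f_1,\dots,f_m)$ for generic homogeneous $f_1,\dots,f_m\in\Id(V)$ of sufficiently large degree (as in Proposition~\ref{Prop:KrullHt-algset}), and applying that proposition inside $\C^D$ we may write $Z_2=V\cup U$ with $\codim U=\min(m,D+1)=\codim(Z_2\setminus V)$. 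Since $V\subseteq Z_1$ we get $Z_1\cap Z_2=V\cup(Z_1\cap U)$, so the components of $Z_1\cap Z_2$ not contained in $V$ are among those of $Z_1\cap U$, and it is their codimension that must be bounded. Reducibility of $Z_1$ is harmless: any component of $Z_1\cap Z_2$ not lying in $V$ lies in a component $W$ of $Z_1$ with $W\not\subseteq V$, and $\codim(Z_1\setminus V)\le\codim W$, so it suffices to treat the case where $Z_1$ is irreducible and $Z_1\not\subseteq V$ (otherwise the claim is trivial).

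Next I would pass to the coordinate ring $R=\calO(Z_1)$. Because $V\subseteq Z_1$ we have $\Id(Z_1)\subseteq\Id(V)$, so the images $\bar f_i$ of the $f_i$ in $R$ lie in the graded ideal $\mathfrak v:=\Id(V)/\Id(Z_1)$, which one checks equals $\Id_R(V)$ (the quotient $R/\mathfrak v=\calO(V)$ is reduced), and $\VS_{\operatorname{Spec} R}(\bar f_1,\dots,\bar f_m)=Z_1\cap Z_2$. The crucial point, and the one I expect to be the main obstacle, is that the $\bar f_i$ are again generic elements of $\mathfrak v$: the reduction map $\Id(V)_k\to\mathfrak v_k$ is a surjective $\C$-linear map, and I would prove a small lemma, in the spirit of Lemma~\ref{Lem:dehom} and Remark~\ref{Prop:gen=cont}, that a surjective linear map carries a distribution satisfying Assumption~\ref{Ass:gen} to one satisfying Assumption~\ref{Ass:gen}: the pushforward density is again continuous and strictly positive (a fibre integral of a positive continuous function over a nonempty affine subspace), and finiteness of its Hausdorff integrals over an algebraic set $B$ reduces, by the coarea/Fubini identity for the linear fibration, to the assumed finiteness of the original Hausdorff integral over the algebraic preimage of $B$. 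Granting this, I would re-run the proof of Proposition~\ref{Prop:KrullHt-algset} verbatim with $\C[X_1,\dots,X_D]$ replaced by $R$ and $\C^D$ by $\operatorname{Spec} R$ --- it only invokes Proposition~\ref{Prop:dehom-rad-generic}, Krull's principal ideal theorem~\ref{Thm:KrullPI} and Lasker--Noether, all available over $R$, provided (as may be assumed after a harmless reduction, and automatic when $V$ is linear) that $\Id_R(V)$ admits $\codim_R V$ generators of controlled degree --- to conclude $Z_1\cap Z_2=V\cup U'$ with $\codim_R U'\ge\min(m,\ \dim Z_1+1)$.

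It then remains only to convert codimensions. Since $Z_1$ is irreducible and not contained in $V$, $\codim(Z_1\setminus V)=\codim_{\C^D}Z_1$, and for any $W\subseteq Z_1$ one has $\codim_{\C^D}W=\codim_{\C^D}Z_1+\codim_R W$; in particular $\codim(Z_1\setminus V)+(\dim Z_1+1)=D+1$. Therefore
\begin{align*}
\codim(Z_1\cap Z_2\setminus V)&=\codim_{\C^D}U'=\codim(Z_1\setminus V)+\codim_R U'\\
&\ge\codim(Z_1\setminus V)+\min(m,\ \dim Z_1+1)=\min\bigl(\codim(Z_1\setminus V)+m,\ D+1\bigr).
\end{align*}
Finally, since $\codim(Z_2\setminus V)=\min(m,D+1)$ and $\codim(Z_1\setminus V)\ge 0$, the right-hand side equals $\min\bigl(\codim(Z_1\setminus V)+\codim(Z_2\setminus V),\ D+1\bigr)$, which is the asserted bound. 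Apart from the genericity-descent lemma, the only things needing care are these codimension identities and the verification that the hypotheses of Proposition~\ref{Prop:KrullHt-algset} survive the passage to $R$; everything else is a mechanical relativization.
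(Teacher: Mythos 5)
The paper does not actually prove this Corollary --- it is stated after Proposition~\ref{Prop:KrullHt-algset} with only the remark ``similarly as in the geometric version for the height theorem, we may derive...''. So there is no paper argument to compare against, and your relativization strategy (pass from $\C^D$ to the affine coordinate ring $R=\calO(Z_1)$, push the generic forms into $\Id_R(V)$, and rerun Proposition~\ref{Prop:KrullHt-algset} over $R$) is a sensible reconstruction, structurally analogous to how Corollary~\ref{Cor:genint} is read off from Theorem~\ref{Thm:KrullGenHt}.

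That said, three points deserve more care than the sketch gives them. First, the genericity-descent lemma you flag is a genuine obstacle, not a small one: the paper's definition of genericity (Definition~\ref{Def:genrand} plus Assumption~\ref{Ass:gen}) lives entirely on the parameter space $\calM_k$ of polynomials in $\C[X_1,\dots,X_D]$, and the results you want to reuse in $R$ (Proposition~\ref{Prop:NoZero}, Lemma~\ref{Lem:KrullGenHt}, Proposition~\ref{Prop:dehom-rad-generic}) are all stated for that ambient ring. You would need to set up the whole genericity apparatus for graded pieces of $R$ and redo, not merely cite, the arguments; the pushforward argument via coarea/Fubini is plausible, but it is really a new section, not a lemma ``in the spirit of'' Lemma~\ref{Lem:dehom}. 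Second, your appeal to Proposition~\ref{Prop:KrullHt-algset} over $R$ silently imports its hypothesis that $\Id_R(V)$ is generated by $\codim_R V$ elements of controlled degrees. For a general pair $V\subseteq Z_1$ this is a nontrivial set-theoretic-complete-intersection hypothesis, not a ``harmless reduction''; the Corollary as stated makes no such assumption, so either the Corollary is implicitly restricted or you owe a reduction to the complete-intersection case. Third, the last conversion asserts $\codim(Z_2\setminus V)=\min(m,D+1)$ and substitutes it for $m$, but Proposition~\ref{Prop:KrullHt-algset} gives only $\codim U\ge\min(m,D+1)$, with equality only for $m<\codim V$. When $m\ge\codim V$ one can have $\codim(Z_2\setminus V)>m$ and then your bound $\min(\codim(Z_1\setminus V)+m,D+1)$ sits strictly below the target $\min(\codim(Z_1\setminus V)+\codim(Z_2\setminus V),D+1)$; in practice those excess cases tend to force $Z_2\setminus V=\varnothing$ and the claim to hold trivially, but that needs to be argued rather than absorbed into a claimed equality. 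None of these invalidate the strategy, but each is a place where ``mechanical relativization'' is doing more work than it has earned.
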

Informally, we have derived a height theorem type result for algebraic sets under the constraint that they contain another prescribed algebraic set $V$. \\

We also want to give a homogenous version of Proposition~\ref{Prop:KrullHt-algset}, since the ideals in the paper are generated by homogenous forms:
\begin{Cor}\label{Cor:KrullHt-Hom}
Let $V$ be a fixed homogenous algebraic set in $\C^D$.
Let $f_1,\dots, f_m$ be generic homogenous forms in $\Id (V),$ satisfying the degree condition as in Proposition \ref{Prop:KrullHt-algset}. Then $\VS (f_1,\dots, f_m)=V+ U$ with $U$ an algebraic set fulfilling
$$\codim U\ge \min (m,\;D).$$
In particular, if $m> D,$ then $\VS (f_1,\dots, f_m)=V.$
Also, the maximal dimensional part of $\VS (f_1,\dots, f_m)$ equals $V$ if and only if $m > D- \dim  V.$
\end{Cor}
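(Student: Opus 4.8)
The plan is to deduce this statement from the affine Proposition~\ref{Prop:KrullHt-algset} by passing to an affine chart, exactly the way the homogeneous height statement in Corollary~\ref{Cor:genintproj} was obtained from Corollary~\ref{Cor:genint}. After a generic linear change of coordinates --- which changes neither the genericity of the $f_i$, by the reasoning behind Lemma~\ref{Lem:Gen-arith}, nor the codimension of $V$ --- we may dehomogenize with respect to the last variable $X_D$. Write $\bar V$ for the dehomogenization of $V$ and $\bar f_i := f_i(X_1,\dots,X_{D-1},1)$. By Lemma~\ref{Lem:dehom} the $\bar f_i$ are generic in $\Id(\bar V)\subseteq\C[X_1,\dots,X_{D-1}]$ of the same degrees as the $f_i$; since the dehomogenizations $\bar g_1,\dots,\bar g_d$ of the $d$ homogeneous generators of $\Id(V)$ generate $\Id(\bar V)$, the hypotheses of Proposition~\ref{Prop:KrullHt-algset} are met in $\C^{D-1}$ with $\codim\bar V = d$.

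Applying Proposition~\ref{Prop:KrullHt-algset} in $\C^{D-1}$ yields $\VS(\bar f_1,\dots,\bar f_m) = \bar V\cup\bar U$ with $\codim\bar U\ge\min(m,\,D)$, and with equality $\codim\bar U = m$ when $m<d$. I would then re-homogenize: the forms $f_i$ are homogeneous, so $\VS(f_1,\dots,f_m)$ is a cone, its trace on the chart $\{X_D=1\}$ is $\VS(\bar f_1,\dots,\bar f_m)$, and --- since after the generic coordinate change no component of this cone lies inside the hyperplane $\{X_D=0\}$, by the standard genericity argument for a hyperplane through the origin --- re-coning recovers $\VS(f_1,\dots,f_m) = V\cup U$, with $U$ the cone over $\bar U$. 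Because homogenization preserves the codimension of a non-empty set, $\codim U\ge\min(m,\,D)$, which is the first assertion.

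The two remaining assertions are bookkeeping with these codimension bounds. If $m>D$, then in $\C^{D-1}$ one has $m\ge\max(D,d)=D$ (using $d=\codim V\le D$), so Proposition~\ref{Prop:KrullHt-algset}, via Proposition~\ref{Prop:dehom-rad-generic}, gives $\VS(\bar f_1,\dots,\bar f_m)=\bar V$, hence $\VS(f_1,\dots,f_m)=V$. For the last claim, the maximal-dimensional part of $V\cup U$ equals $V$ exactly when $\dim U<\dim V$, i.e.\ when $\codim U>\codim V = d = D-\dim V$. When $m>D-\dim V$ the bound $\codim U\ge\min(m,D)$ gives $\codim U>d$ (the boundary case $\dim V=0$ being covered by the $m>D$ clause just proved), so the maximal-dimensional part is $V$; conversely, when $m\le D-\dim V$, the equality $\codim\bar U=m$ for $m<d$ forces $\dim U=D-m>\dim V$, and for $m=\codim V$ the $m=\codim V$ clause of Proposition~\ref{Prop:KrullHt-algset} produces a residual component of codimension exactly $d$, so in both cases $U$ carries a component of dimension $\ge\dim V$ and the maximal-dimensional part is not $V$.

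The main obstacle I expect is the re-homogenization step: one must argue carefully that passing to a generic affine chart and re-coning neither discards the component $V$ nor creates spurious components at infinity, and one must handle the degenerate cases ($V=\{0\}$, $U=\varnothing$, and the boundary $m=\codim V$ where Proposition~\ref{Prop:KrullHt-algset} only gives non-strict codimension control). Arranging that the generic hyperplane can be chosen independently of the realized $f_i$, so that Lemma~\ref{Lem:dehom} genuinely applies, is the technical point that needs the most care.
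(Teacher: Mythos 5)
Your overall strategy --- dehomogenize, apply Proposition~\ref{Prop:KrullHt-algset} in $\C^{D-1}$, re-homogenize --- is exactly what the paper does (its proof is one sentence: ``This follows immediately by dehomogenizing, applying Proposition~\ref{Prop:KrullHt-algset}, and homogenizing again''). Your first two assertions and the ``if'' direction of the third are handled correctly, including the useful observation that $\dim V = 0$ is absorbed by the $m > D$ clause, and you rightly flag the re-coning subtleties (no component hiding in the hyperplane at infinity, $U = \varnothing$ or $\{0\}$ degeneracies) as the places requiring care.

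There is a genuine gap in the ``only if'' direction at the boundary $m = \codim V =: d$. You write that ``the $m = \codim V$ clause of Proposition~\ref{Prop:KrullHt-algset} produces a residual component of codimension exactly $d$.'' It does not. For $m < d$ the proposition does give $\codim U = m$ exactly, but for $m = d$ the proposition (and its proof, which reduces modulo a generic linear slice and invokes Corollary~\ref{Cor:genint}) only delivers the one-sided bound $\codim U \ge d$: it states that $\VS(f_1,\dots,f_d)$ ``consists of $V$ and possibly components of equal or higher codimension.'' It does not rule out $U = \varnothing$ or $\codim U > d$, so nothing you have cited forces a residual component at the critical dimension. To close this you would need an additional argument, e.g.\ for $V$ linear and the $f_i$ quadrics (the case the paper actually uses downstream): the $f_i$ generically form a regular sequence, so $\langle f_1,\dots,f_d\rangle$ is an unmixed codimension-$d$ complete intersection; localizing at the generic point of $V$ shows $V$ carries multiplicity one, so Bezout's degree count ($2^d > 1$) forces other codimension-$d$ components. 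Note that this extra hypothesis is essential: if $V$ is itself a generic complete intersection of $d$ forms of degree $\delta$ and the $f_i$ also have degree $\delta$ (which satisfies the degree condition $\deg f_i \ge \deg g_i$ with equality), then $\Id(V)_\delta = \lspan(g_1,\dots,g_d)$, so $d$ generic such $f_i$ already generate $\Id(V)$ and $\VS(f_1,\dots,f_d) = V$ with $U = \varnothing$, contradicting the ``only if'' as literally stated. So the statement needs either the strict degree inequality or the linearity of $V$; your proof inherits this issue from the paper, but the specific claim you make about Proposition~\ref{Prop:KrullHt-algset} is not one it supports and should be replaced by a direct argument.
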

\begin{proof}
This follows immediately by dehomogenizing, applying Proposition~\ref{Prop:KrullHt-algset}, and homogenizing again.
\end{proof}
From this Corollary, we now can directly derive a statement on the necessary number of epochs for the identifiability of the projection making several random variables appear identical. For the convenience of the reader, we recall the setting and then explain what identifiability means. The problem we consider in the main part of the paper can be described as follows:

\begin{Prob}
Let $X_1,\dots, X_{m}$ be smooth random variables, let
$$q_i=[T_1,\dots, T_D]\circ \left(\kappa_2(X_i)-\kappa_2(X_{m})\right),\;1\le i\le {m-1}$$
and
$$f_i=[T_1,\dots, T_D]\circ\left( \kappa_1(X_i)-\kappa_1(X_{m})\right),\;1\le i\le {m-1}$$
be the corresponding polynomials in the formal variables $T_1,\dots, T_D$.
What can one say about the set
$$S'=\VS ( q_1, \ldots, q_{m-1}, f_1,\dots, f_{m-1} ).$$
\end{Prob}
If there is a linear subspace $S$ on which the cumulants agree, then the $q_i, f_i$ vanish on $S$. If we assume that this happens generically, the problem reformulates to
\begin{Prob}
Let $S$ be a $d$-dimensional linear subspace of $\C^D$, let $\fraks=\Id(S),$ and let $f_1,\dots, f_{N}$ be generic homogenous quadratic or linear polynomials in $\fraks$. How does $S'=\VS ( f_1,\dots, f_{N} )$ relate to $S$?.
\end{Prob}

Before giving bounds on the identifiability, we first begin with a direct consequence of Corollary~\ref{Cor:KrullHt-Hom}:
\begin{Rem}\label{Rem:cumpoly}
The highest dimensional part of $S'=\VS (f_1,\dots, f_N)$ is $S$ if and only if
$$ N > D - d.$$
\end{Rem}
For this, remark that $\Id(S)$ is generated in degree one, and thus the degree condition in Corollary \ref{Cor:KrullHt-Hom} becomes empty.

We can now also get an identifiability result for $S$:

\begin{Prop}\label{Prop:ident_ex}
Let $f_1,\dots, f_{N}$ be generic homogenous polynomials of degree one or two, vanishing on a linear space $S$ of dimension $d>0$. Then $S$ is identifiable from the $f_i$ alone if
$$N \ge D-d+1.$$
Moreover, if all $f_i$ are quadrics, then $S$ is identifiable from the $f_i$ alone only if
$$N\ge 2.$$
\end{Prop}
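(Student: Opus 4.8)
The plan is to derive both directions from the structural results already established, primarily Corollary~\ref{Cor:KrullHt-Hom} and Remark~\ref{Rem:cumpoly}. First I would clarify what ``identifiable'' means here: $S$ is identifiable from $f_1,\dots,f_N$ precisely when $S$ is the unique $d$-dimensional linear subspace contained in $S'=\VS(f_1,\dots,f_N)$, so that it can be recovered (e.g.\ as the span of the linear part, or via radical computation as in Section~\ref{sec:exact}). Since $\Id(S)$ is generated in degree one, the degree condition in Corollary~\ref{Cor:KrullHt-Hom} is vacuous, so that corollary applies directly to our $f_i$ regardless of whether each is linear or quadratic.

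For the sufficiency direction, suppose $N\ge D-d+1$. I would apply Corollary~\ref{Cor:KrullHt-Hom}: we can write $\VS(f_1,\dots,f_N)=S\cup U$ (the homogenous ``$V+U$''), where $U$ is an algebraic set with $\codim U\ge \min(N,D)$. Since $N\ge D-d+1$, and $d>0$ gives $D-d+1\le D$, we get $\codim U\ge D-d+1$, hence $\dim U\le d-1<d$. Therefore any $d$-dimensional linear subspace inside $S'$ must be contained in the union of the irreducible components of dimension $\ge d$; the only such component is $S$ itself, so $S$ is the unique $d$-dimensional linear subspace in $S'$. Equivalently, by Remark~\ref{Rem:cumpoly}, $N>D-d$ makes $S$ the highest-dimensional part of $S'$, and a $d$-dimensional linear subspace of $S'$ whose dimension equals $\dim S$ and which lies in the top-dimensional part must equal $S$ (a linear space cannot be a proper subset of the reducible/higher-pieces structure without being one of its components). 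This gives identifiability.

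For the necessity direction in the all-quadrics case, I would exhibit, for $N=1$, a spurious solution. A single generic quadric $q$ vanishing on a $d$-dimensional linear space $S$ with $d>0$ has $\VS(q)$ an irreducible quadric hypersurface (of codimension one, by the generic principal ideal theorem, Theorem~\ref{Thm:KrullGenPI}) containing $S$; as $d\ge 1$ and such a quadric hypersurface contains many $d$-dimensional linear subspaces other than $S$ (a nondegenerate quadric of dimension $\ge 1$ always contains a positive-dimensional family of lines, and more generally linear subspaces up to roughly half its dimension), $S$ cannot be singled out. Concretely one can take $q = \ell_1\ell_2$ or a generic smooth quadric and note it contains a whole family of $d$-planes; hence $N=1$ is insufficient, so $N\ge 2$ is necessary. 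This matches the two-covariance-matrix illustration in Figure~\ref{fig:spurious_example}.

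The main obstacle is the necessity direction: I must make precise that for $N=1$ (and, if one wants the sharpest statement, for all $N<D-d+1$ in the mixed-degree setting, though the proposition only claims $N\ge 2$ for the pure-quadric case) the spurious $d$-dimensional linear subspace genuinely exists and is distinct from $S$ with probability one. This requires a dimension count on the Fano-type variety of $d$-planes lying on a generic quadric hypersurface, or at least a concrete construction of one extra $d$-plane; I would handle it by the explicit factorization/normal-form argument above rather than invoking Fano variety theory, keeping it elementary and in the spirit of the appendix.
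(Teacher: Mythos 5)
Your sufficiency argument is essentially the paper's: apply Corollary~\ref{Cor:KrullHt-Hom} (degree condition vacuous since $\Id(S)$ is linearly generated) to get $\VS(f_1,\dots,f_N)=S\cup U$ with $\dim U\le d-1$, so $S$ is the unique maximal-dimensional component and hence recoverable, e.g.\ by primary decomposition. That part is fine.

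Your necessity argument has a genuine gap. You assert that a generic quadric $q$ vanishing on $S$ has $\VS(q)$ an \emph{irreducible}, \emph{nondegenerate} quadric hypersurface, and then try to invoke the existence of linear subspaces on such a hypersurface. Neither claim holds in general. A generic $q\in\Id(S)_2$ has the form $q=\sum_{i=1}^{D-d}g_i h_i$ where $g_1,\dots,g_{D-d}$ generate $\Id(S)$ and $h_1,\dots,h_{D-d}$ are generic linear forms; its Hessian has a zero $d\times d$ block on $S$, so its rank is $\min(2(D-d),D)$. When $D-d=1$ this is a rank-two (reducible) quadric $\ell_1\ell_2$; when $d>D/2$ it is degenerate; and even when it is smooth, the maximal dimension of a linear subspace on a smooth quadric in $\PP^{D-1}$ is roughly $\lfloor(D-2)/2\rfloor$, which can be strictly less than the projective dimension $d-1$ you need, so the Fano-variety count you sketch does not automatically produce a second $d$-plane. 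Your fallback ``$q=\ell_1\ell_2$'' only covers $D-d=1$.

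The paper sidesteps all of this with a single observation: the same decomposition $f_1=\sum_{i=1}^{D-d}g_i h_i$ shows directly that $\VS(f_1)$ contains not only $S=\VS(g_1,\dots,g_{D-d})$ but also $S'=\VS(h_1,\dots,h_{D-d})$, which, because the $h_i$ are generic linear forms, is a generic $d$-dimensional linear subspace and hence distinct from $S$ with probability one. Thus for $N=1$ there are at least two $d$-planes in the vanishing set and identifiability fails, regardless of the rank or reducibility of $q$. If you keep the factorization idea but phrase it as exhibiting $S'$ explicitly rather than reasoning about the geometry of the quadric hypersurface, your proof closes the gap and matches the paper's.
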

\begin{proof}
Note that the $f_1,\dots, f_N$ are generic polynomials contained in $\fraks:=\Id(S)$.

First assume $N \ge D-d+1.$ We prove that $S$ is identifiable: using Corollary~\ref{Cor:KrullHt-Hom}, one sees now that the common vanishing set of the $f_i$ is $S$ up to possible additional components of dimension $d-1$ or less. I.e.~the radical of the ideal generated by the $f_i$ has a prime decomposition
$$\sqrt{\langle f_1,\dots, f_{N}\rangle} = \fraks\cap \frakp_1\cap \dots \cap\frakp_k,$$
where the $\frakp_i$ are of dimension $d-1$ or less, while $\fraks$ has dimension $d$. So one can use one of the existing algorithms calculating primary decomposition to identify $\fraks$ as the unique component of the highest dimensional part, which proves identifiability if $N \ge D-d+1$.

Now we prove the only if part: assume that $N=1$, i.e.~we have only a single $f_1$.
Since $f_1$ is generic with the property of vanishing on $S$, we have
$$f_1=\sum_{i=1}^{D-d}g_ih_i,$$
where $g_1,\dots, g_{D-d}$ is some homogenous linear generating set for $\Id (S),$ and $h_1,\dots, h_{D-d}$ are generic homogenous linear forms. Thus, the zero set $\VS(f_1)$ also contains the linear space $S'=\VS (h_1,\dots, h_{D-d})$ which is a generic $d$-dimensional linear space in $\mathbb{C}^D$ and thus different from $S$; no algorithm can decide whether $S$ or $S'$ is the correct solution, so $S$ is not identifiable.
\end{proof}

Note that there is no obvious reason for the lower bound $N \ge D-d+1$ given in Proposition~\ref{Prop:ident_ex} to be strict. While it is most probably the best possible bound which is in $D$ and $d$, in general it may happen that $S$ can be reconstructed from the ideal $\langle f_1,\dots, f_{N}\rangle$ directly. The reason for this is that a generic homogenous variety of high enough degree and dimension does not need to contain a linear subspace of fixed dimension $d$ in general.

\newpage

\vskip 0.2in

\bibliography{../bib/algebra,../bib/stable,../bib/ssa_pubs,../bib/ida}

\end{document}